%% file: neurips_2026.tex
\documentclass{article}

\PassOptionsToPackage{numbers, compress}{natbib}

\usepackage[preprint]{neurips_2026}

\usepackage[utf8]{inputenc} 
\usepackage[T1]{fontenc}    
\usepackage{hyperref}       
\usepackage{url}            
\usepackage{booktabs}       
\usepackage{amsfonts}       
\usepackage{amsmath}        
\usepackage{amssymb}        
\usepackage{amsthm}         
\usepackage{nicefrac}       
\usepackage{microtype}      
\usepackage{xcolor}         
\usepackage{algorithm}      
\usepackage{algpseudocode}  
\usepackage{graphicx}       

\newtheorem{theorem}{Theorem}
\newtheorem{corollary}{Corollary}
\newtheorem{lemma}{Lemma}

\theoremstyle{remark}

\title{DiPhon: Diffusion on Graphons for Scalable Graph Generation}

\author{%
  Sergio~Rozada \\
  King Juan Carlos University \\
  \texttt{sergio.rozada@urjc.es}\\
  \And
  Yiming~Qin \\
  EPFL, Lausanne, Switzerland \\
  \texttt{yiming.qin@epfl.ch}\\
  \And
  Manuel~Madeira \\
  EPFL, Lausanne, Switzerland \\
  \texttt{manuel.madeira@epfl.ch}\\
  \And
  Pascal~Frossard \\
  EPFL, Lausanne, Switzerland \\
  \texttt{pascal.frossard@epfl.ch}\\
  \And
  Alejandro~Ribeiro \\
  University of Pennsylvania \\
  \texttt{aribeiro@seas.upenn.edu}\\
}

\begin{document}

\maketitle

\begin{abstract}
Diffusion models represent a leading paradigm for graph generation, with notable impact in domains such as molecular design.
Yet, scaling these models to large graphs remains an open problem.
We approach this question in the dense-graph setting through the lens of graphons, the size-agnostic limit objects of dense graph sequences, to study how structural graph statistics behave across node-size scales.
This perspective leads to DiPhon, a diffusion framework for size-scalable graph generation.
Specifically, we formulate a continuous diffusion process on the graphon space via a Jacobi stochastic differential equation (SDE), and propose DiPhon, a discretized graph-level process that mimics these dynamics on finite graphs.
We further derive the corresponding reverse-time process, which requires access to the marginal score.
For the Jacobi process, this score interestingly admits a tractable form, which we estimate from data via graph denoising and plug into the reverse process to generate graph samples.
We prove that DiPhon matches exactly the first moment of the marginal distributions induced by the continuous graphon process, and approximates the second moment up to a closed-form discrepancy.
Thus, DiPhon inherits key size-agnostic statistical properties of the graphon dynamics, providing a principled route toward scalable graph generation.
Empirically, we demonstrate this scalability by training on small graphs and generating progressively larger graphs at inference time, without retraining, while preserving their core topological properties.
\end{abstract}

\input{sections/introduction.tex}

\input{sections/related_work.tex}

\input{sections/section3.tex}

\input{sections/section4.tex}

\input{sections/section5.tex}

\input{sections/section6.tex}

\input{sections/section7.tex}



\newpage

\small
\bibliographystyle{unsrtnat}
\bibliography{references}
\normalsize

\newpage


\appendix

\input{sections/appendix_related_work.tex}

\input{sections/appendix_jacobi_process.tex}

\input{sections/appendix_proof_moments_matching.tex}

\input{sections/appendix_proof_density_homomorphism.tex}

\input{sections/appendix_algorithms.tex}

\input{sections/appendix_parameterization.tex}

\input{sections/appendix_experiments.tex}

\input{sections/appendix_discussion.tex}



\end{document}

%% file: sections/introduction.tex
\section{Introduction}
\label{sec::intro}

Graph generative models based on diffusion have become a leading paradigm for sampling unknown graph distributions given access to graph samples \cite{jo2022gdss, vignac2023digress}. This has proven relevant in applications such as molecular design ~\cite{vignac2023digress,vignac2023midi} and neural architecture search~\cite{li2023graphpnas}. 
Despite this progress, scalability remains a fundamental open challenge in two complementary senses. 
First, diffusion models are costly to train and deploy in large-graph regimes. 
Second, even when training is feasible on small graphs, it remains unclear whether the learned generative mechanism can be transferred to larger graphs without retraining.

In this work, we focus on the latter challenge in the dense-graph setting leveraging the graphon framework
\cite{lovasz2006limits,lovasz2012large}. 
Graphons are the canonical limit objects of sequences of dense graphs of growing size. 
They offer a size-agnostic representation of an underlying random-graph law, and therefore a principled bridge between training and deployment regimes that differ only in the number of nodes. 
Graphons have been successfully used to establish size transferability in graph neural networks \cite{ruiz2021graphonsp}, showing that models trained on smaller graphs can generalize to larger ones when both are generated from a common graphon \cite{ruiz2020graphon}.
Our goal is to bring this graphon-based transfer perspective to the graph generative setting, through a graph diffusion-based formulation.

In this direction, the main limitation is that existing graph diffusion methods do not admit a natural formulation in the graphon setting. 
In particular, discrete graph diffusion models operate directly on categorical edge variables and therefore do not extend naturally to a continuous spatial domain on which graphons are defined \cite{vignac2023digress,qin2025defog}.
On the other hand, Gaussian stochastic differential equation (SDE) formulations inject noise pointwise, but their trajectories are unbounded and thus cannot be directly interpreted as edge probabilities \cite{jo2022gdss,jo2024grum}.
As a result, their connection to graphons remains at best implicit. 
To address this, we adapt a Jacobi-type SDE \cite{avdeyev2023dirichlet} to graph generation, since it can be applied pointwise to a graphon while remaining bounded in $[0,1]$ by construction.
We focus on binary graph generation and leave node features and categorical edges outside the scope of this work, providing a controlled setting in which to study scalability across graph sizes.
More specifically,
\begin{itemize}
    \item[\textbf{C1}] We formulate a diffusion process for graphons based on a Jacobi SDE, from which we derive DiPhon, a graph generative model obtained by discretizing the underlying graphon diffusion.
    \item[\textbf{C2}] We characterize the relation between the graphon-level process and DiPhon, showing exact agreement at the level of the first moment and providing a closed-form expression for the second-moment discrepancy.
    \item[\textbf{C3}] We introduce a sampling procedure that generates graphs from the marginal distributions of the model, and show that, in the stationary regime, the resulting graph matches the homomorphism densities of the Erd\H{o}s--R\'enyi graphon, a simple stationary law for graphs.
    \item[\textbf{C4}] We show empirically, across graph families commonly used to benchmark graph generative models, that DiPhon can be trained on a regime of small graphs and then transferred effectively to out-of-distribution larger graph sizes across multiple scenarios, providing promising evidence for scalable graph generation.
\end{itemize}
The remainder of the paper is organized as follows.
Section~\ref{sec::graphon_to_graph} introduces the forward graphon process and its discretization.
Section~\ref{sec::reverse} derives the reverse process and the associated learning problem.
Section~\ref{sec::experiments} presents numerical results validating the scalability of DiPhon.
Finally, Section~\ref{sec::limitations} discusses the regime of validity and the scope of the analysis.

%% file: sections/related_work.tex
\section{Related work}
\label{sec::related-work}

\paragraph{Graphons as a transfer object.} Graphons are the canonical limit of dense graph sequences \cite{lovasz2006limits, borgs2008convergent, lovasz2012large}, and they have been used as a quantitative bridge between graph sizes mostly for the analysis of graph neural networks \cite{ruiz2020graphon, ruiz2021graphonsp, levie2021transferability, maskey2023transferability}. Some works use graphons in generative methods, mainly i) to estimate the graphon itself using autoencoders or implicit neural representations \cite{xu2021graphon, xia2023implicit, azizpour2024scalable}, and ii) using graphons with specific motifs as priors for flow matching over graphs \cite{wijesinghe2026flowette}. No prior work places the diffusion in graphon space or studies properties that hold across graph scales.

\paragraph{Diffusion processes for graph generation.} Existing graph diffusion models fall mainly into two families. Discrete approaches \cite{austin2021d3pm, vignac2023digress, qin2025defog} place the edges in a finite discrete probability distribution and run a Markov chain on it. Thus, they do not naturally extend to the continuous spatial domain on which graphons are defined, preventing a graphon-based analysis of scalability in dense-graph regimes. Gaussian-SDE approaches \cite{niu2020edpgnn, jo2022gdss, jo2024grum} act pointwise on the adjacency entries but do not preserve the bounded range that a graphon-valued field requires. Diffusion on the simplex \cite{avdeyev2023dirichlet, stark2024dirichlet, eijkelboom2024catflow, davis2024fisher} and on more general bounded or geometric domains \cite{fishman2023constrained, debortoli2022riemannian, yim2023se3} sits in the right family for our needs, but it has been barely used for graph generation, and none of these constructions has been connected to a graphon setup. We adopt the Jacobi diffusion introduced in \cite{avdeyev2023dirichlet}, which sits in this family, and use it as the pointwise law of a graphon-valued forward process from which the graph generative model is then derived. We refer the reader to Appendix~\ref{app::related-work} for a more detailed discussion.

%% file: sections/section3.tex
\section{From graphon diffusion to graph diffusion}
\label{sec::graphon_to_graph}

\subsection{Preliminaries}
\label{sec::preliminaries}

\textbf{Graphons.} A graphon is a symmetric, measurable function $W:[0,1]^2\to[0,1]$, the natural limit of sequences of dense graphs of growing size, where $W(x,y)$ encodes the probability of an edge between two nodes labeled by $x,y\in[0,1]$. Closeness between two graphons, or between a finite graph and a graphon, is captured by homomorphism densities. For a finite simple graph $H=(V(H),E(H))$ with vertex set $V(H)$ and edge set $E(H)$, it is described by
\begin{equation}
    \label{eq::homdensity}
    t(H,W)
    \;=\;
    \int_{[0,1]^{V(H)}}
    \prod_{\{u,v\}\in E(H)} W(x_u,x_v)
    \prod_{u\in V(H)} dx_u,
\end{equation}
and for a finite graph $G_N$ on $N$ nodes, $t(H,G_N)=\mathrm{hom}(H,G_N)/N^{|V(H)|}$, with $\mathrm{hom}(H,G_N)$ being the number of adjacency-preserving maps from $V(H)$ to the node set of $G_N$. A sequence of graphs converges to $W$ when $t(H,G_N)$ goes to $t(H,W)$ for every finite $H$, which is what characterizes graphons as limits of dense graph sequences of growing node sizes~\cite{ruiz2021graphonsp}.

\textbf{Generative diffusion models.} On the other hand, score-based diffusion noises the data until its distribution converges to a simple stationary distribution, then runs the process backward so that reference samples are mapped back to the data distribution~\cite{jo2022gdss}. Let $x_t\in\mathbb{R}^d$ denote the diffused state at time $t\in[0,T]$, with $x_0$ drawn from the data and marginal density $p_t$. The forward and backward dynamics~\cite{anderson1982reverse} read
\begin{align}
    dx_t &\;=\; f(x_t,t)\,dt + \sigma(t)\,dB_t, \label{eq::sde-forward-generic}\\
    dx_t &\;=\; \bigl[f(x_t,t) - \sigma(t)^2\,\nabla_x \log p_t(x_t)\bigr]\,dt + \sigma(t)\,d\bar B_t, \label{eq::sde-reverse-generic}
\end{align}
where $f$ is a drift coefficient, $\sigma(t)\ge 0$ a time-dependent noise amplitude, and $B_t$ ($\bar B_t$) a standard Brownian motion in forward (backward) time. Both processes share the same marginals $p_t$ and differ only in the score $\nabla_x\log p_t$, which is unknown in closed form and typically learned from data via score matching~\cite{jo2022gdss}.

\subsection{The graphon forward process}
\label{sec::graphon_forward}

We open our contribution by describing the diffusion process on the graphon space. We let $W_0$ denote the initial graphon and define the random field $W_t(x,y)$ on $[0,1]^2$ through the Jacobi stochastic dynamics \cite{avdeyev2023dirichlet}
\begin{equation}\label{eq::spde}
    dW_t(x,y)
    \;=\;
    \kappa\bigl(\mu-W_t(x,y)\bigr)\,dt
    \;+\;
    \sigma\,\sqrt{W_t(x,y)\bigl(1-W_t(x,y)\bigr)}\,dB_t(x,y),
\end{equation}
where $B_t(x,y)$ denotes a space--time Brownian motion. The parameter $\mu\in(0,1)$ is the mean of the stationary law, $\kappa>0$ is the rate at which the field is pulled toward $\mu$, and $\sigma>0$ is the strength of the noise. 
The diffusion term $\sqrt{W_t(x,y)\bigl(1-W_t(x,y)\bigr)}$ vanishes at the boundaries, which keeps the process in $[0,1]$.
Equation~\eqref{eq::spde} is not well posed pointwise, since the driving noise $B_t(x,y)$ is space--time white noise, so the solution $W_t$ is generally defined only in a distributional sense rather than through pointwise values.
If one were to fix a spatial coordinate, the resulting one-dimensional evolution would match a scalar Jacobi diffusion with parameters $(\kappa,\sigma,\mu)$, whose stationary distribution is $\mathrm{Beta}(\mu)$. 
For a greater discussion of the scalar Jacobi diffusion that are used throughout the paper we refer the reader to Appendix~\ref{app::jacobi}.

\subsection{Discretizing the graphon}
\label{sec::discretization}

Ideally, one would define the forward dynamics directly on the continuous graphon. 
Then, at any time, graphs of arbitrary node size could be obtained by sampling the diffused graphon, yielding a scale-free generative process. 
However, this ideal construction would require evaluating space-time white noise pointwise, which is not well defined.
We therefore work with a discretized version of the graphon, obtained by integrating over cells of positive measure. 
The resulting cell averages are well defined and become the finite-dimensional objects on which the diffusion acts. 
This is not a practical limitation: in applications, we never observe the continuous graphon itself, but only finite graphs sampled from it.

Accessing the graphon through discretization should follow a diffuse-then-discretize paradigm, whereby the diffusion is first run in the continuous domain and the resulting trajectory is only then discretized.
This entails partitioning the unit square into cells $I_i\times I_j$ of equal area $1/N^2$, and define the cell average of the formal field as
\begin{equation}\label{eq::cell-average}
    \bar W_t^{ij}
    \;:=\;
    N^2 \int_{I_i\times I_j} W_t(x,y)\,dx\,dy.
\end{equation}

In practice, since \eqref{eq::spde} is not well defined pointwise, we do not have access to a continuous diffused graphon that could then be discretized.
We can, however, reverse the order and first discretize the graphon, then run the diffusion on the resulting finite-dimensional object. 
This yields a discrete construction that mirrors the continuous dynamics. 
The complementary route we propose instead assigns an independent scalar diffusion to each cell
\begin{equation}\label{eq::cell-sde-renorm}
    d\tilde W_t^{ij}
    \;=\;
    \tilde\kappa\bigl(\tilde\mu_t-\tilde W_t^{ij}\bigr)\,dt
    \;+\;
    \tilde\sigma\,\sqrt{\tilde W_t^{ij}\bigl(1-\tilde W_t^{ij}\bigr)}\,dB_t^{ij},
    \qquad \tilde W_0^{ij} = \bar W_0^{ij},
\end{equation}
with $\{B_t^{ij}\}$ independent standard Brownian motions and parameters $(\tilde\kappa,\tilde\sigma,\tilde\mu_t)$. 
The initial condition $\tilde W_0^{ij} = \bar W_0^{ij}$ is the cell average of the underlying graphon over the cell $I_i\times I_j$. 

The choice $(\tilde\kappa,\tilde\sigma,\tilde\mu_t)=(\kappa,\sigma,\mu)$ is valid and recovers, at an algebraic level, the same scalar Jacobi diffusion as in the continuous setting. 
We aim to show, however, that we can choose these parameters so that the discretize-then-diffuse construction preserves key properties of the continuous diffusion. 
In particular, we seek to design the discrete process such that the probability distribution of $\tilde W_t^{ij}$ maatches, as closely as possible, the probability distribution of $\bar W_t^{ij}$ induced by the diffuse-then-discretize route in \eqref{eq::spde}. 
Since a direct characterization of the probability distributions is intractable, we instead match their first and second moments. To such end, we introduce the following renormalization of the parameters.
\begin{equation}\label{eq::renorm-params}
    \tilde\sigma^2 \;:=\; \sigma^2 N^2,
    \qquad
    \tilde\kappa \;:=\; \kappa - \frac{\sigma^2\,(N^2-1)}{2},
    \qquad
    \tilde\mu_t \;:=\; \mathbb{E}[\bar W_t^{ij}] + \frac{\kappa}{\tilde\kappa}\bigl(\mu - \mathbb{E}[\bar W_t^{ij}]\bigr),
\end{equation}
where $\tilde\mu_t$ is a time-varying target whose role is to preserve the mean trajectory of $\tilde W_t^{ij}$ exactly. The expectation $\mathbb{E}[\bar W_t^{ij}]$ admits the closed form $\mu+(\bar W_0^{ij}-\mu)e^{-\kappa t}$ (see Lemma~\ref{lem::moments-continuous} of Appendix~\ref{app::moments-matching}), so $\tilde\mu_t$ is deterministic and can be evaluated at every time $t$ from the cell average $\bar W_0^{ij}$ of the initial condition. With this renormalization at hand, we can establish the following result. The proof is deferred to Appendix~\ref{app::moments-matching}.

\begin{theorem}[Moment alignment]\label{thm::moment-matching}
    Let $\bar W_t^{ij}$ be defined by \eqref{eq::cell-average} and let $\tilde W_t^{ij}$ solve \eqref{eq::cell-sde-renorm} with parameters \eqref{eq::renorm-params} and shared initial condition $\tilde W_0^{ij}=\bar W_0^{ij}$. Then for every $t\ge 0$,
    \begin{align}
        \mathbb{E}\bigl[\tilde W_t^{ij}\bigr]
        &\;=\;
        \mathbb{E}\bigl[\bar W_t^{ij}\bigr], \\[2pt]
        \mathrm{Var}\bigl(\tilde W_t^{ij}\bigr) - \mathrm{Var}\bigl(\bar W_t^{ij}\bigr)
        &\;=\;
        \beta_0^{ij}\, N^2\, e^{-2\kappa t}\left(1-e^{-\sigma^2 t}\right),
    \end{align}
    where $\beta_0^{ij}\;:=\;N^2\!\int_{I_i\times I_j}\bigl(W_0(x,y)-\bar W_0^{ij}\bigr)^2\,dx\,dy$ denotes the in-cell variance of the initial graphon $W_0$ over $I_i\times I_j$.
\end{theorem}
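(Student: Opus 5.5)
The proof reduces both sides to linear moment ODEs, obtained via It\^o's formula, and compares them. The mean identity should follow directly from the design of $\tilde\mu_t$ in \eqref{eq::renorm-params}. The variance identity should reduce to a scalar linear ODE for the difference of variances, driven by the in-cell heterogeneity of the initial mean field.

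\textbf{Step 1: pointwise moments of the graphon process.} Fix $(x,y)$ and treat \eqref{eq::spde} formally as a scalar Jacobi diffusion, as in Appendix~\ref{app::jacobi}. Taking expectations gives $\tfrac{d}{dt}m_t(x,y)=\kappa(\mu-m_t)$, so $m_t=\mu+(W_0-\mu)e^{-\kappa t}$. It\^o's formula applied to $W_t^2$ gives, for the pointwise variance $v_t(x,y)$,
\begin{equation*}
\tfrac{d}{dt}v_t=-(2\kappa+\sigma^2)\,v_t+\sigma^2\, m_t(1-m_t),\qquad v_0=0 .
\end{equation*}
Averaging over the cell gives $\mathbb{E}[\bar W_t^{ij}]=\mu+(\bar W_0^{ij}-\mu)e^{-\kappa t}$, which is Lemma~\ref{lem::moments-continuous}.

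For the variance of the cell average I would invoke the white-noise covariance convention. Fluctuations at distinct points are driven by independent noise, but the noise carries a spatial delta. Averaging over a cell of area $N^{-2}$ therefore yields $\mathrm{Var}(\bar W_t^{ij})=N^2\!\int_{I_i\times I_j} v_t\,dx\,dy$; this is exactly the scaling that motivates $\tilde\sigma^2=\sigma^2N^2$. This gives
\begin{equation*}
\tfrac{d}{dt}\mathrm{Var}(\bar W_t^{ij})=-(2\kappa+\sigma^2)\mathrm{Var}(\bar W_t^{ij})+\sigma^2N^2\Bigl(\bar m_t(1-\bar m_t)-N^2\!\int_{I_i\times I_j}(m_t-\bar m_t)^2\Bigr),
\end{equation*}
where $\bar m_t$ is the cell mean. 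Since $m_t-\bar m_t=(W_0-\bar W_0^{ij})e^{-\kappa t}$, the last integral equals $\beta_0^{ij}e^{-2\kappa t}$.

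\textbf{Step 2: moments of the renormalized cell process.} From \eqref{eq::cell-sde-renorm}, the mean satisfies $\tfrac{d}{dt}\tilde m_t=\tilde\kappa(\tilde\mu_t-\tilde m_t)$. By construction, $\tilde\kappa(\tilde\mu_t-\bar m_t)=\kappa(\mu-\bar m_t)$, so $\bar m_t$ solves the same ODE with the same initial value. Uniqueness then gives $\tilde m_t=\bar m_t$, which is the first claim.

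For the variance, It\^o's formula gives
\begin{equation*}
\tfrac{d}{dt}\mathrm{Var}(\tilde W_t^{ij})=-(2\tilde\kappa+\tilde\sigma^2)\mathrm{Var}(\tilde W_t^{ij})+\tilde\sigma^2\,\tilde m_t(1-\tilde m_t).
\end{equation*}
The $\tilde\mu_t$ terms cancel because the mean has already been matched. Then $2\tilde\kappa+\tilde\sigma^2=2\kappa-\sigma^2(N^2-1)+\sigma^2N^2=2\kappa+\sigma^2$. So both variances obey the same homogeneous part and the same $\sigma^2N^2\bar m_t(1-\bar m_t)$ forcing.

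\textbf{Step 3: the discrepancy.} Set $D_t=\mathrm{Var}(\tilde W_t^{ij})-\mathrm{Var}(\bar W_t^{ij})$. It solves
\begin{equation*}
\tfrac{d}{dt}D_t=-(2\kappa+\sigma^2)D_t+\sigma^2N^2\beta_0^{ij}e^{-2\kappa t},\qquad D_0=0,
\end{equation*}
where $D_0=0$ because both initial conditions are deterministic. Variation of constants gives
\begin{equation*}
D_t=\sigma^2N^2\beta_0^{ij}e^{-(2\kappa+\sigma^2)t}\int_0^t e^{\sigma^2 s}\,ds=N^2\beta_0^{ij}e^{-2\kappa t}\bigl(1-e^{-\sigma^2 t}\bigr),
\end{equation*}
which is the second claim.

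The main obstacle is Step 1's passage from pointwise variances to $\mathrm{Var}(\bar W_t^{ij})$. Since \eqref{eq::spde} is only formal, I would state the white-noise convention $\mathbb{E}[dB_t(x,y)\,dB_t(x',y')]=\delta(x-x')\delta(y-y')\,dt$ explicitly. I would then justify the factor $N^2$ from the $L^2$ isometry of stochastic integrals against space-time white noise over the cell, which makes the cross terms between distinct points vanish. Everything else is linear ODE bookkeeping.
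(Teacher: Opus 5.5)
Your argument is correct in substance and reaches exactly the paper's closed variance ODE, $\frac{d}{dt}\bar v_t=-(2\kappa+\sigma^2)\bar v_t+\sigma^2N^2\bar m_t(1-\bar m_t)-\sigma^2N^2\beta_0^{ij}e^{-2\kappa t}$, but by a different route.

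The paper never relates $\mathrm{Var}(\bar W_t^{ij})$ to pointwise variances directly. It introduces the auxiliary cell-averaged second moment $r_t=\mathbb{E}\bigl[N^2\int_{I_i\times I_j}W_t^2\,dx\,dy\bigr]$ with $r_0=\bar m_0^2+\beta_0^{ij}$, and writes an ODE for $\bar v_t$ that still involves $r_t$. It then eliminates $r_t$ by showing that $\delta_t=r_t-\bar m_t^2-\bar v_t/N^2$ obeys $\dot\delta_t=-2\kappa\delta_t$, hence $\delta_t=\beta_0^{ij}e^{-2\kappa t}$. Since $N^2\int_{I_i\times I_j}(m_t-\bar m_t)^2\,dx\,dy=\beta_0^{ij}e^{-2\kappa t}$, that identity says precisely that the variance of the cell average is $N^2$ times the cell-averaged pointwise variance. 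So you posit via the white-noise isometry what the paper derives from its two quadratic-variation computations; both rest on the same formal hybrid of scalar-Jacobi moments pointwise and delta-correlated noise across points. Your version makes the origin of the $\beta_0^{ij}$ term more transparent: the in-cell spread of the pointwise mean $m_t$ lowers the cell average of $m_t(1-m_t)$ below $\bar m_t(1-\bar m_t)$. The paper then computes both variances in closed form (Lemmas~\ref{lem::moments-continuous} and~\ref{lem::moments-renormalized}) and subtracts, whereas you solve only the ODE for the difference $D_t$. Your ending is shorter; the paper's yields explicit formulas that are reused in Corollary~\ref{cor::forward-graph-ic} and in the moment experiments.

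One slip needs fixing. The cell has area $N^{-2}$, so your displayed identity $\mathrm{Var}(\bar W_t^{ij})=N^2\int_{I_i\times I_j}v_t\,dx\,dy$ is just the cell average of $v_t$. Differentiating it gives forcing $\sigma^2[\bar m_t(1-\bar m_t)-\beta_0^{ij}e^{-2\kappa t}]$ with no factor $N^2$, and then the $\bar m_t(1-\bar m_t)$ forcings of the two variance ODEs would no longer cancel. The noise part of $\bar W_t^{ij}$ is $N^2\int_{I_i\times I_j}\sigma\sqrt{W_t(1-W_t)}\,dB_t\,dx\,dy$, so the isometry you plan to invoke produces the prefactor $N^4$. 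The correct identity is therefore $\mathrm{Var}(\bar W_t^{ij})=N^4\int_{I_i\times I_j}v_t\,dx\,dy$, and your displayed ODE is the one consistent with this corrected version.

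Two smaller points. First, the $\tilde\mu_t$ terms in the renormalized variance ODE cancel algebraically for any deterministic target; mean matching is used only afterwards, to replace $\tilde m_t(1-\tilde m_t)$ by $\bar m_t(1-\bar m_t)$. Second, $\tilde\mu_t$ is defined only when $\tilde\kappa\neq 0$, which the paper handles by assuming $\tilde\kappa>0$ in Lemma~\ref{lem::moments-renormalized}.
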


The first moment is matched exactly, uniformly in time. The second moment is not matched exactly, but the discrepancy is explicit and controlled by three quantities: the in-cell variance $\beta_0^{ij}$, the discretization factor $N^2$, and the time factor $e^{-2\kappa t}(1-e^{-\sigma^2 t})$. 
For regular graphons, fine discretizations make $\beta_0^{ij}$ small, since the graphon is nearly constant within each cell. In particular, $\beta_0^{ij}\to 0$ as $N\to\infty$. Consequently, under mild regularity conditions on $W_0$, the product $N^2\beta_0^{ij}$ remains of constant order, so the variance mismatch remains controlled. In time, the discrepancy is zero at $t=0$, increases up to a finite maximum, and then decays exponentially at rate $2\kappa$.
The result states that the discrete equation \eqref{eq::cell-sde-renorm} can be adjusted so that the probability distributions of $\tilde W_t^{ij}$ and $\bar W_t^{ij}$ match in mean exactly, and in variance up to a closed-form discrepancy, which is expected to be small independently of $N$. The discrete model thereby inherits the statistical properties of running the diffusion in the continuous, scale-free graphon space.

\begin{figure}[t]
    \centering
    \includegraphics[width=0.9\linewidth]{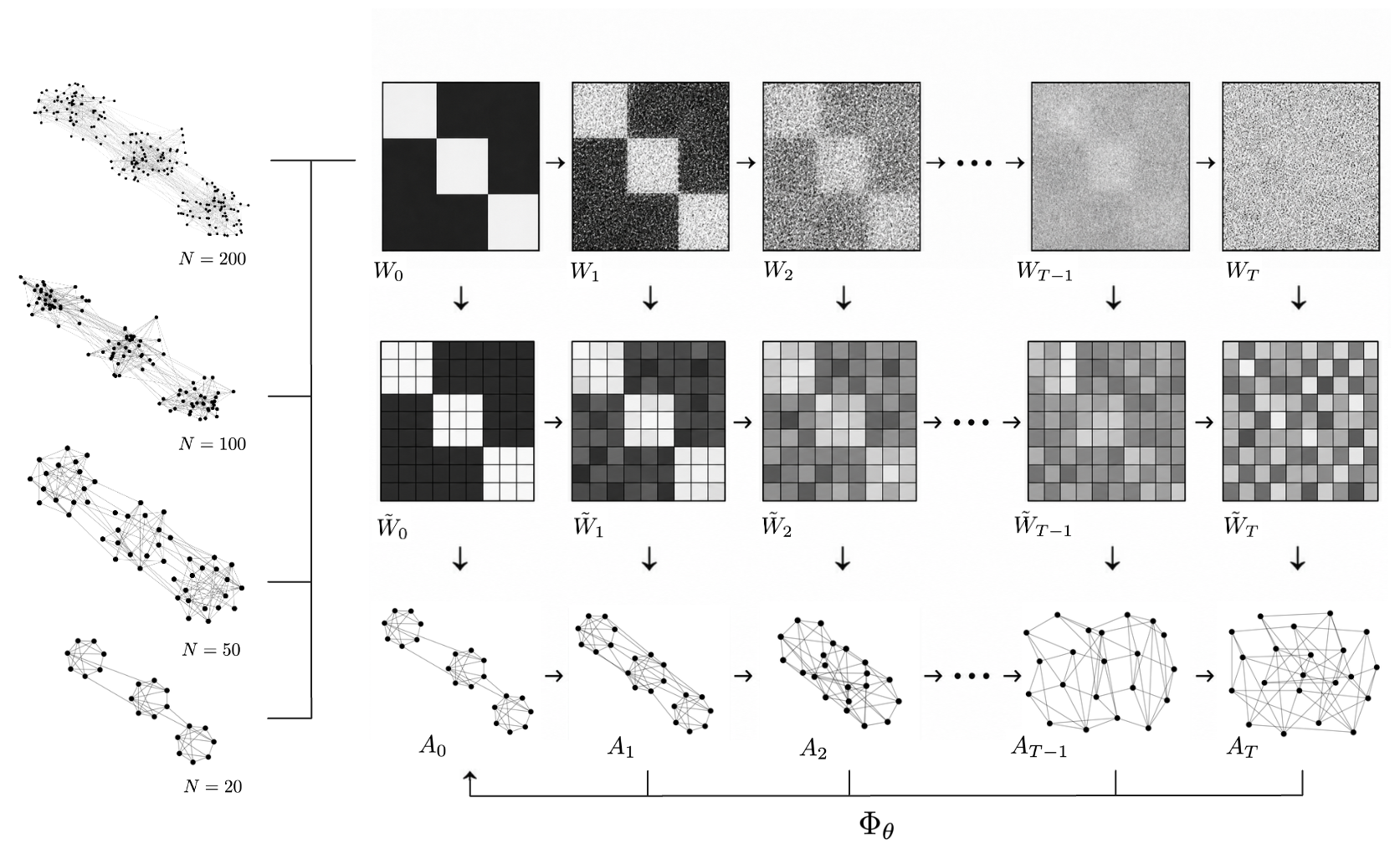}
    \caption{Schematic of DiPhon. The top row shows the original graphon diffusion process, the middle row shows the corresponding discretized graphon process, and the bottom row shows the sampled graph diffusion process on a finite graph.}
    \vspace{-4pt}
    \label{fig::diphon}
    \vspace{-4pt}
\end{figure}

\subsection{The graph as initial condition}
\label{sec::graph-as-ic}

Even though the discretized graphon is the right object to diffuse, in practice we do not observe it directly but only samples drawn from it. 
What we observe is a finite graph $A_0$, and we assume that $A_0$ is consistent with the model in the sense that each entry $A_0^{ij}\in\{0,1\}$ is a Bernoulli sample of the unobserved graphon over the cell $I_i\times I_j$, with marginal mean equal to the cell average $\bar W_0^{ij}$. The process is then run with the substitution
\begin{equation}\label{eq::ic-substitution}
    \tilde W_0^{ij}
    \;=\;
    A_0^{ij},
    \qquad
    A_0^{ij}
    \;\sim\;
    \mathrm{Ber}\bigl(\bar W_0^{ij}\bigr),
\end{equation}
and integrated forward exactly as in \eqref{eq::cell-sde-renorm}, the only change being the initial condition. 
Since the trajectories of the diffusion are themselves graphons, this substitution simply replaces a graphon initial condition by its Bernoulli sample. 
The choice mirrors standard practice in graph diffusion. Discrete diffusion~\cite{vignac2023digress}, for instance, treats the observed graph as a set of edge-probability distributions and lets the forward process turn the trajectory into a probability distribution on graphs.

The substitution also forces an adjustment of the time-varying target $\tilde\mu_t$, whose closed form $\mathbb{E}[\bar W_t^{ij}] = \mu+(\bar W_0^{ij}-\mu)e^{-\kappa t}$ depends on the unobserved cell average $\bar W_0^{ij}$. We likewise replace it by $A_0^{ij}$, so that $\tilde\mu_t$ becomes a closed-form function of $A_0^{ij}$ computable at every $t$. The two changes together, namely Bernoulli initial condition and Bernoulli-driven target, inject variability into the initial condition that propagates through the second moment, as formalized in Corollary~\ref{cor::forward-graph-ic}.

\begin{corollary}[Initial graph condition]\label{cor::forward-graph-ic}
    Under the substitution \eqref{eq::ic-substitution}, for every $t\ge 0$,
    \begin{align}
        \mathbb{E}\bigl[\tilde W_t^{ij}\bigr]
        &\;=\;
        \mathbb{E}\bigl[\bar W_t^{ij}\bigr],
        \\[4pt]
        \mathrm{Var}\bigl(\tilde W_t^{ij}\bigr) - \mathrm{Var}\bigl(\bar W_t^{ij}\bigr)
        &\;=\;
        \beta_0^{ij}\,N^2\,e^{-2\kappa t}\bigl(1-e^{-\sigma^2 t}\bigr)
        \;+\;
        \beta_1^{ij}\,e^{-2\kappa t}\,\bigl[\,1-N^2\bigl(1-e^{-\sigma^2 t}\bigr)\,\bigr],
    \end{align}
    where $\beta_1^{ij}:=\bar W_0^{ij}(1-\bar W_0^{ij})$ is the variance of the Bernoulli initial condition $A_0^{ij}$.
\end{corollary}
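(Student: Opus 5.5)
The plan is to condition on the Bernoulli initial graph $A_0^{ij}$, compute the conditional first and second moments of $\tilde W_t^{ij}$ from linear moment ODEs, and then remove the conditioning with the tower property and the law of total variance. Theorem~\ref{thm::moment-matching} will be used as a black box for the deterministic-initial-condition variance. The one thing to check is that, conditionally on $A_0^{ij}=a$, the process \eqref{eq::cell-sde-renorm} is exactly the process of Theorem~\ref{thm::moment-matching} with initial value $a$ in place of $\bar W_0^{ij}$. This holds because after the substitution both $\tilde W_0^{ij}=a$ and $\tilde\mu_t$ (with $\bar W_0^{ij}$ replaced by $a$) depend only on $a$, and the Brownian motions are independent of $A_0$.

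\textbf{First moment.} Let $m_t(a):=\mathbb{E}[\tilde W_t^{ij}\mid A_0^{ij}=a]$. Taking expectations in \eqref{eq::cell-sde-renorm}, the martingale term vanishes since the diffusion coefficient is bounded, which gives
\begin{equation*}
\dot m_t=\tilde\kappa(\tilde\mu_t-m_t).
\end{equation*}
With $\tilde\mu_t=g_t+\tfrac{\kappa}{\tilde\kappa}(\mu-g_t)$ and $g_t:=\mu+(a-\mu)e^{-\kappa t}$, the candidate $m_t=g_t$ satisfies
\begin{equation*}
\tilde\kappa(\tilde\mu_t-g_t)=\kappa(\mu-g_t)=\dot g_t ,
\end{equation*}
together with $g_0=a$. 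By uniqueness for this linear ODE, $m_t(a)=\mu+(a-\mu)e^{-\kappa t}$. Since $\mathbb{E}[A_0^{ij}]=\bar W_0^{ij}$ and $m_t$ is affine in $a$, the tower property yields $\mathbb{E}[\tilde W_t^{ij}]=\mu+(\bar W_0^{ij}-\mu)e^{-\kappa t}=\mathbb{E}[\bar W_t^{ij}]$ by Lemma~\ref{lem::moments-continuous}. The same affinity gives
\begin{equation*}
\mathrm{Var}\bigl(m_t(A_0^{ij})\bigr)=\beta_1^{ij}e^{-2\kappa t}.
\end{equation*}

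\textbf{Conditional variance.} Let $v_t(a)$ be the conditional variance. Applying It\^o's formula to $X^2$ and subtracting $\tfrac{d}{dt}m_t^2$, the target terms cancel and we get
\begin{equation*}
\dot v_t=-(2\tilde\kappa+\tilde\sigma^2)\,v_t+\tilde\sigma^2\, m_t(1-m_t),\qquad v_0=0.
\end{equation*}
Under \eqref{eq::renorm-params} the decay rate simplifies to $2\tilde\kappa+\tilde\sigma^2=2\kappa+\sigma^2$. Solving by variation of constants, $v_t(a)$ is a linear functional of $s\mapsto m_s(a)(1-m_s(a))$. Taking expectation over $A_0$ and writing $\bar m_s:=\mathbb{E}[m_s(A_0)]$, we have
\begin{equation*}
\mathbb{E}[m_s(1-m_s)]=\bar m_s(1-\bar m_s)-\beta_1^{ij}e^{-2\kappa s}.
\end{equation*}
The first term reproduces exactly $v_t(\bar W_0^{ij})$, the variance in the deterministic case of Theorem~\ref{thm::moment-matching}. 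The correction term contributes
\begin{equation*}
-\tilde\sigma^2\beta_1^{ij}\!\int_0^t e^{-(2\kappa+\sigma^2)(t-s)}e^{-2\kappa s}\,ds=-\beta_1^{ij}N^2e^{-2\kappa t}\bigl(1-e^{-\sigma^2 t}\bigr),
\end{equation*}
using $\tilde\sigma^2=\sigma^2N^2$.

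\textbf{Assembly.} By the law of total variance,
\begin{equation*}
\mathrm{Var}(\tilde W_t^{ij})=v_t(\bar W_0^{ij})-\beta_1^{ij}N^2e^{-2\kappa t}(1-e^{-\sigma^2t})+\beta_1^{ij}e^{-2\kappa t}.
\end{equation*}
Theorem~\ref{thm::moment-matching} gives $v_t(\bar W_0^{ij})-\mathrm{Var}(\bar W_t^{ij})=\beta_0^{ij}N^2e^{-2\kappa t}(1-e^{-\sigma^2t})$, and combining the two yields the stated discrepancy.

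The main obstacle is the conditional-variance step. One has to verify that the time-varying target $\tilde\mu_t$ does not leave a residual in the variance ODE; it cancels because $\tilde\mu_t$ is deterministic given $a$. One also has to justify identifying the first term with the variance from Theorem~\ref{thm::moment-matching}, which requires the solution map $a\mapsto v_t(a)$ to depend on $a$ only through $m_s(a)$, so that averaging commutes with it up to the quadratic correction computed above.
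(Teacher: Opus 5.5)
Your proposal is correct and follows essentially the same route as the paper. You condition on $A_0^{ij}=a$, reuse the renormalized-process moment computation of Lemma~\ref{lem::moments-renormalized} to get $m_t(a)$ and $v_t(a)$, and remove the conditioning with the tower property and the law of total variance; averaging the quadratic $-(a-\mu)^2$ dependence adds $-N^2\beta_1^{ij}e^{-2\kappa t}(1-e^{-\sigma^2 t})$ on top of $\mathrm{Var}(m_t(A_0^{ij}))=\beta_1^{ij}e^{-2\kappa t}$, exactly as in the paper. The only difference is cosmetic: you average the forcing $m_s(1-m_s)$ inside the Duhamel integral, whereas the paper averages the closed-form variance term by term, and the two are equivalent.
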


Again, the first-moment match is exact: the Bernoulli initial condition and the target $\tilde\mu_t$ are matched in mean, and averaging over $A_0^{ij}$ recovers the deterministic trajectory $\mathbb{E}[\bar W_t^{ij}]$. The variance gap decomposes additively into the in-cell graphon variance term of Theorem~\ref{thm::moment-matching} and an extra term proportional to the Bernoulli initial-condition variance $\beta_1^{ij}$. Both contributions share the $e^{-2\kappa t}$ term and vanish as $t\to\infty$. The proof is deferred to Appendix~\ref{app::corollary-proof}.
This result identifies the sampled graph process as a finite-resolution representative of the same graphon-level forward law.
The exact first-moment agreement defines a statistical invariant across graph sizes, while the remaining discrepancy is confined to explicit second-order terms that vanish along the forward dynamics. 
Hence, the process used on finite graphs retains the scale-agnostic structure of the graphon diffusion, providing the forward model on which the reverse construction of the next section is built.

A distinctive feature of the proposed method, in contrast to Gaussian-noise based diffusions, is that it admits sampling of graph trajectories. 
This is a direct consequence of the trajectories of our process being probability distributions over edges. Given $\tilde W_t^{ij}$ produced by \eqref{eq::cell-sde-renorm}, we generate edges via independent Bernoulli draws, one per entry,
\begin{equation}\label{eq::structured-sampling}
    A_t^{ij}
    \;\sim\;
    \mathrm{Ber}\bigl(\tilde W_t^{ij}\bigr),
    \qquad i,j\in\{1,\dots,N\}.
\end{equation}

The end-to-end forward pipeline starts from the observed graph $A_0$, uses it as the initial condition for the process via \eqref{eq::ic-substitution}, runs the SDE \eqref{eq::cell-sde-renorm} up to time $t$, and produces an edge set via \eqref{eq::structured-sampling}. The output at every time $t$ is itself a graph, so the pipeline gives access to graph-valued samples of the diffusion.

We have already shown that the probability distributions of the trajectory match those of the discretized graphon process exactly in the first moment and up to a closed-form gap in the second. Comparing the graphs sampled along the trajectory directly is more delicate. We can, however, guarantee well-behavior asymptotically. As $t\to\infty$, each $\tilde W_t^{ij}$ converges in distribution to $\mathrm{Beta}(\mu)$, and the graph $A_t$ produced by the pipeline converges in homomorphism density to the Erd\H{o}s--R\'enyi graphon at probability $\mu$ as the discretization size $N$ grows, as we prove next. The proof is given in Appendix~\ref{app::er-limit}.

\begin{theorem}[Erd\H{o}s--R\'enyi limit in homomorphism density]\label{thm::er-limit}
    Let $A_t$ denote a graph on $N$ vertices produced by the pipeline \eqref{eq::structured-sampling} on the process \eqref{eq::cell-sde-renorm}. Then for every finite simple graph $H$, the limit converges asymptotically in probability as
    \begin{equation}\label{eq::er-mean}
        \lim_{N\to\infty}\,\lim_{t \to \infty}\,t(H,A_t)
        \;=\;
        \mu^{|E(H)|},
    \end{equation}
\end{theorem}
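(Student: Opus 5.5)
We first take $t\to\infty$ at fixed $N$, then take $N\to\infty$ and use a concentration argument for homomorphism densities of graphs with independent entries.

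\emph{Step 1: $t\to\infty$ at fixed $N$.} As $t\to\infty$, $\mathbb{E}[\bar W_t^{ij}]=\mu+(\bar W_0^{ij}-\mu)e^{-\kappa t}\to\mu$. Hence $\tilde\mu_t\to\mu$ from \eqref{eq::renorm-params}, also after the Bernoulli substitution \eqref{eq::ic-substitution}. Each cell therefore evolves as a scalar Jacobi diffusion with asymptotically constant target $\mu$. Using the ergodicity of the scalar Jacobi process (Appendix~\ref{app::jacobi}), together with the fact that the time-dependence of $\tilde\mu_t$ decays exponentially, we get $\tilde W_t^{ij}\Rightarrow \mathrm{Beta}(\mu)$. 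This holds jointly over cells, and the limiting Beta variables are independent, because the driving Brownian motions $B_t^{ij}$ are independent. Conditionally on $\tilde W_t$, the entries $A_t^{ij}$ are independent $\mathrm{Ber}(\tilde W_t^{ij})$. Integrating out the Beta law, the stationary entries $A_\infty^{ij}$ are therefore i.i.d. $\mathrm{Ber}(p)$ with $p=\mathbb{E}[\mathrm{Beta}(\mu)]=\mu$, the mean of the stationary law. Here we use symmetric sampling: only $i<j$ is sampled independently, and $A^{ji}=A^{ij}$, $A^{ii}=0$. Since $t(H,A_t)$ is a bounded continuous function of the finitely many entries of $A_t$, the law of $t(H,A_t)$ converges to the law of $t(H,G(N,\mu))$, where $G(N,\mu)$ is an Erd\H{o}s--R\'enyi graph.

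\emph{Step 2: $N\to\infty$.} We must show $t(H,G(N,\mu))\to\mu^{|E(H)|}$ in probability. Let $k=|V(H)|$. Split $\hom(H,G)/N^k$ into two parts: injective maps, and non-injective maps. There are $O(N^{k-1})$ non-injective maps, so their contribution is $O(1/N)$. For an injective map, the images of the edges of $H$ are distinct edges of $G$, so the map's indicator has expectation exactly $\mu^{|E(H)|}$. Hence $\mathbb{E}\,t(H,G)\to\mu^{|E(H)|}$.

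For the variance, two injective maps $\phi,\psi$ give independent indicators unless they share an image edge. Such a shared edge requires the two images to share at least $2$ vertices, which leaves at most $N^{2k-2}$ pairs. The variance is therefore $O(N^{-2})$, and Chebyshev gives convergence in probability. Alternatively, the bounded-differences inequality applies, since changing one edge changes $t(H,\cdot)$ by $O(|E(H)|/N^2)$.

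\emph{Main obstacle.} The delicate point is Step 1: justifying joint convergence in distribution with a time-inhomogeneous target $\tilde\mu_t$. We also need the renormalized parameters to be admissible, i.e. $\tilde\kappa>0$, so that the Jacobi process is ergodic and its stationary law is $\mathrm{Beta}(\mu)$ as asserted earlier in the paper. We assume this, as in the statement preceding the theorem. We would handle the inhomogeneity by synchronous coupling with the homogeneous process. Since the drift $\tilde\kappa(\tilde\mu_t-w)$ is Lipschitz in $w$ and $\tilde\mu_t-\mu=O(e^{-\kappa t})$, the difference between the two processes vanishes in $L^1$ (Yamada--Watanabe type argument for the $\sqrt{w(1-w)}$ diffusion). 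The exact form of the Beta parameters is irrelevant; only the stationary mean $\mu$ and independence across cells enter.
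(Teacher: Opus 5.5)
Your proposal is correct and follows essentially the same route as the paper. You pass to $t\to\infty$ at fixed $N$ to obtain independent $\mathrm{Ber}(\mu)$ edges, discard the $O(1/N)$ non-injective maps, factor the expectation over injective maps to $\mu^{|E(H)|}$, and conclude by concentration; the paper uses McDiarmid with $c_{kl}=2|E(H)|/N^2$ where you lead with a Chebyshev variance bound, which is weaker but suffices. Your coupling argument for the time-varying target $\tilde\mu_t$ and your explicit $\tilde\kappa>0$ hypothesis make precise what the paper only calls classical; note, however, that under \eqref{eq::renorm-params} with $(\kappa,\sigma)$ fixed, $\tilde\kappa>0$ fails once $N^2\ge 1+2\kappa/\sigma^2$, so the outer limit $N\to\infty$ tacitly requires $\sigma$ to shrink with $N$ (or constant parameters), a condition the paper's proof also leaves unstated.
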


This result shows that, in the stationary regime, every homomorphism density of the sampled graph converges to its Erd\H{o}s--R\'enyi counterpart with parameter $\mu$ as the discretization becomes finer. Accordingly, the forward diffusion approaches a generic reference distribution, independent of the data and shared across all initial graphs.

%% file: sections/section4.tex
\section{Reverting the process}
\label{sec::reverse}

Section~\ref{sec::graphon_to_graph} introduced a graph forward process that matches the graphon diffusion at the level of moments. We now turn this construction into a generative model by studying its reverse-time dynamics, characterizing the score of \eqref{eq::cell-sde-renorm}, and casting training as a denoising problem.

\subsection{Reverse-time SDE}
\label{sec::reverse-sde}

Let $\tilde p_t$ denote the marginal density of $\tilde W_t^{ij}$ at time $t\geq 0$, and fix a terminal horizon $T>0$. The forward Jacobi SDE \eqref{eq::cell-sde-renorm} admits a reverse-time SDE with the same diffusion coefficient and a drift corrected by the marginal score \cite{anderson1982reverse}, defined as
\begin{equation}\label{eq::reverse-sde}
    d\tilde W_t^{ij}
    \;=\;
    \Bigl[\,\tilde\kappa\bigl(\tilde\mu_t-\tilde W_t^{ij}\bigr)
    \;-\;
    \tilde\sigma^2\,\tilde W_t^{ij}\bigl(1-\tilde W_t^{ij}\bigr)\,\nabla_W\log \tilde p_t\bigl(\tilde W_t^{ij}\bigr)\Bigr]\,dt
    \;+\;
    \tilde\sigma\,\sqrt{\tilde W_t^{ij}\bigl(1-\tilde W_t^{ij}\bigr)}\,d\bar B_t^{ij}.
\end{equation}
Two observations are in order. First, the reverse process has the same marginals as the forward process at every \(t\) \cite{anderson1982reverse}, so the moment-matching results of Section~\ref{sec::graphon_to_graph} carry over to the backward process. Second, all terms in \eqref{eq::reverse-sde} are known except for  \(\nabla_W \log \tilde p_t\), which must therefore be estimated.

The marginal density of $\tilde W_t^{ij}$ on each cell admits no closed-form expression in the usual sense, but it does admit a spectral eigendecomposition in terms of Jacobi polynomials (see Appendix~\ref{app::jacobi-marginal}). Conditioned on the unobserved graphon entry $W_0^{ij}$, the transition density takes the form
\begin{equation}\label{eq::transition-spectral}
    \tilde p_t\bigl(\tilde W_t^{ij}\,\big|\,W_0^{ij}\bigr)
    \;=\;
    \pi\bigl(\tilde W_t^{ij}\bigr)\,
    \sum_{n=0}^{\infty}
    e^{-\lambda_n t}\,\phi_n\bigl(W_0^{ij}\bigr)\,\phi_n\bigl(\tilde W_t^{ij}\bigr),
\end{equation}
where $\pi$ is the density function of the Beta distribution. The function $\phi_n$ is the $n$-th shifted Jacobi polynomial orthonormalized with respect to $\pi$, evaluated at the corresponding state. The eigenvalue $\lambda_n=n\bigl[ \tilde \kappa+(n-1)\tilde \sigma^2/2\bigr]$ defines a decay rate of mode $n$. We defer the reader to Appendix~\ref{app::jacobi-marginal}.

The conditional score admits a closed form as well. The Jacobi polynomials satisfy a standard derivative recurrence~\cite{szego1939orthogonal} from which $\phi_n'$ is obtained explicitly, so differentiating yields
\begin{equation}\label{eq::cond-score}
    \nabla_W\log \tilde p_t\bigl(\tilde W_t^{ij}\,\big|\,W_0^{ij}\bigr)
    \;=\;
    \nabla_W\log\pi\bigl(\tilde W_t^{ij}\bigr)
    \;+\;
    \frac{\sum_{n=0}^{\infty} e^{-\lambda_n t}\,\phi_n\bigl(W_0^{ij}\bigr)\,\phi_n'\bigl(\tilde W_t^{ij}\bigr)}
         {\sum_{n=0}^{\infty} e^{-\lambda_n t}\,\phi_n\bigl(W_0^{ij}\bigr)\,\phi_n\bigl(\tilde W_t^{ij}\bigr)}.
\end{equation}

Equation~\eqref{eq::cond-score} expresses the conditional score as a closed-form function of the noisy state $\tilde W_t^{ij}$ and of the graphon entry $W_0^{ij}$. The noisy state is available when integrating \eqref{eq::reverse-sde}, so the only missing ingredient is $W_0^{ij}$, which is precisely what the generative model is trying to recover. We therefore reduce the marginal-score problem to estimating $W_0^{ij}$ and plugging the estimate into \eqref{eq::cond-score}.

\subsection{Learning by denoising}
\label{sec::learning}

We aim to estimate $W_0^{ij}$ from the noisy state $\tilde W_t^{ij}$. A direct approach would require samples of the unobserved graphon entries, which we do not have. The observed quantity at training time is the graph $A_0$, with $A_0^{ij}\sim\mathrm{Ber}(W_0^{ij})$ by \eqref{eq::ic-substitution}. Since $\mathbb{E}[A_0^{ij}\mid W_0^{ij}]=W_0^{ij}$, the binary entry is an unbiased proxy for the graphon entry, and we use the optimal predictor of $A_0^{ij}$ from the noisy state as our estimator of $W_0^{ij}$ when computing the marginal score from \eqref{eq::cond-score}.

We parametrize a graph neural network $\Phi_\theta(\tilde W_t, t)$ that, given the full noisy graph $\tilde W_t=(\tilde W_t^{ij})_{i,j=1}^N$ and the diffusion time, returns a per-edge probability $\hat A_{t,\theta}^{ij}\in[0,1]$. We follow the denoiser parameterization of \cite{qin2025defog}, which builds on the graph transformer of \cite{vignac2023digress} augmented with relative random walk probability features~\cite{ma2023grit, siraudin2024cometh}. We describe the architecture in Appendix~\ref{app::parameterization}. 
Since the goal is to denoise $W_t$ and recover the initial adjacency matrix $A_0$, a simple choice is an edge-wise binary denoising loss
\begin{equation}\label{eq::dsm-loss}
    \mathcal{L}(\theta)
    \;=\;
    -\,\mathbb{E}_{t,A_0,\tilde W_t}\!
    \biggl[\,
        \sum_{i,j=1}^{N}
        \Bigl(
            A_0^{ij}\,\log \Phi_\theta(\tilde W_t,t)^{ij}
            \;+\;
            \bigl(1-A_0^{ij}\bigr)\,\log\bigl(1-\Phi_\theta(\tilde W_t,t)^{ij}\bigr)
        \Bigr)
    \biggr],
\end{equation}
where $t$ is sampled uniformly on $[0,T]$, $A_0$ from the data, and $\tilde W_t$ from the simulated forward.
Noisy samples $\tilde W_t$ are produced by numerical integration of \eqref{eq::cell-sde-renorm} starting from $A_0$.
The full training loop is summarized in Algorithm~\ref{alg::training} of Appendix~\ref{app::algorithms}.

Once $\Phi_\theta$ has been trained, sampling proceeds in three steps. We initialize $\tilde W_T^{ij}$ independently from $\mathrm{Beta}(\mu)$ at every cell. At every reverse step we query $\hat A_0 = \Phi_\theta(\tilde W_t, t)$, plug $\hat A_0^{ij}$ into the conditional-score formula \eqref{eq::cond-score} as a surrogate for $W_0^{ij}$, and take a discretized step on the reverse-time SDE \eqref{eq::reverse-sde}. After reaching $t=0$, we draw a binary graph from the final state via the Bernoulli step \eqref{eq::structured-sampling}. The full procedure is summarized in Algorithm~\ref{alg::sampling} of Appendix~\ref{app::algorithms}.

%% file: sections/section5.tex
\section{Experiments}
\label{sec::experiments}

In this section we empirically validate the moment guarantees of Section~\ref{sec::graphon_to_graph} on the renormalized process \eqref{eq::cell-sde-renorm}, and the scalability properties of the DiPhon method. Extended experiments can be found in Appendix~\ref{app::experiments}.

\subsection{Forward moment alignment}
\label{sec::exp::moments}

This section validates the forward moment guarantees behind the discretization. Theorem~\ref{thm::moment-matching} shows that the renormalized cellwise SDE \eqref{eq::cell-sde-renorm} matches the first moment of the continuous spatial average exactly, while its second moment differs only through the in-cell variance term $\beta_0^{ij}$. Corollary~\ref{cor::forward-graph-ic} extends the prediction to graph initialization, $A_0^{ij}\sim\mathrm{Ber}(\bar W_0^{ij})$, where an additional variance gap appears solely because the initial condition is now random.

\textbf{Setup.} 
We test this forward moment alignment on a two-community SBM with equal communities, intra-edge probability $p=0.7$, inter-edge probability $q=0.2$, and stationary mean $\mu=(p+q)/2=0.45$. Since $N$ is even, the grid aligns with the community boundary at $x=1/2$. Hence each cell is constant, $\beta_0^{ij}=0$, and the theoretical second moments of the continuous spatial average and the renormalized discrete process coincide. We set $\kappa=1$, $\sigma=0.005$, $T=4$, and simulate \eqref{eq::cell-sde-renorm} with Euler--Maruyama using $n_{\text{steps}}=400$ and $M=1000$ Monte Carlo trajectories for different sizes $N$.

\textbf{Results.} Figure~\ref{fig::moments-main} reports the intra-community cell, $\bar W_0^{ij}=0.7$, for $N=20$ and $N=100$. The empirical mean follows $\mu+(\bar W_0^{ij}-\mu)e^{-\kappa t}$ in both deterministic and Bernoulli initializations, confirming the exact first-moment match. With deterministic initialization, the empirical variance follows the common continuous/discrete closed-form curve, as expected from $\beta_0^{ij}=0$. With Bernoulli initialization, the second moment differs at early times because the graph draw adds initial variance $\bar W_0^{ij}(1-\bar W_0^{ij})$. The observed gap then evolves according to Corollary~\ref{cor::forward-graph-ic}. The same behavior holds across resolutions and for the inter-community cell, reported in Appendix~\ref{app::experiments}.

\begin{figure}[t]
\centering
\includegraphics[width=\linewidth]{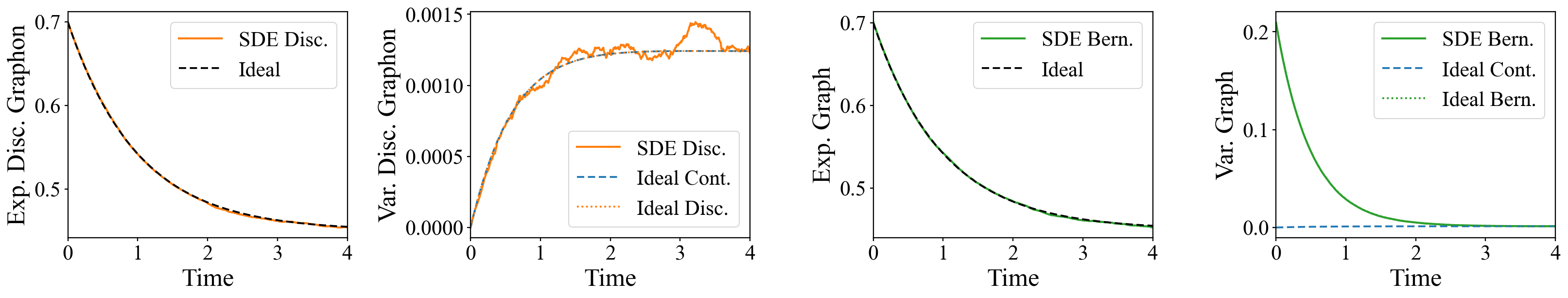}\\[2pt]
\includegraphics[width=\linewidth]{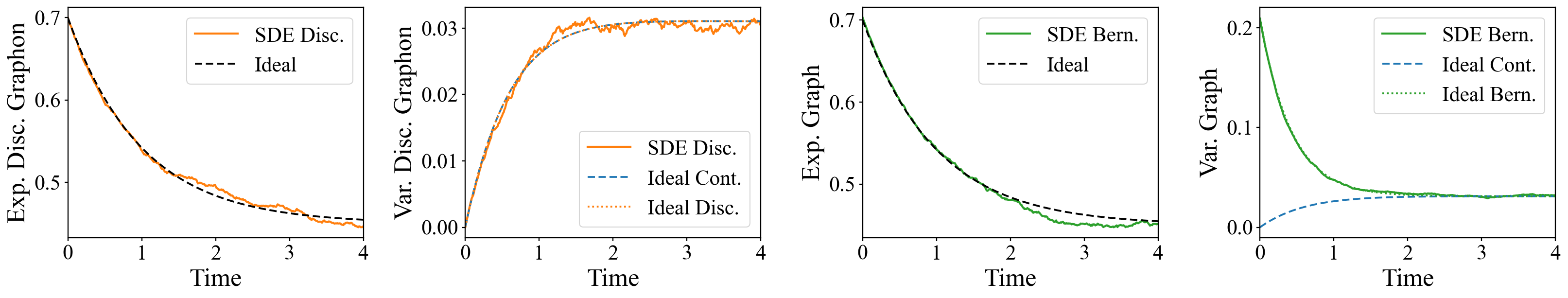}
\caption{SBM intra-community cell, $\sigma=0.005$, $T=4$, $M=10^{3}$. Top: $N=20$. Bottom: $N=100$. Empirical moments of \eqref{eq::cell-sde-renorm} (solid) match the closed-form references (dashed/dotted) under both deterministic and Bernoulli initial conditions.}
\label{fig::moments-main}
\end{figure}

\vspace{-8pt}
\subsection{Scalability of DiPhon}
\vspace{-4pt}

We evaluate whether DiPhon can generate graphs at sizes outside the training distribution. This is the key test for a graphon-inspired model: a model trained at one resolution should preserve structure when sampled at a larger one.

\vspace{-4pt}
\paragraph{Setup} We use three synthetic graph families: stochastic block models (SBMs), preferential attachment (PA) graphs, and trees, with SBMs and trees corresponding to graph types commonly used in graph generation benchmarks~\cite{qin2025defog}. SBMs are the most aligned with the graphon setting, since they admit a natural piecewise-constant graphon \cite{ruiz2020graphon}. PA graphs have not directly an associated graphon, but their line graph do \cite{kandanaarachchi2024graphons}. Trees are sparse and fall outside the classical graphon regime. They therefore test whether the proposed mechanism remains useful beyond the setting where the theory directly applies.
For DiPhon, we use a constant-parameter Jacobi diffusion. This is the simplest implementation of the proposed forward process: it acts pointwise on edge probabilities, stays bounded in $[0,1]$, and still preserves the first-moment matching property of the graphon-level dynamics. We use this parametrization to avoid adding extra modeling complexity.
All methods are trained on small graphs and tested on larger graphs. For instance, in the PA dataset we train on graphs with $40$ to $80$ nodes, while we evaluate generation up to $300$ nodes. We compare DiPhon with discrete diffusion methods, DiGress \cite{vignac2023digress} and DeFoG \cite{qin2025defog}, and Gaussian diffusion models, GDSS \cite{jo2022gdss} and GruM \cite{jo2024grum}. Although these methods are not typically used for out-of-scale generation, they can be applied to this setting directly, without technical modifications.
For each graph size, we report the fraction of generated graphs satisfying the defining property of the target family: we use forest accuracy for Tree graphs, which measures the fraction of generated graphs that are acyclic; for SBM graphs, we use a fitted-block-model statistical validity test~\cite{martinkus2022spectre}; for PA graphs, we use a validity check based on power-law-fit and structural criteria, including whether the fitted degree distribution is consistent with a preferential-attachment-like graph.

\vspace{-4pt}
\paragraph{Results} The results can be found in Figure~\ref{fig::ood_accuracy}. 
The results show that most methods extrapolate slightly beyond the training sizes, but only DiPhon remains stable across all datasets. 
In particular, DiPhon keeps high accuracy for SBMs, PA graphs, and trees, including sizes far beyond those seen during training. The tree results are particularly relevant, since they show that the method remains robust even when the graphon assumptions do not formally carry over.
Regarding the baselines, the discrete diffusion methods perform well near the training regime for all datasets, but their accuracy collapses at larger sizes. 
Gaussian diffusion models can be more stable at larger sizes than discrete diffusion methods, as illustrated by GruM on SBM graphs, but they remain less consistent than discrete diffusion models across graph families, with particularly poor performance on tree graphs.
Overall, DiPhon gives the strongest out-of-distribution size generalization among the tested methods. It performs well in the graphon-compatible SBM setting, remains accurate on PA graphs, and preserves structure on sparse trees. Additional results, setup explanations and visualizations of generated graphs are reported in Appendix~\ref{app::experiments}.

\begin{figure}[t]
    \centering
    \includegraphics[width=0.99\linewidth]{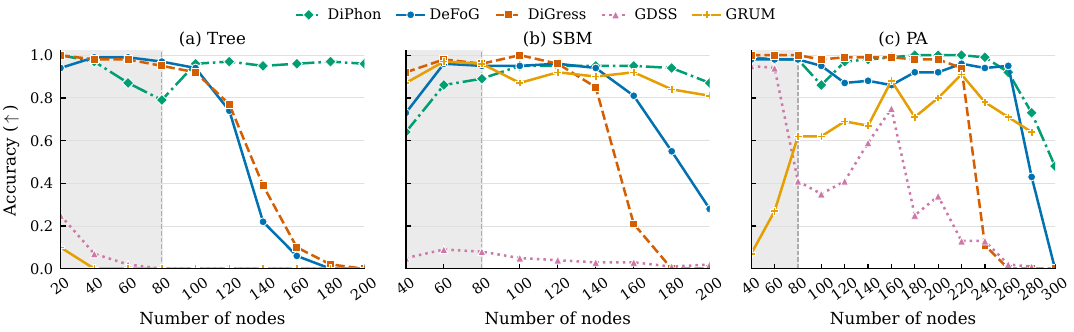}
    \caption{
    Out-of-distribution size generalization. DiPhon maintains high accuracy across number of nodes for all datasets, while discrete diffusion methods collapse at larger sizes and Gaussian baselines are less consistent.
    }
    \label{fig::ood_accuracy}
\end{figure}

%% file: sections/section6.tex
\section{Discussion and limitations}
\label{sec::limitations}

This work connects diffusion-based graph generative modeling with graphon theory by defining the forward dynamics directly on graphon space. This yields a scale-free diffusion process whose discretizations can be sampled at different graph sizes. DiPhon uses Jacobi dynamics to keep edge probabilities in the unit interval, approximately matches the first two moments of the continuous process at the discrete level, and recovers convergence in homomorphism density toward an Erd\H{o}s--Renyi reference. Empirically, this formulation enables models trained on small graphs to transfer their sampling distribution to larger graphs without retraining. The approach nevertheless has different limitations. The theoretical analysis is developed in the dense regime, where graphons are the canonical limit objects, so the formal guarantees do not fully explain the empirical behavior observed on sparse graph families. Moreover, the equivalence between the graphon process and its discrete counterpart is established only at the level of the first two moments. On the other hand, we do not bound the full distance between the marginal distributions, which may be more relevant for generative sampling. Finally, training requires simulating forward trajectories because the renormalized Jacobi transition does not admit a tractable closed-form density for direct sampling at arbitrary times, introducing additional overhead compared with Gaussian or fully discrete diffusions. Further modeling extensions, including node features, categorical edges, and weighted graphs, are natural within the framework but fall outside the binary-edge graphon guarantees developed here. A detailed discussion of these limitations is provided in \ref{app::lims}.

%% file: sections/section7.tex
\section{Conclusion}
\label{sec::conclusion}

We studied the scalability of graph generative modeling through the lens of graphons by defining the forward dynamics in graphon space and deriving a graph-level diffusion process through discretization. The proposed construction relies on a Jacobi SDE that is bounded by construction and admits a Beta stationary law. By taking the observed graph as the initial condition and sampling edges within each cell at every time step, we obtain DiPhon, a graph-valued diffusion process whose first moment exactly matches that of the discretized graphon process and whose second moment incurs a closed-form discrepancy that vanishes over time. DiPhon induces graph-valued trajectories from which samples can be generated, and, in the stationary regime, the sampled graphs converge in homomorphism density to the Erd\H{o}s--R\'enyi graphon for every motif. The reverse process is tractable and reduces training to a binary denoising objective on the observed graph.
Synthetic experiments confirm the predicted moment alignment across discretization sizes and show that DiPhon can be trained on small graphs while transferring its sampling distribution to substantially larger graphs without retraining. Together, the moment guarantees, the homomorphism-density limit, and the empirical size transfer support the use of graphons as the carrier of diffusion dynamics for size-agnostic graph generation. Natural directions for future work include training on large graph datasets with increasing sequences of graph sizes, extending the formal analysis beyond the dense regime, and exploring non-trivial graphon references that encode structural priors specific to a target domain.

%% file: sections/appendix_related_work.tex
\section{Extended related work}
\label{app::related-work}

This appendix expands the related-work discussion sketched in Section~\ref{sec::related-work}. We organize it along the two lines that frame our construction, graphons and diffusion processes, and we close with the classical references that underpin our use of the Jacobi diffusion and of the score-based generative paradigm.

\subsection{Graphons as size-agnostic descriptors of dense graphs}
\label{app::related-graphons}

Graphons arose as the limit object of sequences of dense graphs of growing size and are characterized by convergence in homomorphism density and in cut distance \cite{lovasz2006limits, borgs2008convergent, borgs2012convergent, lovasz2012large}. They generalize classical random graph models such as Erd\H{o}s--R\'enyi and the stochastic blockmodel \cite{holland1983sbm}, both of which arise as step graphons. Beyond pure combinatorics, graphons have served as a statistical estimation target for blockmodel and nonparametric graph models \cite{airoldi2013stochastic}, and they provide a representation of an underlying random-graph law that is invariant to the number of nodes. This invariance is what makes the graphon a natural object for reasoning about graph quantities across scales.

The most quantitative use of graphons as a transfer object so far has been the analysis of graph neural networks. \citet{ruiz2020graphon} introduced graphon neural networks as the limit of GNNs on graphs sampled from a common graphon and proved a transferability bound that decays with the number of nodes for bandlimited graph filters. The graphon-signal-processing formalism of \citet{ruiz2021graphonsp} underpins this analysis, and the spectral-filter transferability results of \citet{levie2021transferability} and \citet{maskey2023transferability} extend it to broader filter classes and to unbounded graphons. Together they establish the graphon as a quantitatively useful object for transferring architectures across graph sizes, but the theory addresses representation rather than generation.

Graphons have been comparatively less explored in the context of generative modeling. One line of work treats the graphon itself as the generative model, with estimation of the graphon as the central goal. \citet{xu2021graphon} learn a graphon-valued representation of a graph distribution and decode by sampling a graph from the resulting graphon, while \citet{xia2023implicit} parametrize a graphon as a continuous neural function so that graphs of arbitrary size can be obtained by re-sampling the input domain, an approach further refined by \citet{azizpour2024scalable}. Within diffusion models and related paradigms, the use of graphons is almost non-existent. The only exception is \citet{wijesinghe2026flowette}, who extend graphons into a probabilistic family called \emph{graphettes} where sparse motifs can be introduced, and then use specific graphettes as a prior sampling distribution that is transported to the data distribution via flow matching. Even there, the graphette acts as a structural prior on the source distribution rather than as the state space of the dynamics, so the generative process is run on graph-level representations and not on the graphon itself.

\subsection{Diffusion processes for graph generation}
\label{app::related-diffusion-graphs}

Graphs \cite{niu2020edpgnn, jo2022gdss, vignac2023digress}, together with information defined over them \cite{wen2023diffstg, uslu2026graph, rozada2026grph}, are objects of significant interest for generative modeling.
Diffusion-based graph generative models split along the type of state space they assume on the edges. 
Discrete approaches treat the edge variables as categorical and noise the graph through a continuous-time Markov chain on the discrete edge set, with denoising performed by a graph neural network conditioned on the current noisy categorical graph \cite{austin2021d3pm, haefeli2022discrete, vignac2023digress, qin2025defog, lou2024sedd}. 
The formulation aligns with the combinatorial nature of the data, but it does not extend to a continuous spatial domain in the sense required by a graphon. 
The edge state lives on a finite alphabet at each cell, with no notion of an edge probability evolving as a continuous random field, so the connection to the graphon limit is absent by construction.

The complementary line of work uses Gaussian SDEs on the adjacency entries, with score-based denoising in continuous space \cite{niu2020edpgnn, jo2022gdss, jo2024grum}. 
These models do operate pointwise on a continuous edge variable and are therefore geometrically compatible with the idea of a graphon-valued process, but their state is unbounded by construction. 
The Gaussian forward does not preserve the $[0,1]$ range that a graphon-valued field requires, and projection or thresholding at sampling time is needed to recover an edge probability. 
Their connection to a graphon limit is thus implicit at best, and a quantitative match of moments to a continuous-space graphon process is not available.

A natural way to combine both desiderata is to run the diffusion on a bounded domain that already matches the support of the data. 
On the probability simplex, \citet{avdeyev2023dirichlet} and \citet{stark2024dirichlet} construct continuous-time stochastic processes whose stationary law is Dirichlet, and \citet{eijkelboom2024catflow} adapt the same idea to flow models on simplex-valued categorical data with applications to graph generation. 
Diffusion on more general constrained or geometric domains has been studied via barrier-driven processes \cite{fishman2023constrained} and via score-based generative modeling on Riemannian manifolds \cite{debortoli2022riemannian, yim2023se3}. 
These constructions are all candidates for a graph generative process whose state respects the bounded range of edge probabilities, yet their use in graph generation has so far been limited and none of them is derived from a graphon-space process.

\subsection{Foundations of score-based diffusion and of the Jacobi process}
\label{app::related-foundations}

The forward and reverse SDEs that we invoke in Section~\ref{sec::reverse} rest on the time-reversal result of \citet{anderson1982reverse}, which characterizes the reverse-time dynamics of a diffusion in terms of the same diffusion coefficient and a drift corrected by the score of the marginal density. 
Its use as a generative principle was first proposed by \citet{sohldickstein2015deep}, and the modern score-based formulation was developed in \citet{song2019generative} and \citet{ho2020ddpm}, with the unified continuous-time framing in terms of forward and reverse SDEs given by \citet{song2021scoresde}. 
Score-based training relies on the score-matching estimator of \citet{hyvarinen2005score} and on the denoising-score-matching identity of \citet{vincent2011denoising}, which links the optimal denoiser of a noised sample to the score of the perturbed marginal.

The Jacobi diffusion that we adopt is a classical object. 
Its origin is the Wright--Fisher diffusion of population genetics \cite{wright1931evolution}, where it describes the evolution of allele frequencies on $[0,1]$, and it is treated as a textbook example of a one-dimensional diffusion on a bounded interval with Beta stationary law in \citet{karlin1981second} and \citet{etheridge2011mathematical}. 
The Pearson family of \citet{forman2008pearson} and the moment construction of \citet{bibby2005diffusion} place the Jacobi diffusion within a broader class of diffusions with prescribed stationary marginals and quadratic squared diffusion coefficient, while \citet{demni2009large} and \citet{gourieroux2006multivariate} extend the analysis to the asymptotic regime and to the multivariate Dirichlet-stationary case.

The spectral structure that we use to expose the closed form of the score in Section~\ref{sec::reverse} relies on the eigenfunction theory of the Jacobi diffusion. 
The analytic theory of the Jacobi polynomials themselves traces back to \citet{szego1939orthogonal}, and the modern semigroup treatment that frames the Jacobi diffusion as a canonical example of a Markov diffusion operator is given in \citet{bakry2014analysis}.

%% file: sections/appendix_jacobi_process.tex
\section{The one-dimensional Jacobi diffusion}
\label{app::jacobi}

This appendix collects the standard analytic properties of the scalar Jacobi diffusion on $[0,1]$ that are used throughout the paper: the forward SDE and the role of its parameters, the Beta stationary law, the closed-form marginal density obtained via eigenexpansion in shifted Jacobi polynomials, the closed-form expressions of the first two moments at every $t\ge 0$ and their asymptotic limits, and the closed-form expression of the score $\nabla_x\log p_t$. The renormalized process \eqref{eq::cell-sde-renorm} is a one-dimensional Jacobi diffusion at every cell, so the formulas of this appendix transfer directly to it after the parameter substitution \eqref{eq::renorm-params}.

\subsection{Forward process}
\label{app::jacobi-forward}

Fix three real parameters $\kappa>0$, $\sigma>0$, and $\mu\in(0,1)$. The scalar Jacobi diffusion on $[0,1]$ is the It\^o process
\begin{equation}\label{eq::app-jacobi-sde}
    dX_t
    \;=\;
    \kappa\bigl(\mu-X_t\bigr)\,dt
    \;+\;
    \sigma\,\sqrt{X_t\bigl(1-X_t\bigr)}\,dB_t,
    \qquad
    X_0\;=\;x_0\in[0,1],
\end{equation}
where $B_t$ is a standard Brownian motion. Each parameter has a distinct dynamical role. The drift $\kappa(\mu-X_t)$ is linear in the state, points toward $\mu$ from either side, and pulls $X_t$ toward $\mu$ at rate $\kappa$, so $\kappa$ controls how fast the mean trajectory equilibrates and $\mu$ sets the long-run target of the dynamics. The diffusion coefficient $\sigma\sqrt{X_t(1-X_t)}$ vanishes at the endpoints $\{0,1\}$ and is maximal at $X_t=\tfrac{1}{2}$, so the noise is state-dependent and turns off at the boundaries. This is what keeps $X_t$ confined to $[0,1]$ without reflection or projection. The amplitude $\sigma$ scales the noise globally and, together with $\kappa$, governs the dispersion of the stationary law: for fixed $\mu$, increasing $\sigma$ widens the dispersion around $\mu$, while increasing $\kappa$ concentrates it around $\mu$. Existence and uniqueness of strong solutions to \eqref{eq::app-jacobi-sde}, together with the absence of absorption at $\{0,1\}$ under these parameter ranges, are classical~\cite{karlin1981second, etheridge2011mathematical}.

\subsection{Stationary distribution}
\label{app::jacobi-stationary}

Setting
\begin{equation}\label{eq::app-beta-params}
    \alpha
    \;:=\;
    \frac{2\kappa\mu}{\sigma^2},
    \qquad
    \beta
    \;:=\;
    \frac{2\kappa(1-\mu)}{\sigma^2},
\end{equation}
the diffusion \eqref{eq::app-jacobi-sde} admits a unique stationary distribution on $[0,1]$, the Beta density
\begin{equation}\label{eq::app-beta-density}
    \pi(x)
    \;=\;
    \frac{x^{\alpha-1}(1-x)^{\beta-1}}{B(\alpha,\beta)},
    \qquad
    x\in(0,1),
\end{equation}
where $B(\alpha,\beta)$ is the Beta function~\cite{karlin1981second}. The associated law is denoted $\mathrm{Beta}(\mu)$ in the main text, parameterized by its mean $\mu$ rather than by $(\alpha,\beta)$. By construction $\alpha+\beta=2\kappa/\sigma^2$ and $\alpha/(\alpha+\beta)=\mu$, so the stationary mean is $\mu$ and the stationary variance is
\begin{equation}\label{eq::app-beta-stationary-var}
    \mathrm{Var}_\pi(X)
    \;=\;
    \frac{\alpha\beta}{(\alpha+\beta)^2(\alpha+\beta+1)}
    \;=\;
    \frac{\sigma^2\,\mu(1-\mu)}{2\kappa+\sigma^2}.
\end{equation}
The boundary behavior of $\pi$ depends on $\alpha$ and $\beta$: $\pi$ is integrable for all $\alpha,\beta>0$, bounded on $[0,1]$ when both exceed $1$, and divergent at the corresponding endpoint when either falls below $1$.

\subsection{Marginal density and eigenexpansion}
\label{app::jacobi-marginal}

For every $t>0$ the transition law of \eqref{eq::app-jacobi-sde} admits a density $p_t(\cdot\,|\,x_0)$ with respect to Lebesgue measure on $[0,1]$, the marginal $p_t$ of $X_t$ given $X_0=x_0$. The Jacobi diffusion is symmetric in $L^2([0,1],\pi)$ and its dynamics admit a discrete spectral expansion in the shifted Jacobi polynomials orthonormalized with respect to $\pi$, which we denote by $\phi_n$ for $n\ge 0$~\cite{karlin1981second, bakry2014analysis}. The polynomial $\phi_n$ has degree $n$, is computable in closed form from the standard Jacobi recurrence~\cite{szego1939orthogonal}, and satisfies the orthonormality relation
\begin{equation}\label{eq::app-jacobi-orthogonality}
    \int_0^1 \phi_n(x)\,\phi_m(x)\,\pi(x)\,dx
    \;=\;
    \delta_{n,m}.
\end{equation}
Mode $n$ decays at the rate
\begin{equation}\label{eq::app-jacobi-eigenvalue}
    \lambda_n
    \;:=\;
    n\!\left[\kappa+\tfrac{(n-1)\sigma^2}{2}\right],
    \qquad n\ge 0.
\end{equation}
The first three rates read $\lambda_0=0$, $\lambda_1=\kappa$, and $\lambda_2=2\kappa+\sigma^2$, which are exactly the rates governing the mean and variance trajectories below.

The transition density admits the spectral expansion
\begin{equation}\label{eq::app-transition-density}
    p_t(x\,|\,x_0)
    \;=\;
    \pi(x)\,\sum_{n=0}^{\infty}\,
    e^{-\lambda_n t}\,\phi_n(x_0)\,\phi_n(x),
    \qquad x\in(0,1),\;t>0,
\end{equation}
which converges in $L^2([0,1],\pi)$ uniformly on $t\ge t_0$ for every $t_0>0$~\cite{bakry2014analysis}. The expansion has a transparent interpretation. The constant mode $n=0$ contributes $\pi(x)$ and is the only mode that survives at infinity: for every $x_0$, $p_t(\,\cdot\,|\,x_0)\to\pi$ as $t\to\infty$ in $L^2([0,1],\pi)$, recovering the Beta stationary law. Each higher mode $n\ge 1$ encodes the contribution of degree-$n$ deviations from $\pi$ in the initial condition and decays exponentially at the polynomial-in-$n$ rate $\lambda_n$, so higher-frequency components of the initial mass are smoothed out faster than lower-frequency ones. Equation \eqref{eq::app-transition-density} is the closed-form description of $p_t$ that we refer to as the eigenexpansion.

\subsection{First moment}
\label{app::jacobi-mean}

Taking expectations in \eqref{eq::app-jacobi-sde}, the martingale term vanishes and $\mathbb{E}[X_t]$ solves the linear ordinary differential equation (ODE)
\begin{equation}\label{eq::app-mean-ode}
    \frac{d}{dt}\mathbb{E}[X_t]
    \;=\;
    \kappa\bigl(\mu-\mathbb{E}[X_t]\bigr),
    \qquad
    \mathbb{E}[X_0]=x_0.
\end{equation}
Its unique solution is
\begin{equation}\label{eq::app-mean-closed}
    \mathbb{E}[X_t]
    \;=\;
    \mu+\bigl(x_0-\mu\bigr)\,e^{-\kappa t},
    \qquad t\ge 0.
\end{equation}
The mean trajectory is deterministic, decays exponentially toward $\mu$ at rate $\kappa$, is monotone in $t$, and is independent of $\sigma$. The decay rate matches the rate $\lambda_1=\kappa$ in \eqref{eq::app-jacobi-eigenvalue}: the first moment is the projection of $X_t$ onto the degree-one mode of the eigenexpansion.

\subsection{Second moment and variance}
\label{app::jacobi-variance}

Applying It\^o's formula to $X_t^2$, taking expectations, and subtracting $\frac{d}{dt}\mathbb{E}[X_t]^2$ from the resulting equation, the variance $v_t:=\mathrm{Var}(X_t)$ solves the linear ODE
\begin{equation}\label{eq::app-var-ode}
    \frac{d}{dt}v_t
    \;=\;
    -(2\kappa+\sigma^2)\,v_t
    \;+\;
    \sigma^2\,\mathbb{E}[X_t]\bigl(1-\mathbb{E}[X_t]\bigr),
    \qquad
    v_0=0,
\end{equation}
with damping rate $2\kappa+\sigma^2=\lambda_2$ matching the second-mode rate in \eqref{eq::app-jacobi-eigenvalue}. The forcing $\sigma^2\,\mathbb{E}[X_t](1-\mathbb{E}[X_t])$ is the expected diffusion intensity at the deterministic mean trajectory and admits the expansion
\begin{equation}\label{eq::app-var-forcing-expansion}
    \mathbb{E}[X_t]\bigl(1-\mathbb{E}[X_t]\bigr)
    \;=\;
    \mu(1-\mu)
    \;+\;
    (x_0-\mu)(1-2\mu)\,e^{-\kappa t}
    \;-\;
    (x_0-\mu)^2\,e^{-2\kappa t},
\end{equation}
into three exponential families with rates $\{0,\kappa,2\kappa\}$. Solving \eqref{eq::app-var-ode} by Duhamel with the integrating factor $e^{(2\kappa+\sigma^2)t}$ and using the three standard integrals
\begin{align}
    \int_0^t e^{-(2\kappa+\sigma^2)(t-u)}\,du
    &\;=\;
    \frac{1-e^{-(2\kappa+\sigma^2)t}}{2\kappa+\sigma^2},
    \\[2pt]
    \int_0^t e^{-(2\kappa+\sigma^2)(t-u)}\,e^{-\kappa u}\,du
    &\;=\;
    \frac{e^{-\kappa t}-e^{-(2\kappa+\sigma^2)t}}{\kappa+\sigma^2},
    \\[2pt]
    \int_0^t e^{-(2\kappa+\sigma^2)(t-u)}\,e^{-2\kappa u}\,du
    &\;=\;
    \frac{e^{-2\kappa t}-e^{-(2\kappa+\sigma^2)t}}{\sigma^2},
\end{align}
yields the closed-form variance
\begin{align}
    v_t
    &\;=\;
    \frac{\sigma^2\,\mu(1-\mu)}{2\kappa+\sigma^2}\bigl(1-e^{-(2\kappa+\sigma^2)t}\bigr)
    \nonumber\\
    &\quad\;+\;
    \frac{\sigma^2\,(x_0-\mu)(1-2\mu)}{\kappa+\sigma^2}\bigl(e^{-\kappa t}-e^{-(2\kappa+\sigma^2)t}\bigr)
    \nonumber\\
    &\quad\;-\;
    (x_0-\mu)^2\bigl(e^{-2\kappa t}-e^{-(2\kappa+\sigma^2)t}\bigr).
    \label{eq::app-var-closed}
\end{align}
The expression is exact for every $t\ge 0$, vanishes at $t=0$, and is non-negative throughout because the underlying process is real-valued.

\subsection{Asymptotic moments}
\label{app::jacobi-asymptotic}

Letting $t\to\infty$ in \eqref{eq::app-mean-closed} and \eqref{eq::app-var-closed}, every transient term that carries an exponential factor vanishes and only the rate-zero contribution survives. The asymptotic mean and variance are
\begin{equation}\label{eq::app-asymptotic-moments}
    \lim_{t\to\infty}\mathbb{E}[X_t]
    \;=\;
    \mu,
    \qquad
    \lim_{t\to\infty}\mathrm{Var}(X_t)
    \;=\;
    \frac{\sigma^2\,\mu(1-\mu)}{2\kappa+\sigma^2},
\end{equation}
which agree with the mean and variance of $\mathrm{Beta}(\mu)$ recorded in \eqref{eq::app-beta-stationary-var}. The asymptotic mean is independent of the noise amplitude $\sigma$: only $\mu$ enters. The asymptotic variance scales as $\sigma^2/(2\kappa+\sigma^2)$ in the parameters and as $\mu(1-\mu)$ in the target, so it is maximized at $\mu=\tfrac{1}{2}$ for fixed $(\kappa,\sigma)$ and vanishes as $\mu\to 0$ or $\mu\to 1$.

\subsection{Score function}
\label{app::jacobi-score}

The score is the spatial gradient of the log marginal density. For every $t>0$ and $x\in(0,1)$,
\begin{equation}\label{eq::app-score-def}
    s_t(x\,|\,x_0)
    \;:=\;
    \nabla_x\log p_t(x\,|\,x_0).
\end{equation}
The eigenexpansion \eqref{eq::app-transition-density} factors $p_t(x\,|\,x_0)$ as the product of $\pi(x)$ and the conditional series $\sum_{n=0}^{\infty}e^{-\lambda_n t}\,\phi_n(x_0)\,\phi_n(x)$, which is strictly positive on $(0,1)$ for every $t>0$ and $x_0\in[0,1]$. Differentiating the logarithm of the product splits the score additively into a stationary contribution and a transient contribution,
\begin{equation}\label{eq::app-score-split}
    s_t(x\,|\,x_0)
    \;=\;
    \nabla_x\log\pi(x)
    \;+\;
    \frac{\nabla_x\!\sum_{n=0}^{\infty} e^{-\lambda_n t}\,\phi_n(x_0)\,\phi_n(x)}
         {\sum_{n=0}^{\infty} e^{-\lambda_n t}\,\phi_n(x_0)\,\phi_n(x)}.
\end{equation}
The first contribution is the score of the Beta stationary law and is computed from \eqref{eq::app-beta-density},
\begin{equation}\label{eq::app-score-beta}
    \nabla_x\log\pi(x)
    \;=\;
    \frac{\alpha-1}{x}\;-\;\frac{\beta-1}{1-x}
    \;=\;
    \frac{2\kappa\mu-\sigma^2}{\sigma^2\,x}
    \;-\;
    \frac{2\kappa(1-\mu)-\sigma^2}{\sigma^2\,(1-x)},
\end{equation}
where the second equality substitutes \eqref{eq::app-beta-params}. The second contribution is the logarithmic derivative of the conditional series. The orthonormalized polynomials $\phi_n$ inherit the standard Jacobi derivative recurrence~\cite{szego1939orthogonal}, so $\phi_n'$ is again a polynomial of degree $n-1$ that is computable in closed form to any order. Substituting term-by-term into the derivative of the conditional series in \eqref{eq::app-score-split} and combining with \eqref{eq::app-score-beta} gives the closed-form expression
\begin{equation}\label{eq::app-score-closed}
    s_t(x\,|\,x_0)
    \;=\;
    \nabla_x\log\pi(x)
    \;+\;
    \frac{\displaystyle\sum_{n=0}^{\infty} e^{-\lambda_n t}\,\phi_n(x_0)\,\phi_n'(x)}
         {\displaystyle\sum_{n=0}^{\infty} e^{-\lambda_n t}\,\phi_n(x_0)\,\phi_n(x)},
\end{equation}
where the $n=0$ mode contributes zero to the numerator because $\phi_0$ is constant. Equation \eqref{eq::app-score-closed} is the formula instantiated as \eqref{eq::cond-score} in Section~\ref{sec::reverse}. The expression has two limits of interest. As $t\to\infty$, every transient term in numerator and denominator decays exponentially at rate $\lambda_n\ge\lambda_1=\kappa$ except for the $n=0$ denominator mode $\phi_0(x_0)\phi_0(x)$, which is constant in $t$. The ratio in \eqref{eq::app-score-closed} therefore tends to zero and the score collapses to the stationary score $\nabla_x\log\pi(x)$ of \eqref{eq::app-score-beta}, consistent with the convergence of $p_t$ to $\pi$. As $t\to 0^+$, the conditional series concentrates around $x=x_0$ and the transient contribution becomes singular at $x=x_0$, recovering the delta initial condition $p_0(\,\cdot\,|\,x_0)=\delta_{x_0}$.

%% file: sections/appendix_proof_moments_matching.tex
\section{Proof of the moment-matching theorem}
\label{app::moments-matching}

This appendix proves Theorem~\ref{thm::moment-matching} of Section~\ref{sec::discretization}. The argument is organized in three steps. Lemma~\ref{lem::moments-continuous} characterizes the first and second moments of the cell average $\bar W_t^{ij}$ obtained by spatially averaging the formal SDE \eqref{eq::spde} over the cell $I_i\times I_j$. Lemma~\ref{lem::moments-renormalized} characterizes the first and second moments of the renormalized scalar SDE $\tilde W_t^{ij}$ defined in \eqref{eq::cell-sde-renorm} with parameters \eqref{eq::renorm-params}. Theorem~\ref{thm::moment-matching} then follows by combining the two lemmas and computing the discrepancy in closed form.

Throughout this appendix we work on a single cell $I_i\times I_j\subset[0,1]^2$ of Lebesgue measure $|C|=1/N^2$, and we adopt the shorthand
\begin{equation}\label{eq::app-shorthand}
\bar m_t \;:=\; \mathbb{E}\bigl[\bar W_t^{ij}\bigr],
\qquad
\tilde m_t \;:=\; \mathbb{E}\bigl[\tilde W_t^{ij}\bigr],
\qquad
\bar v_t \;:=\; \mathrm{Var}\bigl(\bar W_t^{ij}\bigr),
\qquad
\tilde v_t \;:=\; \mathrm{Var}\bigl(\tilde W_t^{ij}\bigr).
\end{equation}
The cell index $(i,j)$ is omitted whenever no ambiguity arises.

\subsection{First and second moment of the spatial average}
\label{app::lemma-continuous}

\begin{lemma}[Moments of the spatial average]
\label{lem::moments-continuous}
Let $\bar W_t^{ij}$ be the cell average defined in \eqref{eq::cell-average}, with deterministic initial condition $W_0$, and set $\bar m_0:=\mathbb{E}[\bar W_0^{ij}] = N^2\!\int_{I_i\times I_j} W_0(x,y)\,dx\,dy$. Let
\begin{equation}\label{eq::beta0-def}
    \beta_0^{ij}
    \;:=\;
    N^2\!\!\int_{I_i\times I_j}\!\!\bigl(W_0(x,y)-\bar m_0\bigr)^2\,dx\,dy
\end{equation}
denote the in-cell variance of the initial graphon. Then for every $t\ge 0$,
\begin{align}
\bar m_t
&\;=\;
\mu + (\bar m_0-\mu)\,e^{-\kappa t}, \label{eq::lem1-mean}
\\[4pt]
\bar v_t
&\;=\;
N^2 \biggl[\,
\frac{\sigma^2 \mu(1-\mu)}{2\kappa+\sigma^2}\bigl(1-e^{-(2\kappa+\sigma^2)t}\bigr)
\nonumber\\
&\qquad\;\;\;
\;+\;
\frac{\sigma^2 (\bar m_0-\mu)(1-2\mu)}{\kappa+\sigma^2}\bigl(e^{-\kappa t}-e^{-(2\kappa+\sigma^2)t}\bigr)
\nonumber\\
&\qquad\;\;\;
\;-\;
\bigl((\bar m_0-\mu)^2+\beta_0^{ij}\bigr)\bigl(e^{-2\kappa t}-e^{-(2\kappa+\sigma^2)t}\bigr)
\,\biggr]. \label{eq::lem1-var}
\end{align}
\end{lemma}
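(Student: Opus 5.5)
The plan is to treat the formal field \eqref{eq::spde} through its one-point and two-point moment equations, which are well defined even though the field itself is not pointwise. The working model is that each spatial point evolves as a scalar Jacobi diffusion driven by white noise that is uncorrelated across distinct points. Then one-point moments satisfy the scalar ODEs of Appendix~\ref{app::jacobi}, and the covariance between distinct points obeys a noise-free ODE.

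\textbf{Step 1: the mean.} I will take expectations in \eqref{eq::spde} at a fixed $(x,y)$. The martingale term drops, so $m_t(x,y)=\mu+(W_0(x,y)-\mu)e^{-\kappa t}$, exactly as in \eqref{eq::app-mean-closed}. Averaging this over the cell with weight $N^2$ and using linearity of expectation gives \eqref{eq::lem1-mean}.

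\textbf{Step 2: the pointwise variance.} At each point, \eqref{eq::app-var-closed} applies with $x_0=W_0(x,y)$. So the variance $v_t(x,y)$ is the closed form with $\bar m_0$ replaced by $W_0(x,y)$.

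\textbf{Step 3: averaging over the cell.} The expression \eqref{eq::app-var-closed} is affine in $x_0-\mu$ apart from the quadratic term $(x_0-\mu)^2$. Averaging over the cell therefore keeps the first two lines with $\bar m_0$ in place of $x_0$. The quadratic term averages to
\[
N^2\!\int_{I_i\times I_j}(W_0-\mu)^2 = (\bar m_0-\mu)^2+\beta_0^{ij},
\]
by the standard bias--variance split, since $N^2\int (W_0-\bar m_0)=0$. This produces the bracket in \eqref{eq::lem1-var}.

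\textbf{Step 4: from pointwise variance to the variance of the cell average.} I expand
\[
\bar v_t = N^4\iint \mathrm{Cov}\bigl(W_t(p),W_t(q)\bigr)\,dp\,dq .
\]
The off-diagonal covariance satisfies $\tfrac{d}{dt}\mathrm{Cov}=-2\kappa\,\mathrm{Cov}$ with zero initial value, because $W_0$ is deterministic and the noises are independent. Hence the off-diagonal part vanishes. The diagonal part must be interpreted through the white-noise normalization: the spatial delta $\delta(p-q)$ integrated against $N^4$ over a cell of area $N^{-2}$ contributes $N^4\cdot N^{-2}\int v_t = N^2\cdot\bigl(N^2\int v_t\bigr)$.

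This yields the overall prefactor $N^2$ multiplying the cell-averaged pointwise variance, which is \eqref{eq::lem1-var}.

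\textbf{Main obstacle.} Step 4 is the hard part, because the formal white-noise diagonal contribution has to be given a precise meaning. I will adopt the convention that $\mathbb{E}[dB_t(p)\,dB_t(q)]=\delta(p-q)\,dt$, and that the "pointwise variance" is the coefficient of this delta in the second moment. Under that convention the $N^2$ factor is forced by dimensional bookkeeping.

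I would state this as the defining interpretation of the formal SPDE, rather than attempt a rigorous construction. This mirrors how Theorem~\ref{thm::moment-matching} uses $\tilde\sigma^2=\sigma^2N^2$.
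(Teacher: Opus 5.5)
Your proposal is correct at the same formal level as the paper's proof, but it is organized differently.

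\textbf{The paper's route.} The paper never computes a pointwise variance. It introduces the integrated second moment $r_t=\mathbb{E}\bigl[N^2\int_{I_i\times I_j}W_t^2\bigr]$ and derives a closed linear ODE for it. It then obtains $\frac{d}{dt}\bar v_t=-2\kappa\bar v_t+\sigma^2N^2(\bar m_t-r_t)$ from It\^o's formula on $(\bar W_t^{ij})^2$, using the white-noise diagonal collapse. Finally it eliminates $r_t$ by showing that $\delta_t:=r_t-\bar m_t^2-\bar v_t/N^2$ satisfies $\frac{d}{dt}\delta_t=-2\kappa\delta_t$ with $\delta_0=\beta_0^{ij}$. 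The result is a self-contained ODE for $\bar v_t$ with an extra forcing $-\sigma^2N^2\beta_0^{ij}e^{-2\kappa t}$, which is solved by Duhamel.

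\textbf{How it relates to yours.} The paper's identity $r_t=\bar m_t^2+\bar v_t/N^2+\beta_0^{ij}e^{-2\kappa t}$ is the cell average of $\mathbb{E}[W_t^2]=m_t^2+v_t$, combined with two facts. The first is your Step~4, $\bar v_t=N^4\int_{I_i\times I_j}v_t$. The second is that the in-cell spread of the pointwise mean, $N^2\int_{I_i\times I_j}(m_t-\bar m_t)^2$, equals $\beta_0^{ij}e^{-2\kappa t}$. So the paper proves indirectly, through the $\delta_t$ equation, the same structural fact you assert directly through the covariance kernel.

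\textbf{What each route buys.} Your route buys economy and transparency. It reuses the scalar closed form \eqref{eq::app-var-closed} with no new Duhamel computation, and it exhibits $\beta_0^{ij}$ as the Jensen gap from averaging the quadratic term $(x_0-\mu)^2$. It also isolates the factor $N^2$ in a single step. Finally, it makes explicit the convention that the finite pointwise Jacobi variance is the coefficient of the delta; the paper uses this only implicitly when it evaluates the pointwise quadratic variation in the $r_t$ equation. The paper's route buys a variance ODE in exactly the form of \eqref{eq::lem2-v-ode-clean}, differing only by the $\beta_0^{ij}$ forcing. That is what turns the proof of Theorem~\ref{thm::moment-matching} into a term-by-term comparison.

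\textbf{A bookkeeping slip in Step~4.} The delta collapse gives $N^4\int_{I_i\times I_j}v_t$ directly. Your intermediate expression $N^4\cdot N^{-2}\int v_t$ is therefore off by a factor $N^2$, unless $\int$ there denotes the cell average. Your final expression $N^2\cdot\bigl(N^2\int v_t\bigr)$ is the correct one.
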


\begin{proof}
We derive the second moment of $\bar W_t^{ij}$ at the level of spatial averages by introducing an auxiliary spatially averaged squared field, which avoids resolving $W_t(x,y)^2$ pointwise.

Taking expectations pointwise in \eqref{eq::spde}, the martingale term vanishes and $\mathbb{E}[W_t(x,y)]$ satisfies
\begin{equation}\label{eq::lem1-pw-mean-ode}
    \frac{d}{dt}\mathbb{E}[W_t(x,y)]
    \;=\;
    \kappa\bigl(\mu - \mathbb{E}[W_t(x,y)]\bigr).
\end{equation}
Integrating \eqref{eq::lem1-pw-mean-ode} over $I_i\times I_j$, normalizing by $|C|$, and exchanging expectation with the spatial integral, the first moment $\bar m_t$ of the cell average satisfies the linear scalar ODE $\frac{d}{dt}\bar m_t = \kappa(\mu - \bar m_t)$ with initial condition $\bar m_0$. Its unique solution is \eqref{eq::lem1-mean}.

We now turn to the second moment. The derivation proceeds in three steps. First, we introduce an auxiliary spatially integrated second moment $r_t$ and shows that it satisfies a closed linear ODE driven by $\bar m_t$. Then, we write an ODE for $\bar v_t$ via the identity $\bar v_t = q_t - \bar m_t^2$, where $q_t := \mathbb{E}[(\bar W_t^{ij})^2]$, and finds that this ODE depends on $r_t$. Lastly, we eliminate $r_t$ through an algebraic identity, yielding a self-contained ODE for $\bar v_t$ that we then solve by Duhamel.

\textbf{ODE for the auxiliary second moment $r_t$.}
We define the auxiliary quantity
\begin{equation}\label{eq::r-def}
    r_t
    \;:=\;
    \mathbb{E}\!\left[\,N^2\!\!\int_{I_i\times I_j}\!\! W_t(x,y)^2\,dx\,dy\,\right].
\end{equation}
The motivation is that, as we will see below, both the drift and the quadratic variation produced by It\^o's formula on $W_t(x,y)^2$ reduce, after spatial integration over the cell, to expressions linear in $\bar m_t$ and $r_t$. This makes $r_t$ satisfy a closed linear ODE driven by $\bar m_t$, which we subsequently couple to the variance through a single term to be eliminated in the last step. Applying It\^o's formula to $W_t(x,y)^2$ inside the spatial integral,
\begin{equation}\label{eq::dY}
    d\!\left(N^2\!\!\int_{I_i\times I_j}\!\!W_t(x,y)^2\,dx\,dy\right)
    \;=\;
    N^2\!\!\int_{I_i\times I_j}\!\!\Bigl[\,\underbrace{2W_t(x,y)\,dW_t(x,y)}_{\text{drift contribution}} \;+\; \underbrace{dW_t(x,y)\,dW_t(x,y)}_{\text{quadratic variation}}\,\Bigr]\,dx\,dy.
\end{equation}
We treat the two underbraced terms separately.

For the drift contribution, multiplying the drift of \eqref{eq::spde} by $2W_t(x,y)$ and taking expectations,
\begin{align}\label{eq::r-drift}
    \mathbb{E}\!\left[N^2\!\!\int_{I_i\times I_j}\!\!2W_t(x,y)\,dW_t(x,y)\,dx\,dy\right]
    &\;=\;
    \mathbb{E}\!\left[N^2\!\!\int_{I_i\times I_j}\!\!2\kappa\,W_t(x,y)\bigl(\mu - W_t(x,y)\bigr)\,dx\,dy\right]\,dt
    \nonumber\\
    &\;=\;
    2\kappa\mu\,\mathbb{E}\!\left[N^2\!\!\int_{I_i\times I_j}\!\!W_t(x,y)\,dx\,dy\right]\,dt
    \nonumber\\
    &\quad
    \;-\;
    2\kappa\,\mathbb{E}\!\left[N^2\!\!\int_{I_i\times I_j}\!\!W_t(x,y)^2\,dx\,dy\right]\,dt
    \nonumber\\
    &\;=\;
    2\kappa\,\bigl(\mu \bar m_t - r_t\bigr)\,dt,
\end{align}
where the It\^o stochastic integral has zero mean, and the last equality recognizes $N^2\,\mathbb{E}\!\int_{I_i\times I_j}\!W_t\,dx\,dy = \bar m_t$ and $N^2\,\mathbb{E}\!\int_{I_i\times I_j}\!W_t^2\,dx\,dy = r_t$ by definition.

For the quadratic variation, recall that $dt^2=0$ and $dt\,dB_t=0$, so only the noise-times-noise term remains. Because $B_t$ is a space-time white noise, its formal It\^o product is $dB_t(x,y)\,dB_t(x',y')=\delta(x-x')\delta(y-y')\,dt$, so the bilinear form pairs the noise term in \eqref{eq::spde} with itself only on the spatial diagonal $(x',y')=(x,y)$. Substituting the diffusion coefficient $\sigma\sqrt{W_t(1-W_t)}$ and integrating one of the deltas against the spatial measure,
\begin{align}\label{eq::r-qv-pre}
    &N^2\!\!\int_{I_i\times I_j}\!\!dW_t(x,y)\,dW_t(x,y)\,dx\,dy \nonumber\\
    &\;=\;
    N^2\!\!\int_{I_i\times I_j}\!\!\int_{I_i\times I_j}\!\!\sigma^2\sqrt{W_t(x,y)\bigl(1-W_t(x,y)\bigr)}\,\sqrt{W_t(x',y')\bigl(1-W_t(x',y')\bigr)} \delta(x-x')\delta(y-y')\,dx\,dy\,dx'\,dy'\,dt
    \nonumber\\
    &\;=\;
    \sigma^2\,N^2\!\!\int_{I_i\times I_j}\!\!W_t(x,y)\bigl(1-W_t(x,y)\bigr)\,dx\,dy\,dt.
\end{align}
Taking expectations and using once more the definitions $N^2\,\mathbb{E}\!\int W_t\,dx\,dy = \bar m_t$ and $N^2\,\mathbb{E}\!\int W_t^2\,dx\,dy = r_t$,
\begin{equation}\label{eq::r-qv}
    \mathbb{E}\!\left[N^2\!\!\int_{I_i\times I_j}\!\!dW_t(x,y)\,dW_t(x,y)\,dx\,dy\right]
    \;=\;
    \sigma^2\,\bigl(\bar m_t - r_t\bigr)\,dt.
\end{equation}
Summing \eqref{eq::r-drift} and \eqref{eq::r-qv}, the auxiliary $r_t$ satisfies the linear ODE
\begin{equation}\label{eq::r-ode}
    \frac{d}{dt}r_t
    \;=\;
    -(2\kappa+\sigma^2)\,r_t
    \;+\;
    (2\kappa\mu+\sigma^2)\,\bar m_t,
    \qquad
    r_0 \;=\; \bar m_0^2 + \beta_0^{ij}.
\end{equation}
The initial condition follows from expanding $W_0(x,y) = \bar m_0 + (W_0(x,y)-\bar m_0)$ inside the spatial integral defining $r_0$,
\begin{align}
    r_0
    &\;=\;
    N^2\!\!\int_{I_i\times I_j}\!\!\bigl[\bar m_0 + (W_0(x,y)-\bar m_0)\bigr]^2 dx\,dy
    \nonumber\\
    &\;=\;
    \bar m_0^2
    \;+\;
    2\bar m_0\,N^2\!\!\int_{I_i\times I_j}\!\!\bigl(W_0(x,y)-\bar m_0\bigr)\,dx\,dy
    \nonumber\\
    &\;+\;
    N^2\!\!\int_{I_i\times I_j}\!\!\bigl(W_0(x,y)-\bar m_0\bigr)^2 dx\,dy
    \;=\;
    \bar m_0^2 + \beta_0^{ij},
\end{align}
where the cross-term vanishes because $N^2\!\int_{I_i\times I_j} W_0\,dx\,dy = \bar m_0$ by definition of $\bar m_0$, and the last term equals $\beta_0^{ij}$ by \eqref{eq::beta0-def}.

\textbf{ODE for the variance $\bar v_t$.}
The variance of the cell average decomposes as $\bar v_t = q_t - \bar m_t^2$ with $q_t := \mathbb{E}[(\bar W_t^{ij})^2]$. We first derive an ODE for $q_t$ and then convert it into an ODE for $\bar v_t$ via this identity. Integrating the SPDE \eqref{eq::spde} over $I_i\times I_j$ and applying the prefactor $N^2$, the cell average satisfies
\begin{equation}\label{eq::dW-bar}
    d\bar W_t^{ij}
    \;=\;
    \kappa\bigl(\mu - \bar W_t^{ij}\bigr)\,dt
    \;+\;
    N^2\!\!\int_{I_i\times I_j}\!\!\sigma\sqrt{W_t(x,y)\bigl(1-W_t(x,y)\bigr)}\,dB_t(x,y)\,dx\,dy.
\end{equation}
Apply It\^o's formula to $(\bar W_t^{ij})^2$,
\begin{equation}\label{eq::dq}
    d(\bar W_t^{ij})^2
    \;=\;
    \underbrace{2\,\bar W_t^{ij}\,d\bar W_t^{ij}}_{\text{drift contribution}}
    \;+\;
    \underbrace{(d\bar W_t^{ij})^2}_{\text{quadratic variation}}.
\end{equation}
For the drift contribution, multiplying \eqref{eq::dW-bar} by $2\bar W_t^{ij}$ and taking expectations,
\begin{align}\label{eq::q-drift}
    \mathbb{E}\bigl[\,2\,\bar W_t^{ij}\,d\bar W_t^{ij}\bigr]
    &\;=\;
    \mathbb{E}\bigl[\,2\,\bar W_t^{ij}\,\kappa(\mu - \bar W_t^{ij})\,\bigr]\,dt
    \nonumber\\
    &\;=\;
    2\kappa\mu\,\mathbb{E}\bigl[\bar W_t^{ij}\bigr]\,dt
    \;-\;
    2\kappa\,\mathbb{E}\bigl[(\bar W_t^{ij})^2\bigr]\,dt
    \nonumber\\
    &\;=\;
    2\kappa\,\bigl(\mu \bar m_t - q_t\bigr)\,dt,
\end{align}
where the It\^o stochastic integral vanishes in expectation, and the last equality uses $\mathbb{E}[\bar W_t^{ij}] = \bar m_t$ together with $\mathbb{E}[(\bar W_t^{ij})^2] = q_t$ by definition.

For the quadratic variation $(d\bar W_t^{ij})^2$, only the noise--noise contribution survives. Reproducing the diagonal collapse of the white-noise covariance as in \eqref{eq::r-qv-pre},
\begin{equation}\label{eq::q-qv-pre}
    (d\bar W_t^{ij})^2
    \;=\;
    \sigma^2\,N^4\!\!\int_{I_i\times I_j}\!\!W_t(x,y)\bigl(1-W_t(x,y)\bigr)\,dx\,dy\,dt.
\end{equation}
Taking expectations and using $N^2\mathbb{E}\!\int W_t\,dx\,dy = \bar m_t$ together with $N^2\mathbb{E}\!\int W_t^2\,dx\,dy = r_t$,
\begin{equation}\label{eq::q-qv}
    \mathbb{E}\bigl[(d\bar W_t^{ij})^2\bigr]
    \;=\;
    \sigma^2\,N^2\,\bigl(\bar m_t - r_t\bigr)\,dt.
\end{equation}
Summing \eqref{eq::q-drift} and \eqref{eq::q-qv} yields the linear ODE for $q_t$,
\begin{equation}\label{eq::q-ode}
    \frac{d}{dt}q_t
    \;=\;
    2\kappa\,\bigl(\mu \bar m_t - q_t\bigr)
    \;+\;
    \sigma^2\,N^2\,\bigl(\bar m_t - r_t\bigr).
\end{equation}

To convert \eqref{eq::q-ode} into an ODE for the variance, we use $\bar v_t = q_t - \bar m_t^2$ and compute $\frac{d}{dt}\bar m_t^2 = 2\bar m_t\,\kappa(\mu - \bar m_t) = 2\kappa\mu \bar m_t - 2\kappa \bar m_t^2$. Subtracting this from \eqref{eq::q-ode}, the cross-term $2\kappa\mu \bar m_t$ cancels and we obtain
\begin{equation}\label{eq::V-ode-raw}
    \frac{d}{dt}\bar v_t
    \;=\;
    -2\kappa\,\bar v_t
    \;+\;
    \sigma^2\,N^2\,\bigl(\bar m_t - r_t\bigr),
    \qquad
    \bar v_0 \;=\; 0,
\end{equation}
where $\bar v_0 = 0$ follows from the deterministic initial condition. Equation \eqref{eq::V-ode-raw} is not yet self-contained because of the explicit dependence on $r_t$, which we eliminate next through an algebraic identity.

\textbf{Integrating the resulting ODE.}
To close \eqref{eq::V-ode-raw}, we express $r_t$ in terms of $\bar m_t$ and $\bar v_t$ alone. Define the discrepancy
\begin{equation}\label{eq::D-def}
    \delta_t
    \;:=\;
    r_t \;-\; \bar m_t^2 \;-\; \bar v_t/N^2.
\end{equation}
Differentiating \eqref{eq::D-def} and substituting \eqref{eq::r-ode}, $\frac{d}{dt}\bar m_t^2 = 2\kappa\mu \bar m_t - 2\kappa \bar m_t^2$, and \eqref{eq::V-ode-raw},
\begin{align}
    \frac{d}{dt}\delta_t
    &\;=\;
    -(2\kappa+\sigma^2)\,r_t + (2\kappa\mu+\sigma^2)\,\bar m_t
    \;-\;
    2\kappa\mu \bar m_t + 2\kappa \bar m_t^2
    \;-\;
    \tfrac{1}{N^2}\bigl[\,-2\kappa\,\bar v_t + \sigma^2 N^2(\bar m_t - r_t)\,\bigr]
    \nonumber\\
    &\;=\;
    -2\kappa\bigl(r_t - \bar m_t^2 - \bar v_t/N^2\bigr)
    \;=\;
    -2\kappa\,\delta_t.
\end{align}
The initial condition is $\delta_0 = r_0 - \bar m_0^2 - 0 = \beta_0^{ij}$ by \eqref{eq::r-ode}, so $\delta_t = \beta_0^{ij}\,e^{-2\kappa t}$. Equivalently,
\begin{equation}\label{eq::r-from-V}
    r_t
    \;=\;
    \bar m_t^2 \;+\; \bar v_t/N^2 \;+\; \beta_0^{ij}\,e^{-2\kappa t}.
\end{equation}

Substitute \eqref{eq::r-from-V} into the right-hand side of \eqref{eq::V-ode-raw}. The cross-term $\sigma^2 N^2 \cdot \bar v_t/N^2 = \sigma^2 \bar v_t$ shifts the damping coefficient from $2\kappa$ to $2\kappa+\sigma^2$, and the resulting ODE is self-contained,
\begin{equation}\label{eq::V-ode-clean}
    \frac{d}{dt}\bar v_t
    \;=\;
    -(2\kappa+\sigma^2)\,\bar v_t
    \;+\;
    \sigma^2\,N^2\,\bar m_t(1-\bar m_t)
    \;-\;
    \sigma^2\,N^2\,\beta_0^{ij}\,e^{-2\kappa t},
    \qquad
    \bar v_0 \;=\; 0.
\end{equation}
Substituting $\bar m_u = \mu+(\bar m_0-\mu)e^{-\kappa u}$ into the nonlinear term $\bar m_u(1-\bar m_u)$ and expanding,
\begin{equation}\label{eq::m-times-1minusm}
    \bar m_u(1-\bar m_u)
    \;=\;
    \mu(1-\mu)
    \;+\;
    (\bar m_0-\mu)(1-2\mu)\,e^{-\kappa u}
    \;-\;
    (\bar m_0-\mu)^2\,e^{-2\kappa u}.
\end{equation}
The right-hand side of \eqref{eq::V-ode-clean} is therefore a sum of three exponential families with rates $\{0,\,\kappa,\,2\kappa\}$, where the coefficient at rate $2\kappa$ collects both the $-(\bar m_0-\mu)^2$ contribution from \eqref{eq::m-times-1minusm} and the $-\beta_0^{ij}$ contribution sourced by the auxiliary identity \eqref{eq::r-from-V}. We now solve \eqref{eq::V-ode-clean} by Duhamel with the integrating factor $e^{(2\kappa+\sigma^2)t}$ and substitute \eqref{eq::m-times-1minusm} into the integrand,
\begin{align}\label{eq::V-duhamel}
    \bar v_t
    &\;=\;
    \int_0^t e^{-(2\kappa+\sigma^2)(t-u)}
    \Bigl[\,\sigma^2 N^2\,\bar m_u(1-\bar m_u) \;-\; \sigma^2 N^2\,\beta_0^{ij}\,e^{-2\kappa u}\,\Bigr]\,du
    \nonumber\\
    &\;=\;
    \sigma^2 N^2 \int_0^t e^{-(2\kappa+\sigma^2)(t-u)}
    \Bigl[\,\mu(1-\mu) \;+\; (\bar m_0-\mu)(1-2\mu)\,e^{-\kappa u}
    \nonumber\\
    &\qquad\qquad\qquad
    \;-\;
    \bigl((\bar m_0-\mu)^2 + \beta_0^{ij}\bigr)\,e^{-2\kappa u}\,\Bigr]\,du.
\end{align}
Computing the three standard integrals,
\begin{align}
    \int_0^t e^{-(2\kappa+\sigma^2)(t-u)}\,du
    &\;=\; \frac{1-e^{-(2\kappa+\sigma^2)t}}{2\kappa+\sigma^2},
    \\[2pt]
    \int_0^t e^{-(2\kappa+\sigma^2)(t-u)}\,e^{-\kappa u}\,du
    &\;=\; \frac{e^{-\kappa t}-e^{-(2\kappa+\sigma^2)t}}{\kappa+\sigma^2},
    \\[2pt]
    \int_0^t e^{-(2\kappa+\sigma^2)(t-u)}\,e^{-2\kappa u}\,du
    &\;=\; \frac{e^{-2\kappa t}-e^{-(2\kappa+\sigma^2)t}}{\sigma^2},
\end{align}
and substituting them into \eqref{eq::V-duhamel} yields
\begin{align}
    \bar v_t
    &\;=\;
    \sigma^2 N^2 \mu(1-\mu)\,\frac{1-e^{-(2\kappa+\sigma^2)t}}{2\kappa+\sigma^2}
    \;+\;
    \sigma^2 N^2 (\bar m_0-\mu)(1-2\mu)\,\frac{e^{-\kappa t}-e^{-(2\kappa+\sigma^2)t}}{\kappa+\sigma^2}
    \nonumber\\
    &\quad
    \;-\;
    \sigma^2 N^2 \bigl((\bar m_0-\mu)^2 + \beta_0^{ij}\bigr)\,\frac{e^{-2\kappa t}-e^{-(2\kappa+\sigma^2)t}}{\sigma^2}.
\end{align}
Noting that the $\sigma^2$ in the third family cancels against the denominator $\sigma^2$, and factoring $N^2$,
\begin{align}
    \bar v_t
    &\;=\;
    N^2 \biggl[\,
    \frac{\sigma^2 \mu(1-\mu)}{2\kappa+\sigma^2}\bigl(1-e^{-(2\kappa+\sigma^2)t}\bigr)
    \nonumber\\
    &\qquad\;\;\;
    \;+\;
    \frac{\sigma^2 (\bar m_0-\mu)(1-2\mu)}{\kappa+\sigma^2}\bigl(e^{-\kappa t}-e^{-(2\kappa+\sigma^2)t}\bigr)
    \nonumber\\
    &\qquad\;\;\;
    \;-\;
    \bigl((\bar m_0-\mu)^2+\beta_0^{ij}\bigr)\bigl(e^{-2\kappa t}-e^{-(2\kappa+\sigma^2)t}\bigr)
    \,\biggr],
\end{align}
which is \eqref{eq::lem1-var}.
\end{proof}

\subsection{First and second moment of the renormalized process}
\label{app::lemma-renormalized}

\begin{lemma}[Moments of the renormalized SDE]
\label{lem::moments-renormalized}
Let $\tilde W_t^{ij}$ solve \eqref{eq::cell-sde-renorm} with renormalized parameters \eqref{eq::renorm-params} and deterministic initial condition $\tilde W_0^{ij} = \bar m_0$. Assume the stability condition $\tilde\kappa>0$, equivalently $|C|>\sigma^2/(2\kappa+\sigma^2)$. Then for every $t\ge 0$,
\begin{align}
\tilde m_t
&\;=\;
\mu + (\bar m_0-\mu)\,e^{-\kappa t}, \label{eq::lem2-mean}
\\[4pt]
\tilde v_t
&\;=\;
N^2 \biggl[\,
\frac{\sigma^2 \mu(1-\mu)}{2\kappa+\sigma^2}\bigl(1-e^{-(2\kappa+\sigma^2)t}\bigr)
\nonumber\\
&\qquad\;\;\;
\;+\;
\frac{\sigma^2 (\bar m_0-\mu)(1-2\mu)}{\kappa+\sigma^2}\bigl(e^{-\kappa t}-e^{-(2\kappa+\sigma^2)t}\bigr)
\nonumber\\
&\qquad\;\;\;
\;-\;
(\bar m_0-\mu)^2\bigl(e^{-2\kappa t}-e^{-(2\kappa+\sigma^2)t}\bigr)
\,\biggr]. \label{eq::lem2-var}
\end{align}
\end{lemma}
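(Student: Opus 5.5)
The plan is to treat $\tilde W_t^{ij}$ as a scalar Jacobi-type diffusion with time-dependent target $\tilde\mu_t$. I will derive closed linear ODEs for its first two moments with It\^o's formula, as in Appendix~\ref{app::jacobi}. The point is that the renormalization \eqref{eq::renorm-params} turns these ODEs into exactly the ones already solved in the proof of Lemma~\ref{lem::moments-continuous}, with $\beta_0^{ij}=0$. Throughout I assume $\tilde\kappa>0$ and that \eqref{eq::cell-sde-renorm} has a solution with finite second moments for which the It\^o integrals are true martingales. This holds, for example, whenever the solution stays in $[0,1]$.

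\textbf{First moment.} Taking expectations in \eqref{eq::cell-sde-renorm}, the martingale term drops and
\[
\frac{d}{dt}\tilde m_t=\tilde\kappa(\tilde\mu_t-\tilde m_t),\qquad \tilde m_0=\bar m_0 .
\]
By \eqref{eq::renorm-params}, $\tilde\kappa(\tilde\mu_t-\bar m_t)=\kappa(\mu-\bar m_t)$, and by Lemma~\ref{lem::moments-continuous} this equals $\frac{d}{dt}\bar m_t$. Hence $\bar m_t$ solves the same linear ODE with the same initial condition. Uniqueness then gives $\tilde m_t=\bar m_t=\mu+(\bar m_0-\mu)e^{-\kappa t}$, which is \eqref{eq::lem2-mean}.

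\textbf{Second moment.} Apply It\^o's formula to $(\tilde W_t^{ij})^2$ and take expectations:
\[
\frac{d}{dt}\mathbb{E}[(\tilde W_t^{ij})^2]=2\tilde\kappa\bigl(\tilde\mu_t\tilde m_t-\mathbb{E}[(\tilde W_t^{ij})^2]\bigr)+\tilde\sigma^2\bigl(\tilde m_t-\mathbb{E}[(\tilde W_t^{ij})^2]\bigr).
\]
Next subtract
\[
\frac{d}{dt}\tilde m_t^2=2\tilde m_t\,\tilde\kappa(\tilde\mu_t-\tilde m_t).
\]
The terms involving $\tilde\mu_t$ cancel, and writing $\mathbb{E}[(\tilde W_t^{ij})^2]=\tilde v_t+\tilde m_t^2$ gives
\[
\frac{d}{dt}\tilde v_t=-(2\tilde\kappa+\tilde\sigma^2)\,\tilde v_t+\tilde\sigma^2\,\tilde m_t(1-\tilde m_t),\qquad \tilde v_0=0 .
\]
The key algebraic check is that
\[
2\tilde\kappa+\tilde\sigma^2=2\kappa-\sigma^2(N^2-1)+\sigma^2N^2=2\kappa+\sigma^2,
\qquad \tilde\sigma^2=\sigma^2N^2 .
\]
Combined with $\tilde m_t=\bar m_t$, this shows the ODE coincides with \eqref{eq::V-ode-clean} with the $\beta_0^{ij}$ forcing term removed. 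I then solve it by Duhamel with integrating factor $e^{(2\kappa+\sigma^2)t}$, expanding $\tilde m_u(1-\tilde m_u)$ as in \eqref{eq::m-times-1minusm}. Using the same three exponential integrals, this yields \eqref{eq::lem2-var}; it is the computation of Lemma~\ref{lem::moments-continuous} verbatim with $\beta_0^{ij}=0$.

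\textbf{Main obstacle.} The algebra is routine. The delicate step is justifying the moment ODEs: the time-varying target $\tilde\mu_t$ need not lie in $[0,1]$, so the usual boundary argument keeping a Jacobi diffusion in $[0,1]$ does not apply automatically. The moment computation only needs two things: a (weak) solution on which $\sqrt{\tilde W(1-\tilde W)}$ is defined, and square-integrability so that the stochastic integrals have zero mean. I would handle this by localization with stopping times and passing to the limit by monotone convergence. I would state the boundedness or well-posedness requirement explicitly as a standing assumption alongside $\tilde\kappa>0$, rather than try to prove it in full generality.
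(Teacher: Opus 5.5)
Your proposal is correct and follows the paper's proof essentially step for step. It closes the mean ODE via $\tilde\kappa(\tilde\mu_t-\bar m_t)=\kappa(\mu-\bar m_t)$ and uniqueness, derives the It\^o variance ODE, collapses $2\tilde\kappa+\tilde\sigma^2=2\kappa+\sigma^2$ using $\tilde\sigma^2=\sigma^2N^2$ and $\tilde m_t=\bar m_t$, and finishes with Duhamel and the same three exponential integrals. Your well-posedness caveat is a legitimate hypothesis that the paper leaves implicit: since $\tilde\mu_t-\mu=-\frac{\kappa-\tilde\kappa}{\tilde\kappa}(\bar m_0-\mu)e^{-\kappa t}$, the target can leave $[0,1]$ when $\tilde\kappa$ is small, so confinement (and hence the zero-mean martingale step) has to be assumed, for instance through the sufficient condition $\tilde\mu_0\in[0,1]$.
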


\begin{proof}
The renormalized SDE \eqref{eq::cell-sde-renorm} is a one-dimensional It\^o diffusion with state-dependent volatility $\tilde\sigma\sqrt{\tilde W_t^{ij}(1-\tilde W_t^{ij})}$ and drift $\tilde\kappa(\tilde\mu_t - \tilde W_t^{ij})$. Its first moment is fixed by the choice of the time-varying target $\tilde\mu_t$, and the variance solves a linear ODE whose coefficients align with those of the spatial-average dynamics \eqref{eq::V-ode-clean} after substituting the renormalization \eqref{eq::renorm-params}.

Taking expectations in \eqref{eq::cell-sde-renorm}, the martingale term vanishes and $\tilde m_t$ solves the linear ODE
\begin{equation}\label{eq::lem2-mean-ode}
    \frac{d}{dt}\tilde m_t
    \;=\;
    \tilde\kappa\bigl(\tilde\mu_t - \tilde m_t\bigr),
    \qquad
    \tilde m_0 \;=\; \bar m_0.
\end{equation}
Substituting the time-varying target $\tilde\mu_t = \bar m_t + (\kappa/\tilde\kappa)(\mu - \bar m_t)$ from \eqref{eq::renorm-params}, the right-hand side of \eqref{eq::lem2-mean-ode} becomes
\begin{equation}\label{eq::lem2-mean-rhs}
    \tilde\kappa\bigl(\tilde\mu_t - \tilde m_t\bigr)
    \;=\;
    \tilde\kappa\bigl(\bar m_t - \tilde m_t\bigr)
    \;+\;
    \kappa\bigl(\mu - \bar m_t\bigr).
\end{equation}
Evaluating \eqref{eq::lem2-mean-rhs} at the candidate trajectory $\tilde m_t = \bar m_t$, the first term vanishes and only $\kappa(\mu - \bar m_t)$ remains. By Lemma~\ref{lem::moments-continuous}, this is exactly $\frac{d}{dt}\bar m_t$, so $\tilde m_t = \bar m_t$ satisfies the same ODE \eqref{eq::lem2-mean-ode} as $\bar m_t$. With matching initial conditions $\tilde m_0 = \bar m_0$, uniqueness of the linear ODE yields $\tilde m_t = \bar m_t$ for all $t\ge 0$, which is \eqref{eq::lem2-mean}. The renormalization preserves the mean trajectory exactly: the time-varying target $\tilde\mu_t$ is precisely the correction needed to compensate for replacing $\kappa$ by the lowered rate $\tilde\kappa$, so that the mean tracks $\bar m_t$ rather than relaxing toward $\mu$ at the wrong rate.

We now derive the variance. Apply It\^o's formula to $(\tilde W_t^{ij})^2$,
\begin{equation}\label{eq::lem2-ito}
    d(\tilde W_t^{ij})^2
    \;=\;
    2\,\tilde W_t^{ij}\,d\tilde W_t^{ij}
    \;+\;
    (d\tilde W_t^{ij})^2.
\end{equation}
Let $\tilde q_t := \mathbb{E}[(\tilde W_t^{ij})^2]$. Multiplying \eqref{eq::cell-sde-renorm} by $2\tilde W_t^{ij}$ and taking expectations, the stochastic part has zero mean and the drift gives
\begin{equation}\label{eq::lem2-q-drift}
    \mathbb{E}\bigl[\,2\,\tilde W_t^{ij}\,d\tilde W_t^{ij}\bigr]
    \;=\;
    2\tilde\kappa\bigl(\tilde\mu_t\,\tilde m_t - \tilde q_t\bigr)\,dt.
\end{equation}
The quadratic variation of \eqref{eq::cell-sde-renorm} is $\tilde\sigma^2\,\tilde W_t^{ij}(1-\tilde W_t^{ij})\,dt$, and taking expectations,
\begin{equation}\label{eq::lem2-q-qv}
    \mathbb{E}\bigl[(d\tilde W_t^{ij})^2\bigr]
    \;=\;
    \tilde\sigma^2\bigl(\tilde m_t - \tilde q_t\bigr)\,dt.
\end{equation}
Summing \eqref{eq::lem2-q-drift} and \eqref{eq::lem2-q-qv},
\begin{equation}\label{eq::lem2-q-ode}
    \frac{d}{dt}\tilde q_t
    \;=\;
    2\tilde\kappa\bigl(\tilde\mu_t\,\tilde m_t - \tilde q_t\bigr)
    \;+\;
    \tilde\sigma^2\bigl(\tilde m_t - \tilde q_t\bigr).
\end{equation}

Subtract $\frac{d}{dt}\tilde m_t^2 = 2\tilde m_t\,\frac{d}{dt}\tilde m_t = 2\tilde\kappa\bigl(\tilde\mu_t\,\tilde m_t - \tilde m_t^2\bigr)$ from \eqref{eq::lem2-q-ode}. The cross-term $2\tilde\kappa\,\tilde\mu_t\,\tilde m_t$ cancels and, writing $\tilde m_t - \tilde q_t = \tilde m_t(1-\tilde m_t) - \tilde v_t$ via $\tilde q_t = \tilde v_t + \tilde m_t^2$, we obtain
\begin{equation}\label{eq::lem2-v-ode-canonical}
    \frac{d}{dt}\tilde v_t
    \;=\;
    -(2\tilde\kappa+\tilde\sigma^2)\,\tilde v_t
    \;+\;
    \tilde\sigma^2\,\tilde m_t(1-\tilde m_t),
    \qquad
    \tilde v_0 \;=\; 0.
\end{equation}
Equation \eqref{eq::lem2-v-ode-canonical} is the canonical Jacobi-diffusion variance ODE in the renormalized parameters, an instance of \eqref{eq::app-var-ode} of Appendix~\ref{app::jacobi-variance}. Its coefficients are not yet in the form of those of \eqref{eq::V-ode-clean}, and three distinct simplifications brought by the renormalization \eqref{eq::renorm-params} are needed to bring them into agreement.

The first simplification concerns the noise amplitude. By \eqref{eq::renorm-params}, $\tilde\sigma^2 = \sigma^2/|C| = \sigma^2 N^2$. The right-hand side of \eqref{eq::lem2-v-ode-canonical} therefore acquires the same per-cell prefactor $\sigma^2 N^2$ that the spatial-average ODE \eqref{eq::V-ode-clean} carries.

The second simplification concerns the damping coefficient. By \eqref{eq::renorm-params}, $\tilde\kappa = \kappa - \sigma^2(N^2-1)/2$, so
\begin{equation}\label{eq::damping-collapse}
    2\tilde\kappa + \tilde\sigma^2
    \;=\;
    \bigl[\,2\kappa - \sigma^2(N^2-1)\,\bigr]
    \;+\;
    \sigma^2 N^2
    \;=\;
    2\kappa + \sigma^2.
\end{equation}
The lowered mean reversion $\tilde\kappa$ is exactly the choice that makes the damping rate $2\tilde\kappa+\tilde\sigma^2$ collapse to $2\kappa+\sigma^2$, the damping rate of the spatial-average variance.

The third simplification concerns the mean. By the first moment alignment shown above, $\tilde m_t = \bar m_t$, so the nonlinear term $\tilde\sigma^2\,\tilde m_t(1-\tilde m_t)$ becomes $\sigma^2 N^2\,\bar m_t(1-\bar m_t)$, the same nonlinear term that drives \eqref{eq::V-ode-clean}.

Combining the three simplifications, the variance ODE reduces to
\begin{equation}\label{eq::lem2-v-ode-clean}
    \frac{d}{dt}\tilde v_t
    \;=\;
    -(2\kappa + \sigma^2)\,\tilde v_t
    \;+\;
    \sigma^2 N^2\,\bar m_t(1-\bar m_t),
    \qquad
    \tilde v_0 \;=\; 0.
\end{equation}
Equation \eqref{eq::lem2-v-ode-clean} is identical to the spatial-average variance ODE \eqref{eq::V-ode-clean} except for the absence of the term $-\sigma^2 N^2\,\beta_0^{ij}\,e^{-2\kappa t}$. That term explicits the in-cell variance of the initial graphon in the spatial-average dynamics. The renormalized process has no such term because its initial condition is the deterministic constant $\bar m_0$ on each cell, with no spatial variation to dissipate.

Substituting $\bar m_u = \mu + (\bar m_0-\mu)e^{-\kappa u}$ into the right-hand side of \eqref{eq::lem2-v-ode-clean} and expanding,
\begin{equation}\label{eq::lem2-m-times-1minusm}
    \bar m_u(1-\bar m_u)
    \;=\;
    \mu(1-\mu)
    \;+\;
    (\bar m_0-\mu)(1-2\mu)\,e^{-\kappa u}
    \;-\;
    (\bar m_0-\mu)^2\,e^{-2\kappa u}.
\end{equation}
The right-hand side is a sum of three exponential families with rates $\{0,\,\kappa,\,2\kappa\}$. Solving \eqref{eq::lem2-v-ode-clean} by Duhamel with the integrating factor $e^{(2\kappa+\sigma^2)t}$,
\begin{equation}\label{eq::lem2-v-duhamel}
    \tilde v_t
    \;=\;
    \int_0^t e^{-(2\kappa+\sigma^2)(t-u)}\,
    \sigma^2 N^2\,\bar m_u(1-\bar m_u)\,du.
\end{equation}
Reusing the three standard integrals computed in the proof of Lemma~\ref{lem::moments-continuous}, substituting into \eqref{eq::lem2-v-duhamel}, factoring $N^2$ as a common prefactor, and noting that the $\sigma^2$ in the third family cancels against the denominator $\sigma^2$,
\begin{align}
    \tilde v_t
    &\;=\;
    N^2 \biggl[\,
    \frac{\sigma^2 \mu(1-\mu)}{2\kappa+\sigma^2}\bigl(1-e^{-(2\kappa+\sigma^2)t}\bigr)
    \nonumber\\
    &\qquad\;\;\;
    \;+\;
    \frac{\sigma^2 (\bar m_0-\mu)(1-2\mu)}{\kappa+\sigma^2}\bigl(e^{-\kappa t}-e^{-(2\kappa+\sigma^2)t}\bigr)
    \nonumber\\
    &\qquad\;\;\;
    \;-\;
    (\bar m_0-\mu)^2\bigl(e^{-2\kappa t}-e^{-(2\kappa+\sigma^2)t}\bigr)
    \,\biggr],
\end{align}
which is \eqref{eq::lem2-var}.
\end{proof}

\subsection{Proof of Theorem~\ref{thm::moment-matching}}
\label{app::theorem-proof}

\begin{proof}[Proof of Theorem~\ref{thm::moment-matching}]
By Lemma~\ref{lem::moments-continuous}, $\bar m_t = \mu+(\bar m_0-\mu)e^{-\kappa t}$. By Lemma~\ref{lem::moments-renormalized}, $\tilde m_t = \mu+(\bar m_0-\mu)e^{-\kappa t}$. The two coincide for every $t\ge 0$, which gives the first claim of the theorem.

Subtract \eqref{eq::lem1-var} from \eqref{eq::lem2-var}. After factoring out the common $N^2$ prefactor, the exponential families at rates $1$ and $e^{-\kappa t}$ cancel exactly because their coefficients agree in the two lemmas. What remains is the family $e^{-2\kappa t}-e^{-(2\kappa+\sigma^2)t}$, with coefficient $-(\bar m_0-\mu)^2$ inside the bracket of $\tilde v_t$ and $-((\bar m_0-\mu)^2+\beta_0^{ij})$ inside the bracket of $\bar v_t$. The squared-deviation contributions $-(\bar m_0-\mu)^2$ cancel as well and the discrepancy reduces to
\begin{equation}\label{eq::thm-discrepancy-final}
    \tilde v_t - \bar v_t
    \;=\;
    \beta_0^{ij}\,N^2\,\bigl(e^{-2\kappa t}-e^{-(2\kappa+\sigma^2)t}\bigr),
\end{equation}
which is the second claim of the theorem.
\end{proof}

\subsection{Proof of Corollary~\ref{cor::forward-graph-ic}}
\label{app::corollary-proof}

\begin{proof}[Proof of Corollary~\ref{cor::forward-graph-ic}]
The substitution \eqref{eq::ic-substitution} sets the initial condition to $\tilde W_0^{ij}=A_0^{ij}$ and propagates the plug-in $\bar W_0^{ij}\to A_0^{ij}$ into the time-varying target $\tilde\mu_t$. The continuous reference $\bar W_t^{ij}$ is the cell average of the underlying graphon, with general in-cell variance $\beta_0^{ij}\ge 0$, so $\bar m_t$ and $\bar v_t$ are given by Lemma~\ref{lem::moments-continuous} with $\bar m_0=\bar W_0^{ij}$.

Conditional on $A_0^{ij}=a$, both the initial condition and the plug-in target $\tilde\mu_t$ are deterministic functions of $a$, and the SDE \eqref{eq::cell-sde-renorm} satisfies the hypotheses of Lemma~\ref{lem::moments-renormalized} with $\bar m_0$ replaced by $a$. Define the conditional moments as functions of the initial condition value,
\begin{equation}\label{eq::corproof-cond-defs}
    m_t(a) \;:=\; \mathbb{E}\bigl[\tilde W_t^{ij}\,|\,A_0^{ij}=a\bigr],
    \qquad
    v_t(a) \;:=\; \mathrm{Var}\bigl(\tilde W_t^{ij}\,|\,A_0^{ij}=a\bigr).
\end{equation}
The lemma then gives $m_t(a) = \mu+(a-\mu)e^{-\kappa t}$, and $v_t(a)$ equal to the right-hand side of \eqref{eq::lem2-var} with $\bar m_0=a$. Note that $v_t(a)$ carries no $\beta_0^{ij}$ contribution because the renormalized SDE has scalar deterministic initial condition $a$, with no spatial variation inside the cell.

By the tower property and $\mathbb{E}[A_0^{ij}]=\bar W_0^{ij}$,
\begin{equation}\label{eq::corproof-marginal-mean}
    \mathbb{E}\bigl[\tilde W_t^{ij}\bigr]
    \;=\;
    \mathbb{E}\bigl[m_t(A_0^{ij})\bigr]
    \;=\;
    \mu+(\bar W_0^{ij}-\mu)e^{-\kappa t}
    \;=\;
    \mathbb{E}\bigl[\bar W_t^{ij}\bigr],
\end{equation}
which is the first claim of the corollary.

For the variance, the law of total variance gives
\begin{equation}\label{eq::corproof-LTV}
    \mathrm{Var}\bigl(\tilde W_t^{ij}\bigr)
    \;=\;
    \mathbb{E}\bigl[v_t(A_0^{ij})\bigr]
    \;+\;
    \mathrm{Var}\bigl(m_t(A_0^{ij})\bigr).
\end{equation}
Since $m_t(A_0^{ij}) = \mu+(A_0^{ij}-\mu)e^{-\kappa t}$ and, using $A_0^{ij}\in\{0,1\}$ and $\mathbb{E}[A_0^{ij}]=\bar W_0^{ij}$, $\mathrm{Var}(A_0^{ij})=\beta_1^{ij}$, the second term equals $\beta_1^{ij}\,e^{-2\kappa t}$. For the first term, averaging $v_t(a)$ over $A_0^{ij}$ uses
\begin{equation}\label{eq::corproof-bernoulli-moments}
    \mathbb{E}\bigl[A_0^{ij}-\mu\bigr] \;=\; \bar W_0^{ij}-\mu,
    \qquad
    \mathbb{E}\bigl[(A_0^{ij}-\mu)^2\bigr] \;=\; (\bar W_0^{ij}-\mu)^2 + \beta_1^{ij}.
\end{equation}
Let $\tilde v_t^{\mathrm{det}}$ denote the variance of the renormalized SDE under the deterministic initial condition $\bar W_0^{ij}$, that is, the right-hand side of \eqref{eq::lem2-var} with $\bar m_0 = \bar W_0^{ij}$. This is exactly the renormalized variance compared to $\bar v_t$ in Theorem~\ref{thm::moment-matching}. Inside \eqref{eq::lem2-var}, the terms with rates $1$ and $e^{-\kappa t}$ are constant and linear in $a$, so they reduce after averaging to the corresponding terms of $\tilde v_t^{\mathrm{det}}$. The term with rate $e^{-2\kappa t}-e^{-(2\kappa+\sigma^2)t}$ carries $-(a-\mu)^2$ and gains an extra $-\beta_1^{ij}$ in expectation by \eqref{eq::corproof-bernoulli-moments}. Therefore
\begin{equation}\label{eq::corproof-cond-var}
    \mathbb{E}\bigl[v_t(A_0^{ij})\bigr]
    \;=\;
    \tilde v_t^{\mathrm{det}}
    \;-\;
    N^2\,\beta_1^{ij}\bigl(e^{-2\kappa t}-e^{-(2\kappa+\sigma^2)t}\bigr).
\end{equation}
Substituting \eqref{eq::corproof-cond-var} and the second-term identity into \eqref{eq::corproof-LTV},
\begin{equation}\label{eq::corproof-discrepancy-pre}
    \mathrm{Var}\bigl(\tilde W_t^{ij}\bigr) - \bar v_t
    \;=\;
    \bigl(\tilde v_t^{\mathrm{det}} - \bar v_t\bigr)
    \;+\;
    \beta_1^{ij}\bigl[\,e^{-2\kappa t} - N^2\bigl(e^{-2\kappa t}-e^{-(2\kappa+\sigma^2)t}\bigr)\,\bigr].
\end{equation}
The first parenthesis is exactly the discrepancy controlled by Theorem~\ref{thm::moment-matching}, namely $\tilde v_t^{\mathrm{det}} - \bar v_t = \beta_0^{ij}N^2\bigl(e^{-2\kappa t}-e^{-(2\kappa+\sigma^2)t}\bigr)$. Using $e^{-2\kappa t}-e^{-(2\kappa+\sigma^2)t} = e^{-2\kappa t}(1-e^{-\sigma^2 t})$ to factor $e^{-2\kappa t}$ from both contributions,
\begin{equation}\label{eq::corproof-discrepancy-final}
    \mathrm{Var}\bigl(\tilde W_t^{ij}\bigr) - \mathrm{Var}\bigl(\bar W_t^{ij}\bigr)
    \;=\;
    \beta_0^{ij}\,N^2\,e^{-2\kappa t}\bigl(1-e^{-\sigma^2 t}\bigr)
    \;+\;
    \beta_1^{ij}\,e^{-2\kappa t}\,\bigl[\,1-N^2\bigl(1-e^{-\sigma^2 t}\bigr)\,\bigr],
\end{equation}
which is the second claim of the corollary.
\end{proof}

%% file: sections/appendix_proof_density_homomorphism.tex
\section{Proof of the Erd\H{o}s--R\'enyi limit theorem}
\label{app::er-limit}

This appendix proves Theorem~\ref{thm::er-limit} of Section~\ref{sec::graph-as-ic}. We open by spelling out the role of the $t\to\infty$ limit, which is what produces edges with the structure that the rest of the argument exploits. As $t\to\infty$, each cell state $\tilde W_t^{ij}$ converges in distribution to $\mathrm{Beta}(\mu)$ and, because the SDEs \eqref{eq::cell-sde-renorm} are decoupled and driven by independent Brownian motions $\{B_t^{ij}\}$, the cells become mutually independent. Both facts are classical for the Jacobi diffusion (Appendix~\ref{app::jacobi}, in particular the asymptotic moments of \ref{app::jacobi-asymptotic} and the stationary law of \ref{app::jacobi-stationary}). Through the Bernoulli sampling \eqref{eq::structured-sampling}, this turns the edge family $\{A_t^{ij}\}_{i,j=1}^N$ into mutually independent Bernoulli variables with marginal expectation
\begin{equation}\label{eq::edge-mean}
    \lim_{t\to\infty}\mathbb E\bigl[A_t^{ij}\bigr]
    \;=\;
    \lim_{t\to\infty}\mathbb E\bigl[\mathbb E[A_t^{ij}\mid \tilde W_t^{ij}]\bigr]
    \;=\;
    \lim_{t\to\infty}\mathbb E\bigl[\tilde W_t^{ij}\bigr]
    \;=\;
    \mu.
\end{equation}
The $N\to\infty$ limit is what then concentrates the homomorphism density of the resulting graph around its mean.
We write $A_t=(A_t^{ij})_{i,j=1}^N$ for the graph produced by the pipeline \eqref{eq::structured-sampling}, and denote by $\mathrm{hom}(H,A_t)$ the number of edge-preserving maps $V(H)\to\{1,\dots,N\}$, so that $t(H,A_t)=\mathrm{hom}(H,A_t)/N^{|V(H)|}$.

The concentration step relies on the following classical inequality, which we state for completeness and reference precisely from the proof.

\begin{lemma}[McDiarmid's inequality, \citealp{mcdiarmid1989method}]\label{lem::mcdiarmid}
    Let $X_1,\dots,X_m$ be independent random variables with $X_k$ taking values in a measurable space $\mathcal X_k$. Let $f:\prod_{k=1}^m\mathcal X_k\to\mathbb R$ be a function and let $c_k\ge 0$ denote the maximal change in $f$ when its $k$-th input is replaced while all the other inputs are held fixed, that is,
    \begin{equation}\label{eq::bounded-diff}
        c_k
        \;:=\;
        \sup_{x_1,\dots,x_m,\,x_k'}
        \bigl|\,f(x_1,\dots,x_k,\dots,x_m) - f(x_1,\dots,x_k',\dots,x_m)\,\bigr|,
        \qquad k=1,\dots,m.
    \end{equation}
    Then, for every $\epsilon>0$,
    \begin{equation}\label{eq::mcdiarmid-bound}
        \mathbb P\bigl(\,\bigl|\,f(X_1,\dots,X_m) - \mathbb E[f(X_1,\dots,X_m)]\,\bigr|\ge\epsilon\,\bigr)
        \;\le\;
        2\exp\!\biggl(-\frac{2\epsilon^2}{\sum_{k=1}^m c_k^2}\biggr).
    \end{equation}
\end{lemma}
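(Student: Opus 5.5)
The plan is the standard martingale route: build a Doob martingale for $f$ and apply a conditional version of Hoeffding's lemma step by step. Write $Z:=f(X_1,\dots,X_m)$ and let $\mathcal F_k:=\sigma(X_1,\dots,X_k)$, with $\mathcal F_0$ trivial. Define
\begin{equation*}
    Z_k\;:=\;\mathbb E[Z\mid\mathcal F_k],\qquad D_k\;:=\;Z_k-Z_{k-1},\qquad k=1,\dots,m,
\end{equation*}
so that $Z_0=\mathbb E[Z]$, $Z_m=Z$, and $Z-\mathbb E[Z]=\sum_{k=1}^m D_k$ is a sum of martingale differences. We may assume $\sum_k c_k^2<\infty$ and, since each $c_k<\infty$, that $f$ is bounded on the relevant range, so all conditional expectations are well defined.

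\textbf{Step 1: conditional range of each increment.} This is where the bounded-difference condition \eqref{eq::bounded-diff} enters. Independence of the $X_k$ lets us write $Z_k=g_k(X_1,\dots,X_k)$ with
\begin{equation*}
    g_k(x_1,\dots,x_k)\;:=\;\mathbb E\bigl[f(x_1,\dots,x_k,X_{k+1},\dots,X_m)\bigr].
\end{equation*}
Set
\begin{equation*}
    L_k:=\inf_{x}g_k(X_1,\dots,X_{k-1},x)-Z_{k-1},\qquad U_k:=\sup_{x}g_k(X_1,\dots,X_{k-1},x)-Z_{k-1}.
\end{equation*}
Both are $\mathcal F_{k-1}$-measurable and satisfy $L_k\le D_k\le U_k$. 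Because the expectation in $g_k$ is over the same independent tail for every $x$, \eqref{eq::bounded-diff} gives $U_k-L_k\le c_k$. This step is the main obstacle. It is the one place where independence is genuinely used, through the product structure of the conditional law. Measurability of $\sup_x$ can be sidestepped by working with arbitrary pairs $x,x'$ rather than with the $\sup$ and $\inf$ directly.

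\textbf{Step 2: exponential moment bound.} Hoeffding's lemma applied conditionally states that a variable with conditional mean zero, lying in an $\mathcal F_{k-1}$-measurable interval of length at most $c_k$, satisfies
\begin{equation*}
    \mathbb E\bigl[e^{\lambda D_k}\mid\mathcal F_{k-1}\bigr]\le e^{\lambda^2c_k^2/8}\qquad\text{for all }\lambda\in\mathbb R.
\end{equation*}
This follows from convexity of $x\mapsto e^{\lambda x}$ on the interval and a second-order Taylor bound of the resulting log-moment function. Iterating the tower property from $k=m$ down to $k=1$ yields
\begin{equation*}
    \mathbb E\bigl[e^{\lambda(Z-\mathbb E Z)}\bigr]\le\exp\Bigl(\tfrac{\lambda^2}{8}\textstyle\sum_k c_k^2\Bigr).
\end{equation*}

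\textbf{Step 3: Chernoff bound.} Markov's inequality gives $\mathbb P(Z-\mathbb E Z\ge\epsilon)\le\exp\bigl(-\lambda\epsilon+\lambda^2\sum_k c_k^2/8\bigr)$. Optimizing at $\lambda=4\epsilon/\sum_k c_k^2$ gives the one-sided bound $\exp\bigl(-2\epsilon^2/\sum_k c_k^2\bigr)$. Applying the same argument to $-f$, which has the same constants $c_k$, and taking a union bound produces the factor $2$ in \eqref{eq::mcdiarmid-bound}.
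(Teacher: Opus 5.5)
Your proof is correct and is the standard method-of-bounded-differences argument (Doob martingale $Z_k=\mathbb E[Z\mid\mathcal F_k]$, the conditional range bound $U_k-L_k\le c_k$ obtained from independence, conditional Hoeffding lemma, Chernoff optimization at $\lambda=4\epsilon/\sum_k c_k^2$). The optimization needs $\sum_k c_k^2>0$; otherwise $f$ is constant and the bound is trivial. The paper does not prove the lemma itself but quotes it from McDiarmid's 1989 survey, whose proof is exactly this martingale route.
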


\begin{proof}[Proof of Theorem~\ref{thm::er-limit}]
We restrict the analysis throughout to injective maps $\phi:V(H)\to\{1,\dots,N\}$. The non-injective contribution to the homomorphism density vanishes in the dense limit $N\to\infty$, by the standard equivalence of injective and non-injective homomorphism densities~\cite{lovasz2012large}.

\textbf{Expected homomorphism density.} Writing $\mathrm{hom}(H,A_t)$ as a sum of edge-indicator products over injective maps $\phi$,
\begin{equation}\label{eq::density-sum-pre}
    t(H,A_t)
    \;=\;
    \frac{1}{N^{|V(H)|}}
    \sum_{\phi}\,
    \prod_{\{u,v\}\in E(H)} A_t^{\phi(u)\phi(v)}.
\end{equation}
For a homomorphism $\phi$, distinct edges $\{u,v\}\neq\{u',v'\}$ of $H$ are sent to distinct unordered pairs $\{\phi(u),\phi(v)\}\neq\{\phi(u'),\phi(v')\}$ of $A_t$, so the family $\{A_t^{\phi(u)\phi(v)}\}_{\{u,v\}\in E(H)}$ entering the inner product of \eqref{eq::density-sum-pre} consists of $|E(H)|$ pairwise distinct edge indicators of $A_t$. By the stationary independence stated in \eqref{eq::edge-mean} and the surrounding paragraph, these indicators are mutually independent in the limit $t\to\infty$, and the expectation factorizes,
\begin{equation}\label{eq::factorization}
    \lim_{t\to\infty}\mathbb{E}\!\left[\,\prod_{\{u,v\}\in E(H)} A_t^{\phi(u)\phi(v)}\,\right]
    \;=\;
    \prod_{\{u,v\}\in E(H)} \lim_{t\to\infty}\mathbb{E}\bigl[A_t^{\phi(u)\phi(v)}\bigr]
    \;=\;
    \mu^{|E(H)|},
\end{equation}
independently of the particular $\phi$. Taking expectations on both sides of \eqref{eq::density-sum-pre} and summing \eqref{eq::factorization} gives
\begin{equation}\label{eq::expectation-final}
    \lim_{t\to\infty}\mathbb E\bigl[t(H,A_t)\bigr]
    \;=\;
    \mu^{|E(H)|}.
\end{equation}

\textbf{Concentration.} We view $t(H,A_t)$ as a function of the independent edge indicators $\{A_t^{kl}\}_{k\le l}$ at stationarity, with the entries $A_t^{lk}=A_t^{kl}$ for $k>l$ determined by symmetry. For a fixed unordered edge $\{k,l\}$ with $k\le l$, the maps $\phi$ contributing a term in \eqref{eq::density-sum-pre} that depends on $A_t^{kl}$ are exactly those for which some edge $\{u,v\}\in E(H)$ satisfies $\{\phi(u),\phi(v)\}=\{k,l\}$. The number of such maps is at most $2\,|E(H)|\,N^{|V(H)|-2}$: there are $|E(H)|$ choices for the edge of $H$, $2$ orientations of $\phi$ on its endpoints, and $N^{|V(H)|-2}$ free assignments for the remaining vertices of $H$. Each affected term in \eqref{eq::density-sum-pre} contributes a value in $\{0,1\}$ and is normalized by $1/N^{|V(H)|}$, so flipping the indicator $A_t^{kl}$ changes $t(H,A_t)$ by at most
\begin{equation}\label{eq::bd-edge}
    c_{kl}
    \;:=\;
    \frac{2\,|E(H)|\,N^{|V(H)|-2}}{N^{|V(H)|}}
    \;=\;
    \frac{2\,|E(H)|}{N^2},
\end{equation}
which is the bounded-differences modulus of $t(H,A_t)$ along the $\{k,l\}$ coordinate, in the sense of \eqref{eq::bounded-diff}. There are $N(N+1)/2\le N^2$ independent edge variables $\{A_t^{kl}\}_{k\le l}$, so
\begin{equation}\label{eq::bd-sum}
    \sum_{k\le l} c_{kl}^2
    \;\le\;
    N^2\cdot\frac{4\,|E(H)|^2}{N^4}
    \;=\;
    \frac{4\,|E(H)|^2}{N^2}.
\end{equation}
Applying Lemma~\ref{lem::mcdiarmid} to $f(\{A_t^{kl}\}_{k\le l}):=t(H,A_t)$ with bounded-differences constants \eqref{eq::bd-edge} and the squared-sum bound \eqref{eq::bd-sum} gives, for every $\epsilon>0$ and at stationarity,
\begin{equation}\label{eq::mcdiarmid-application}
    \lim_{t\to\infty}\mathbb P\bigl(\,\bigl|\,t(H,A_t) - \mathbb E[t(H,A_t)]\,\bigr|\ge\epsilon\,\bigr)
    \;\le\;
    2\exp\!\biggl(-\frac{N^2\,\epsilon^2}{2\,|E(H)|^2}\biggr).
\end{equation}
By \eqref{eq::expectation-final}, $\lim_{t\to\infty}\mathbb E[t(H,A_t)]$ goes to $\mu^{|E(H)|}$, so
\begin{equation}\label{eq::final-conc}
    \lim_{t\to\infty}\mathbb P\bigl(\,\bigl|\,t(H,A_t) - \mu^{|E(H)|}\,\bigr|\ge\epsilon\,\bigr)
    \;\le\;
    2\exp\!\biggl(-\frac{N^2\,\epsilon^2}{2\,|E(H)|^2}\biggr).
\end{equation}
The right-hand side of \eqref{eq::final-conc} tends to zero as $N\to\infty$, which is the convergence in probability of $t(H,A_t)$ to $\mu^{|E(H)|}$ stated in \eqref{eq::er-mean}.
\end{proof}

%% file: sections/appendix_algorithms.tex
\section{Training and sampling algorithms}
\label{app::algorithms}

This appendix collects the practical procedures that turn the constructions of Section~\ref{sec::graphon_to_graph} and Section~\ref{sec::reverse} into runnable algorithms. Algorithm~\ref{alg::training} describes the training loop that minimizes the binary denoising objective \eqref{eq::dsm-loss} over a dataset of graphs, and Algorithm~\ref{alg::sampling} describes the sampling procedure that integrates the reverse-time SDE \eqref{eq::reverse-sde} from the stationary law back to a graph at $t=0$. Both algorithms operate on the Jacobi diffusion \eqref{eq::cell-sde-renorm} with parameters \eqref{eq::renorm-params} that can be generally specified, and both reduce the only intractable component of the dynamics, namely the marginal score, to evaluations of the conditional score \eqref{eq::cond-score} at an estimator of the clean graph produced by the denoiser $\Phi_\theta$.

\subsection{Training}
\label{app::training}

Training instantiates the denoising-by-prediction view of Section~\ref{sec::learning}. Given an observed graph $A_0\in\{0,1\}^{N\times N}$ from the dataset and a diffusion time $t$ sampled uniformly on $[0,T]$, the SDE \eqref{eq::cell-sde-renorm} is integrated forward from the graph initial condition \eqref{eq::ic-substitution} via Euler--Maruyama. The resulting noisy state $\tilde W_t$ is fed to the denoiser $\Phi_\theta$, and the per-edge probabilities $\Phi_\theta(\tilde W_t,t)^{ij}$ are compared to the clean entries $A_0^{ij}$ through the binary cross-entropy of \eqref{eq::dsm-loss}. The procedure is summarized in Algorithm~\ref{alg::training}.

\begin{algorithm}[H]
\caption{Training the denoiser $\Phi_\theta$}
\label{alg::training}
\begin{algorithmic}[1]
\Require dataset $\mathcal{D}$ of graphs on $N$ nodes. SDE parameters $(\tilde\kappa,\tilde\sigma)$ and time-varying target $\tilde\mu_t(\cdot)$. Horizon $T$. Learning rate $\eta$. Batch size $B$.
\Repeat
    \State Sample mini-batch $\{A_0^{(b)}\}_{b=1}^{B}$ from $\mathcal{D}$
    \For{$b=1,\dots,B$}
        \State Sample $t\sim \mathrm{Uniform}[0,T]$
        \State Initialize $\tilde W_0^{ij}\gets A_0^{(b),ij}$ for all $i,j$
        \State Simulate the forward SDE from $\tilde W_0$ up to time $t$ to obtain $\tilde W_t^{(b)}$
        \State Store the triple $\bigl(t, \tilde W_t^{(b)}, A_0^{(b)}\bigr)$
    \EndFor
    \State Evaluate $\hat A_{t,\theta}^{(b)}\gets \Phi_\theta\bigl(\tilde W_t^{(b)}, t\bigr)$ for $b=1,\dots,B$
    \State Compute the empirical denoising loss
    \Statex \hspace{1.0em}$\widehat{\mathcal{L}}(\theta) \gets -\,\dfrac{1}{B}\sum_{b=1}^{B}\sum_{i,j=1}^{N}\Bigl[A_0^{(b),ij}\log \hat A_{t,\theta}^{(b),ij} + (1-A_0^{(b),ij})\log\bigl(1-\hat A_{t,\theta}^{(b),ij}\bigr)\Bigr]$
    \State Update $\theta\gets \theta - \eta\,\nabla_\theta\,\widehat{\mathcal{L}}(\theta)$
\Until{converged}
\State \Return $\theta$
\end{algorithmic}
\end{algorithm}

\subsection{Sampling}
\label{app::sampling}

Sampling instantiates the reverse-time integration of Section~\ref{sec::reverse-sde}.
The state at $t=T$ is initialized from the stationary law $\mathrm{Beta}(\mu)$, with density recorded in \eqref{eq::app-beta-density} of Appendix~\ref{app::jacobi-stationary}, with cells drawn independently.
The reverse-time SDE \eqref{eq::reverse-sde} is then integrated backwards in time via Euler--Maruyama, with the marginal score replaced at every step by the conditional score \eqref{eq::cond-score} evaluated at the denoiser's estimator $\hat A_0 = \Phi_\theta(\tilde W_t, t)$ of the clean graph, as discussed in Section~\ref{sec::reverse} and Section~\ref{sec::learning}.
After reaching $t=0$, the final state $\tilde W_0$ is converted into a binary graph through the per-edge Bernoulli draw of \eqref{eq::structured-sampling}.
The procedure is summarized in Algorithm~\ref{alg::sampling}.

\begin{algorithm}[H]
\caption{Sampling a graph from the trained model}
\label{alg::sampling}
\begin{algorithmic}[1]
\Require trained denoiser $\Phi_\theta$. SDE parameters $(\tilde\kappa,\tilde\sigma)$ and time-varying target $\tilde\mu_t(\cdot)$. Stationary mean $\mu$. Horizon $T$. Number of reverse steps $K$. Number of nodes $N$. Jacobi-mode truncation order $M$.
\State Set $\Delta t\gets T/K$ and $t_K\gets T$
\State Sample $\tilde W^{ij}_{T}\sim \mathrm{Beta}(\mu)$ independently for all $i,j$
\For{$k=K,K-1,\dots,1$}
    \State Set $t\gets k\,\Delta t$
    \State Evaluate $\hat A_0^{ij}\gets \Phi_\theta\bigl(\tilde W_t, t\bigr)^{ij}$ for all $i,j$
    \State Compute the conditional score $s^{ij}\gets \nabla_W\log p_t\bigl(\tilde W_t^{ij}\,\big|\,\hat A_0^{ij}\bigr)$, truncated at $n\le M$
    \State Sample $\Delta\bar B^{ij}\sim \mathcal{N}(0,\Delta t)$ independently for all $i,j$
    \State Update
    \Statex \hspace{1.0em}$\tilde W^{ij}_{t-\Delta t}\gets \tilde W^{ij}_{t} - \Bigl[\,\tilde\kappa\bigl(\tilde\mu_t(\hat A_0^{ij}) - \tilde W^{ij}_{t}\bigr) - \tilde\sigma^2\,\tilde W^{ij}_{t}(1-\tilde W^{ij}_{t})\,s^{ij}\,\Bigr]\Delta t$
    \Statex \hspace{6.5em}$+\;\tilde\sigma\,\sqrt{\tilde W^{ij}_{t}(1-\tilde W^{ij}_{t})}\,\Delta\bar B^{ij}$
    \State Clip $\tilde W^{ij}_{t-\Delta t}$ to $[0,1]$
\EndFor
\State Sample $A_0^{ij}\sim \mathrm{Ber}\bigl(\tilde W_0^{ij}\bigr)$ independently for all $i,j$
\State \Return $A_0$
\end{algorithmic}
\end{algorithm}

On the practical side, the conditional score \eqref{eq::cond-score} is a closed-form function of the noisy state, the diffusion time, and the plug-in graphon entry $\hat A_0^{ij}$.
The series in numerator and denominator of \eqref{eq::cond-score} can be truncated to a finite number of Jacobi modes, since each mode carries the exponential factor $e^{-\lambda_n t}$ with $\lambda_n$ growing quadratically in $n$ as recorded in \eqref{eq::app-jacobi-eigenvalue}, so the truncation error decays rapidly with the truncation order at every fixed $t>0$.
The same SDE parameters $(\tilde\kappa,\tilde\sigma,\tilde\mu_t)$ used at training are used at sampling.
The number of reverse-time steps $K$ controls the discretization error of the reverse simulator, and more advanced stochastic solvers such as Milstein or stochastic Heun~\cite{kloeden1992numerical, karras2022elucidating} can be used in place of Euler--Maruyama to trade per-step compute for reduced discretization error.

%% file: sections/appendix_parameterization.tex
\section{Denoiser parameterization}
\label{app::parameterization}

This appendix describes the neural-network parameterization of the denoiser $\Phi_\theta$ that appears in the training loss \eqref{eq::dsm-loss} and is queried at every reverse-time step in Algorithm~\ref{alg::sampling}. We adopt the same parameterization as DeFoG~\cite{qin2025defog}, namely the graph transformer of DiGress~\cite{vignac2023digress} augmented with Relative Random Walk Probability features~\cite{ma2023grit, siraudin2024cometh}. The role of the network and its training signal are dictated by Section~\ref{sec::learning}. The present appendix only documents the architectural choices that turn $\Phi_\theta$ into a concrete map from a noisy state and a diffusion time to a per-edge probability matrix.

\subsection{Input and output}
\label{app::param-io}

At every call, the network receives the noisy state $\tilde W_t = (\tilde W_t^{ij})_{i,j=1}^{N}$ produced by the forward simulator of \eqref{eq::cell-sde-renorm} and the diffusion time $t\in[0,T]$, and it returns a symmetric matrix $\Phi_\theta(\tilde W_t,t)\in[0,1]^{N\times N}$ whose entry $\Phi_\theta(\tilde W_t,t)^{ij}$ is the predicted Bernoulli probability of an edge between nodes $i$ and $j$. The diagonal is masked since the diffusion does not model self-loops. The same input--output convention is used at training, where the prediction is compared to the clean graph $A_0$ via the binary cross-entropy of \eqref{eq::dsm-loss}, and at sampling, where the prediction supplies the plug-in graphon estimator $\hat A_0^{ij}$ that enters the conditional score \eqref{eq::cond-score} and the time-varying target \eqref{eq::renorm-params}.

The continuous-valued state $\tilde W_t^{ij}\in[0,1]$ replaces the categorical noisy edge of the original DeFoG/DiGress formulation. In our setting the edge variable carries a single soft probability rather than a one-hot vector over discrete edge classes, so the per-edge feature dimension at the input is one and the output head has a single sigmoid logit per cell. Apart from this width adjustment the layer structure is unchanged.

\subsection{Graph transformer}
\label{app::param-transformer}

The backbone is the graph transformer of \citet{vignac2023digress}, which DeFoG inherits without modification. The network maintains three streams of representations along its $L$ layers, namely node features $\mathbf{X}\in\mathbb{R}^{N\times d_X}$, edge features $\mathbf{E}\in\mathbb{R}^{N\times N\times d_E}$, and a graph-level vector $\mathbf{y}\in\mathbb{R}^{d_y}$. Each layer applies a multi-head attention block in which queries, keys, and values are computed from node features and the attention logits are biased by a learned function of the corresponding edge feature, so that pairwise structure modulates the attention pattern explicitly. The edge features are themselves updated by a learned function of the pre-attention pair representation, and the graph-level vector is updated by pooling node and edge features through a permutation-invariant aggregator and a learned MLP, with the result broadcast back into the node and edge updates via FiLM-style affine modulation. The three streams are interleaved across layers and the whole stack is permutation equivariant by construction, since attention, pooling, and the FiLM modulations commute with simultaneous permutations of the node and edge axes.

The diffusion time $t$ is encoded by a sinusoidal embedding followed by an MLP and concatenated to the graph-level vector $\mathbf{y}$ at the input layer. The FiLM modulation then propagates the time signal into every node and edge update at every layer. The output head reads the final edge stream, applies a learned linear map followed by a sigmoid, and symmetrizes the result, yielding the per-edge probability matrix $\Phi_\theta(\tilde W_t,t)$. We refer to \citet{vignac2023digress} for the detailed equations of the attention block, the FiLM modulations, and the pooling operators.

\subsection{Relative random walk probability features}
\label{app::param-rrwp}

Graph transformers, like other graph neural networks, have limited expressive power without explicit positional information. Following DeFoG, we augment node and edge features with Relative Random Walk Probability (RRWP) encodings~\cite{ma2023grit, siraudin2024cometh}, which are a more efficient and more expressive alternative to the spectral and cycle features used in \citet{vignac2023digress}. Given the binarized noisy graph at the current step, denote its adjacency by $\hat A\in\{0,1\}^{N\times N}$, its degree matrix by $D=\mathrm{diag}(\hat A\mathbf{1})$, and the degree-normalized adjacency by $M = D^{-1}\hat A$. The RRWP tensor of order $K$ collects the first $K$ powers of $M$,
\begin{equation}\label{eq::rrwp-tensor}
    \mathrm{RRWP}(\hat A)
    \;:=\;
    \bigl[\,I,\;M,\;M^2,\;\dots,\;M^{K-1}\,\bigr]
    \;\in\;
    \mathbb{R}^{N\times N\times K},
\end{equation}
so that entry $(i,j,k)$ is the probability of reaching node $j$ from node $i$ in exactly $k$ steps of the simple random walk on $\hat A$. The diagonal slice $\bigl(\mathrm{RRWP}(\hat A)^{ii}_k\bigr)_{k=0}^{K-1}$ is a per-node feature recording return probabilities at horizons $0,\dots,K-1$ and is concatenated to the node embedding of $i$. The off-diagonal slices $\bigl(\mathrm{RRWP}(\hat A)^{ij}_k\bigr)_{k=0}^{K-1}$ for $i\neq j$ are per-edge features recording multi-hop reachabilities and are concatenated to the edge embedding of $(i,j)$. The construction is permutation equivariant in the same sense as the rest of the architecture, since powers of the normalized adjacency commute with simultaneous permutations of the node axes.

Since the state $\tilde W_t^{ij}\in[0,1]$ is continuous-valued, we obtain $\hat A$ for the RRWP construction by sampling, namely by drawing $\hat A^{ij}\sim\mathrm{Ber}(\tilde W_t^{ij})$ independently across cells via the structured Bernoulli step \eqref{eq::structured-sampling}. This is a feature of the proposed forward process rather than an architectural workaround: the dynamics produce a graph-valued sample at every time $t$ along the trajectory, so the binary input expected by \eqref{eq::rrwp-tensor} is available natively without thresholding or projection, and the RRWP features are computed on a true graph rather than on a binarized continuous state. The RRWP tensor is recomputed from scratch at every call of $\Phi_\theta$, since the noisy graph changes both during the forward simulator at training and during the reverse integration at sampling. The order $K$ is a hyperparameter that controls how many random-walk horizons enter the features and trades off expressivity against computational cost. The cost of constructing \eqref{eq::rrwp-tensor} is dominated by $K-1$ matrix products of $N\times N$ matrices and is negligible relative to the cost of a transformer forward pass.

%% file: sections/appendix_experiments.tex
\section{Experiments}
\label{app::experiments}

Most experiments were run on a workstation equipped with one AMD EPYC 9634 84-core processor,
168 CPU threads, and two NVIDIA GeForce RTX 4090 GPUs with 24 GB of memory each.
The primary workstation used NVIDIA driver 560.35.03 with CUDA 12.6.
Experiments involving the largest graph sizes were run on NVIDIA A100 GPUs with 80 GB of memory.

\subsection{Moment matching}
\label{app::exp::setup}

The setup consists on stochastic block model (SBM) with two equal-size communities,
\begin{equation}\label{eq::app-sbm}
W_0(x,y) \;=\;
\begin{cases}
p & \text{if both } x,y\in[0,\tfrac12) \text{ or both } x,y\in[\tfrac12,1], \\
q & \text{otherwise,}
\end{cases}
\end{equation}
with $p=0.7$ and $q=0.2$, so we assume stationary mean $\mu=(p+q)/2=0.45$. Even $N$ aligns the cell grid to the community boundary at $x=\tfrac12$, so the in-cell variance $\beta_0^{ij}$ vanishes in every cell.
We report two cell types: an \emph{intra-community} cell with $\bar W_0^{ij}=p=0.7$ at indices $(N/4,N/4)$, and an \emph{inter-community} cell with $\bar W_0^{ij}=q=0.2$ at indices $(N/4,N/2+N/4)$.

We set $\kappa=1$, $\sigma=0.005$ fixed across $N$, horizon $T=4$, and integrate \eqref{eq::cell-sde-renorm} by Euler--Maruyama with $n_\text{steps}=400$ ($\Delta t=0.01$). Moments are estimated from $M=10^{3}$ Monte Carlo trajectories per configuration. The choice $\sigma=0.005$ keeps the renormalized stability $\tilde\kappa>0$.

Each figure shows four panels in a single row, for one cell at a single $N$. From left to right: empirical mean of $\tilde W_t^{ij}$ under the deterministic initial condition $\tilde W_0^{ij}=\bar W_0^{ij}$; empirical variance under the same initial condition; empirical mean under the Bernoulli initial condition $\tilde W_0^{ij}=A_0^{ij}\sim\mathrm{Ber}(\bar W_0^{ij})$; empirical variance under the Bernoulli initial condition. Theoretical curves are overlaid: \emph{Ideal Cont.} is the closed form of Lemma~\ref{lem::moments-continuous}; \emph{Ideal Disc.} is the closed form of Lemma~\ref{lem::moments-renormalized}; \emph{Ideal Bern.} adds the Corollary~\ref{cor::forward-graph-ic} term to \emph{Ideal Disc.}. Since $\beta_0^{ij}=0$ for the SBM at even $N$, \emph{Ideal Cont.} and \emph{Ideal Disc.} coincide.

\textbf{Intra-community cell ($\bar W_0^{ij}=0.7$)}
\label{app::exp::intra}

\begin{figure}[H]
\centering
\includegraphics[width=\linewidth]{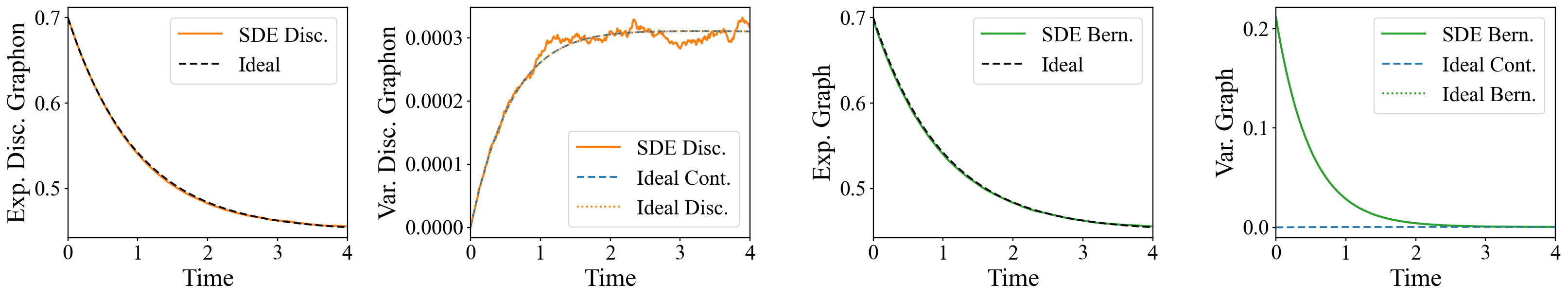}
\caption{SBM, intra-community cell, $N=10$.}
\label{fig::app-sbm-intra-N10}
\end{figure}

\begin{figure}[H]
\centering
\includegraphics[width=\linewidth]{figures/paper_sbm_sigma0p005_intra_N20.png}
\caption{SBM, intra-community cell, $N=20$.}
\label{fig::app-sbm-intra-N20}
\end{figure}

\begin{figure}[H]
\centering
\includegraphics[width=\linewidth]{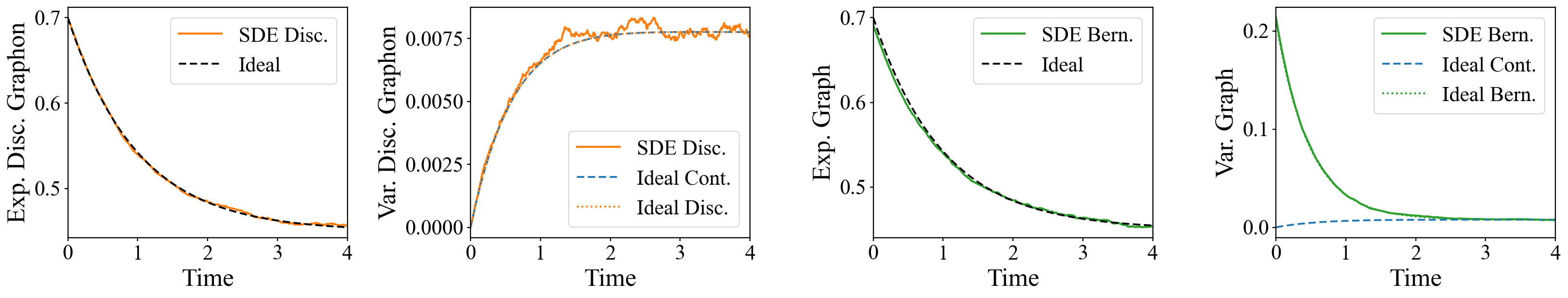}
\caption{SBM, intra-community cell, $N=50$.}
\label{fig::app-sbm-intra-N50}
\end{figure}

\begin{figure}[H]
\centering
\includegraphics[width=\linewidth]{figures/paper_sbm_sigma0p005_intra_N100.png}
\caption{SBM, intra-community cell, $N=100$.}
\label{fig::app-sbm-intra-N100}
\end{figure}

\textbf{Inter-community cell ($\bar W_0^{ij}=0.2$)}
\label{app::exp::inter}

\begin{figure}[H]
\centering
\includegraphics[width=\linewidth]{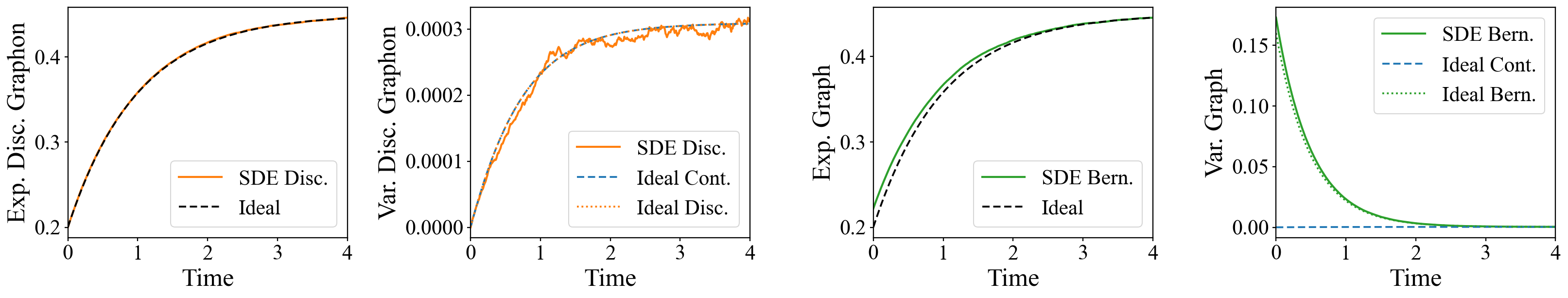}
\caption{SBM, inter-community cell, $N=10$.}
\label{fig::app-sbm-inter-N10}
\end{figure}

\begin{figure}[H]
\centering
\includegraphics[width=\linewidth]{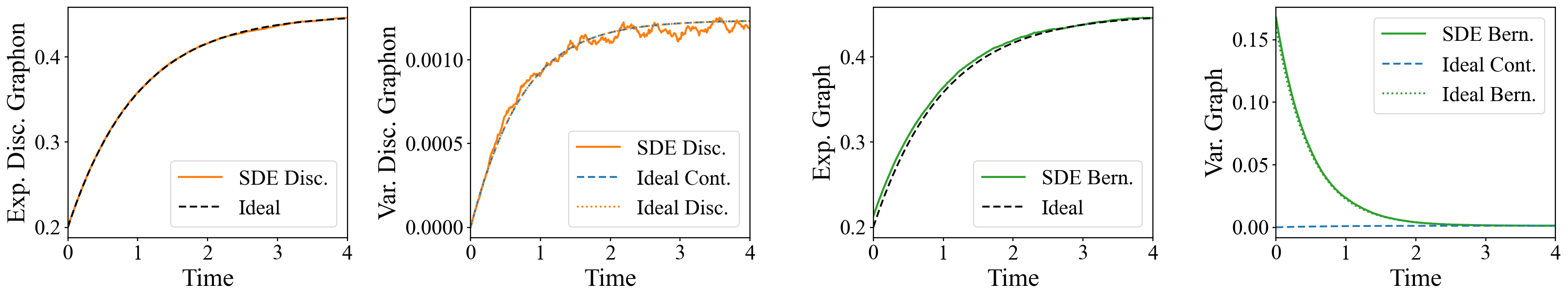}
\caption{SBM, inter-community cell, $N=20$.}
\label{fig::app-sbm-inter-N20}
\end{figure}

\begin{figure}[H]
\centering
\includegraphics[width=\linewidth]{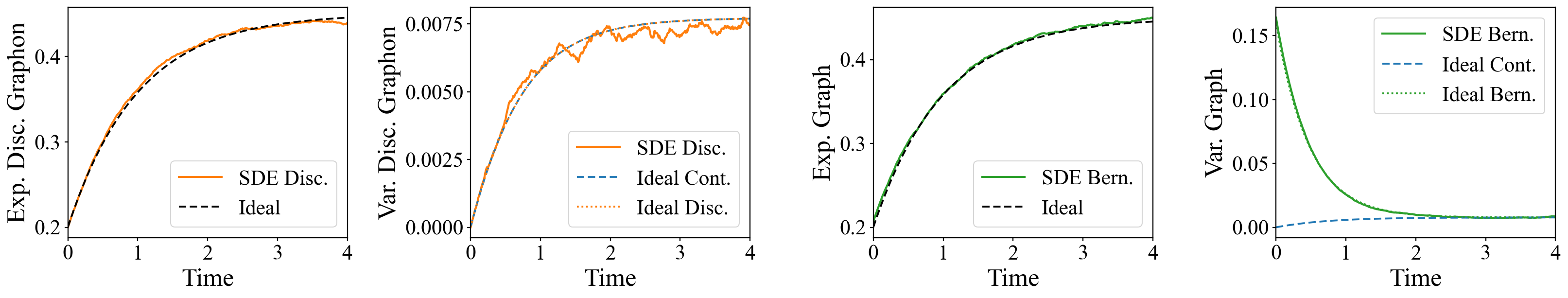}
\caption{SBM, inter-community cell, $N=50$.}
\label{fig::app-sbm-inter-N50}
\end{figure}

\begin{figure}[H]
\centering
\includegraphics[width=\linewidth]{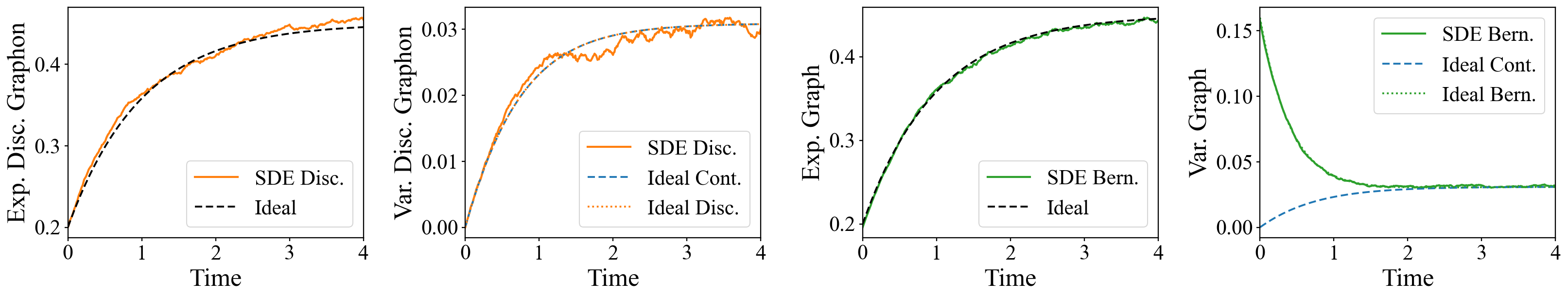}
\caption{SBM, inter-community cell, $N=100$.}
\label{fig::app-sbm-inter-N100}
\end{figure}

\subsection{Scalability experiments}

The scalability experiment evaluates whether DiPhon can generate graphs at node sizes outside the training distribution. To this end, we generate all datasets ourselves using standard graph generation tools such as \texttt{networkx}. For each graph family, we build a training set of 300 graphs, a validation set of 200 graphs, and multiple test sets of 100 graphs each. The test sets are generated across node sizes, increasing from the training regime up to the maximum size in steps of 20 nodes. We consider the following three synthetic graph families:
\begin{enumerate}
    \item \textbf{Stochastic block model (SBM).} The training graphs have node sizes ranging from 40 to 80 and are generated from a two-community SBM with within-community connection probability $0.4$ and across-community connection probability $0.005$. This family is the most naturally aligned with the graphon setting, since it admits a clear piecewise-constant graphon representation. We do not consider graphs smaller than 40 nodes because, at that scale, it becomes less robust to assess whether a graph displays the expected SBM structure.

    \item \textbf{Preferential attachment (PA).} The training graphs have node sizes ranging from 40 to 80. Although PA graphs do not admit a standard graphon representation directly, their line graphs do, which makes them a meaningful intermediate case. As with SBMs, we do not consider graphs smaller than 40 nodes because, for very small graphs, the characteristic structural signature of preferential attachment is less reliably assessed.

    \item \textbf{Tree.} The training graphs have node sizes ranging from 20 to 80. Trees do not admit an associated graphon in the classical sense, so they provide a useful test of performance clearly outside the assumptions under which the theory is derived. In this case, smaller graphs can still be included because it is structurally easy to verify whether a graph is a tree.
\end{enumerate}

The minimum graph size differs across datasets according to the corresponding accuracy metric.
For SBM and PA graphs, both training and evaluation start at $40$ nodes, since correctness is assessed through statistical tests that are less stable at very small graph sizes.
For trees, we start at $20$ nodes, because acyclicity is a deterministic property that can be reliably checked even at smaller sizes.

For DiPhon, we use a constant-parameter Jacobi diffusion. This is the simplest implementation of the proposed forward process: it acts pointwise on edge probabilities, remains bounded in $[0,1]$, and preserves the first-moment matching property of the graphon-level dynamics. We use this parametrization to isolate the effect of the proposed diffusion mechanism without adding extra modeling complexity.

All methods were trained on the corresponding training datasets and carefully tuned before evaluating out-of-distribution size generalization. For each baseline, we performed an exhaustive grid search over the diffusion hyperparameters and selected configurations that achieved strong performance within the node-size regime observed during training. This ensures that failures at larger graph sizes reflect limitations in size extrapolation rather than poor in-distribution fitting. DiPhon was tuned under the same protocol, using only graphs whose sizes belonged to the training distribution.

For each test node size, we generated $100$ graphs and report the corresponding accuracy.
Thus, every point in Figure~\ref{fig::ood_accuracy} is computed from $100$ generated samples.
A generated graph is counted as correct when it satisfies the defining structural property of the corresponding graph family. For trees, we use forest accuracy: a sample is considered correct if all its connected components are trees, so that nearly correct acyclic samples are not penalized only for missing a small number of edges.
For SBMs and PAs we use statistical tests to assess whether the models are adequate, with details that can be found in the code repository.

For DiPhon, the reverse-time dynamics were simulated using one of three candidate numerical solvers. The solver was selected using only the training-size validation regime and then kept fixed for larger test sizes. We used Heun's method for SBMs, where preserving block-level edge statistics is the main requirement. For PA graphs and trees, we used the Milstein integrator, which provided more stable extrapolation of structural constraints.

\textbf{Graphs at different resolutions.} We also provide qualitative visualizations of generated samples for the three datasets, comparing different graph sizes and baseline methods. These visualizations illustrate how the generated structures evolve as the target node size increases, and provide an additional qualitative assessment of size extrapolation. Consistently with the quantitative results, DiPhon produces the most faithful samples across datasets and node-size regimes, preserving the characteristic structure of SBMs, PA graphs, and trees more reliably than the competing methods. Some interesting aspects of the visualizations is that PA generates GruM obtains good performance metrics, but overestimates the number of edges with respect to the original data distribution. 
Also, it can be observed that DiPhon generates better trees for large node sizes, but sometimes it generates small disconnected trees.

\begin{figure}[H]
\centering
\includegraphics[width=0.9\linewidth]{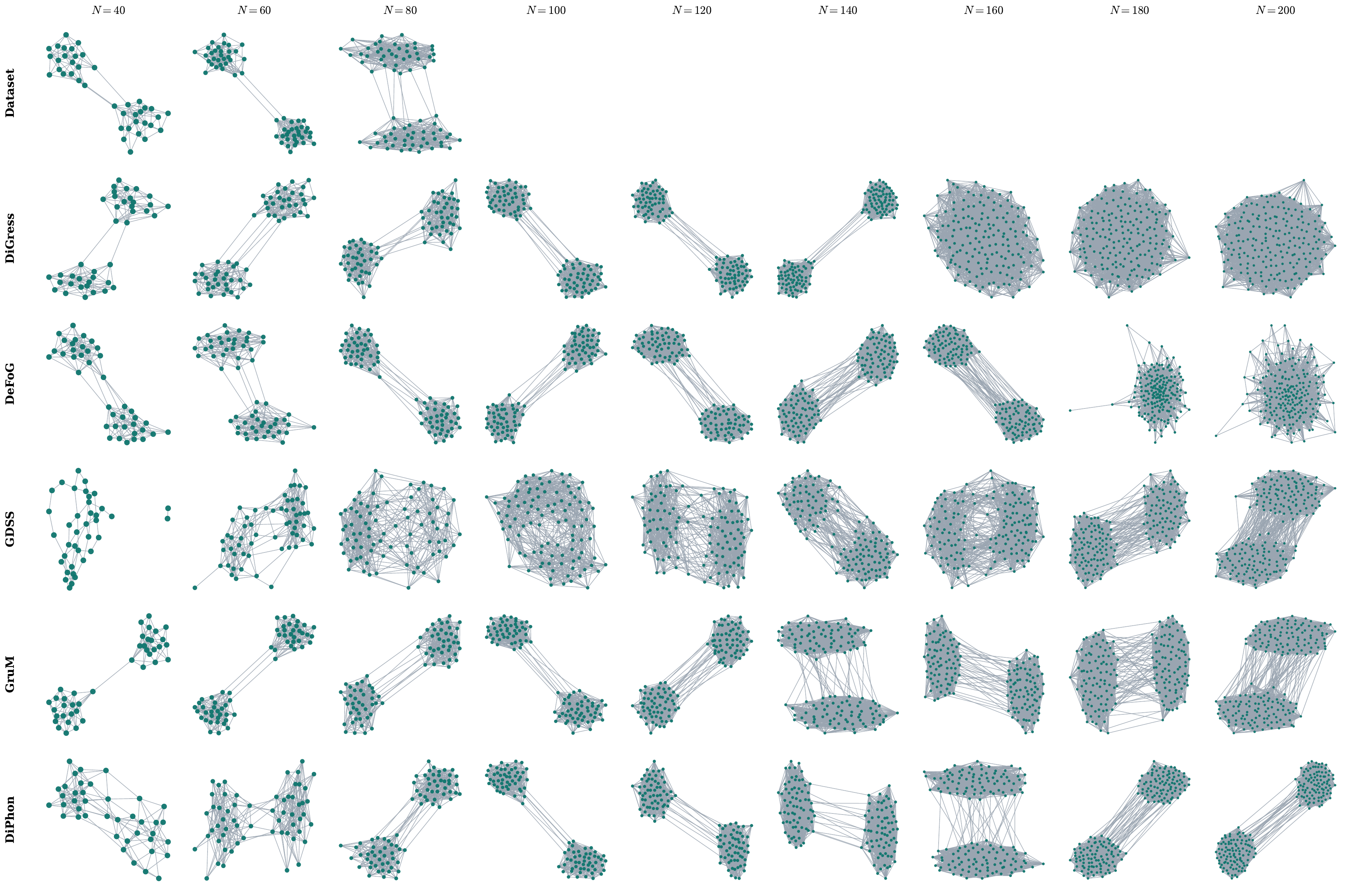}
\caption{Uncurated SBM graph samples.}
\label{fig::app-sbm-examples}
\end{figure}

\begin{figure}[H]
\centering
\includegraphics[width=0.9\linewidth]{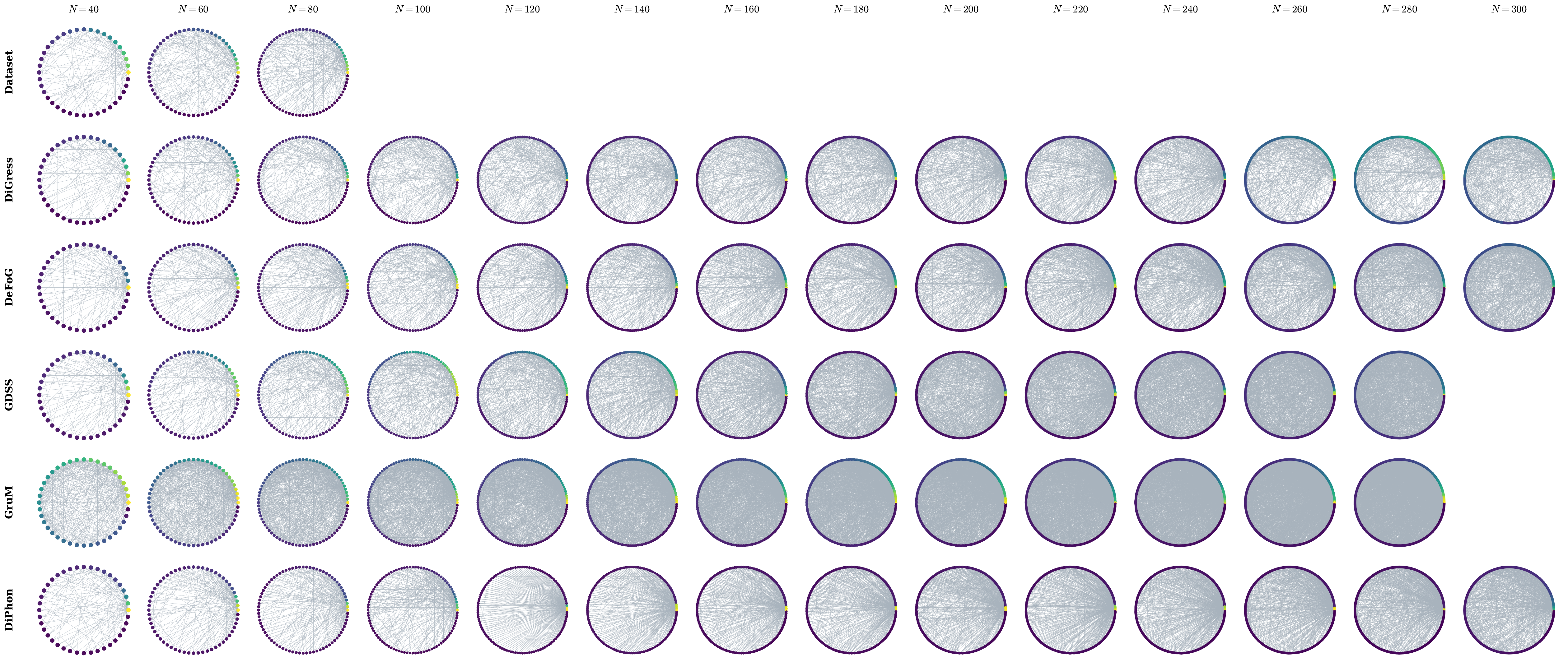}
\caption{Uncurated PA graph samples.}
\label{fig::app-pa-examples}
\end{figure}

\begin{figure}[H]
\centering
\includegraphics[width=0.9\linewidth]{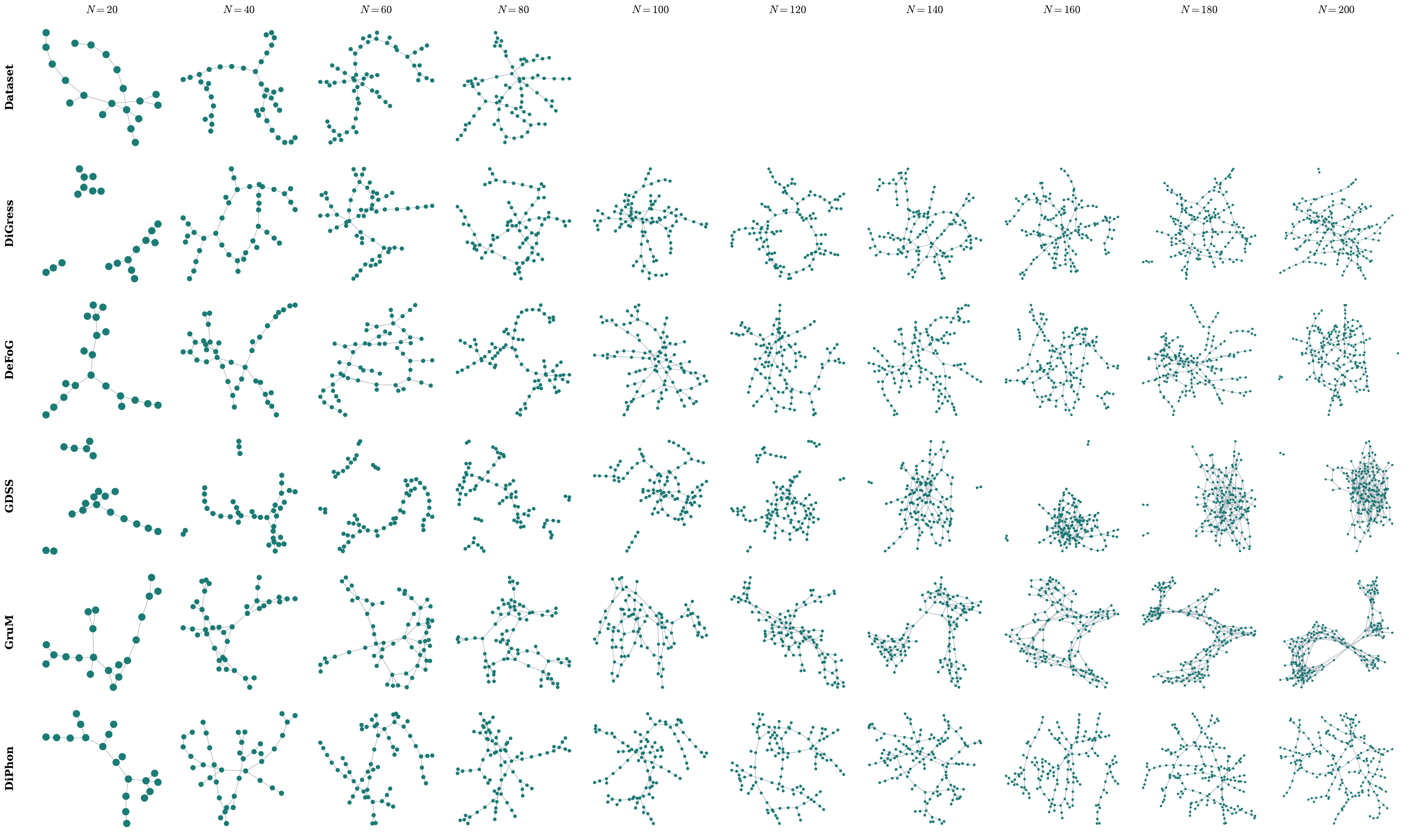}
\caption{Uncurated Tree graph samples.}
\label{fig::app-tree-examples}
\end{figure}

%% file: sections/appendix_discussion.tex
\section{Discussion and Limitations}
\label{app::lims}

This work connects two lines of research that have so far been treated separately: diffusion-based generative models on graphs and the theory of graphons as the limit object of dense graph sequences.
To the best of our knowledge, no previous diffusion model on graphs has placed the forward dynamics on graphon space.
Prior works that use graphons in a generative setting either estimate the graphon itself~\cite{xu2021graphon, xia2023implicit, azizpour2024scalable} or summarize it through motifs that serve as a prior for flow matching on graphs~\cite{wijesinghe2026flowette}.
The choice of the Jacobi process addresses a modelling issue that Gaussian SDEs leave open~\cite{jo2022gdss, jo2024grum}, since it keeps the trajectory inside the edge-probability range by construction.
The (approximate) moment matching ensures that, at the discrete level, we mimic the probability distributions of the continuous process, which is itself scale-free.
The convergence in homomorphism density to the Erd\H{o}s--R\'enyi reference provides a structural analogue of the role played by the Gaussian in score-matching approaches.
Beyond the theoretical analysis, the scalability experiments show that a model trained on small graphs transfers its sampling distribution to substantially larger graphs without retraining.

Regarding the main limitations of the ideas presented in this paper, first of all, the theoretical analysis is developed in the dense regime, since this is the regime in which the graphon is the canonical limit object of a sequence of growing graphs. 
This restriction belongs to the underlying mathematical object and not to the model. 
Empirically, the scalability experiments include sparse graph families as well, and the model transfers stably to large sizes in that regime, which suggests that the empirical scope of the approach exceeds that of its formal justification. 
A second limitation is that the equivalence between the two discretization routes, diffuse-then-discretize and discretize-then-diffuse, is established at the level of the first two moments. 
The second moment is controlled through a closed-form discrepancy that vanishes in the stationary regime, but the full distance between the marginal distributions is not bounded. 
For generative sampling it is plausible that this distance matters more than the first two moments, and characterizing it lies outside the scope of this work.
Finally, the denoising loss requires simulating a forward trajectory per sample at every training step, since the transition of the renormalized Jacobi process does not admit a closed-form density that would allow direct sampling at arbitrary times. 
This introduces a training overhead with respect to discrete and Gaussian-based approaches whose transition is tractable.
We also note that this work focuses on the generation of graphs with binary edges. From a practical standpoint, however, the framework can be naturally extended in several directions: i) node features can be modeled using the same class of Jacobi dynamics; ii) multiple edge categories can be handled through a stick-breaking construction~\cite{avdeyev2023dirichlet}; and iii) weighted graphs can be accommodated by applying bounded continuous diffusions to edge weights. These extensions are outside the scope of the present work and should be understood as modeling possibilities rather than fundamental limitations. However, the theoretical guarantees developed in this paper are specific to the binary-edge graphon setting and do not directly carry over to these variants.

%% file: references.bib
@book{lovasz2012large,
  title={Large networks and graph limits},
  author={Lov{\'a}sz, L{\'a}szl{\'o}},
  volume={60},
  year={2012},
  publisher={American Mathematical Soc.}
}

@article{lovasz2006limits,
  title={Limits of dense graph sequences},
  author={Lov{\'a}sz, L{\'a}szl{\'o} and Szegedy, Bal{\'a}zs},
  journal={Journal of Combinatorial Theory, Series B},
  volume={96},
  number={6},
  pages={933--957},
  year={2006},
  publisher={Elsevier}
}

@article{borgs2008convergent,
  title={Convergent sequences of dense graphs I: Subgraph frequencies, metric properties and testing},
  author={Borgs, Christian and Chayes, Jennifer T and Lov{\'a}sz, L{\'a}szl{\'o} and S{\'o}s, Vera T and Vesztergombi, Katalin},
  journal={Advances in Mathematics},
  volume={219},
  number={6},
  pages={1801--1851},
  year={2008},
  publisher={Elsevier}
}

@article{borgs2012convergent,
  title={Convergent sequences of dense graphs II. Multiway cuts and statistical physics},
  author={Borgs, Christian and Chayes, Jennifer T and Lov{\'a}sz, L{\'a}szl{\'o} and S{\'o}s, Vera T and Vesztergombi, Katalin},
  journal={Annals of Mathematics},
  pages={151--219},
  year={2012},
  publisher={JSTOR}
}

@article{holland1983sbm,
  title={Stochastic blockmodels: First steps},
  author={Holland, Paul W and Laskey, Kathryn Blackmond and Leinhardt, Samuel},
  journal={Social networks},
  volume={5},
  number={2},
  pages={109--137},
  year={1983},
  publisher={Elsevier}
}

@inproceedings{airoldi2013stochastic,
  title={Stochastic blockmodel approximation of a graphon: Theory and consistent estimation},
  author={Airoldi, Edo M and Costa, Thiago B and Chan, Stanley H},
  booktitle={Advances in Neural Information Processing Systems},
  volume={26},
  year={2013}
}

@inproceedings{ruiz2020graphon,
  title={Graphon neural networks and the transferability of graph neural networks},
  author={Ruiz, Luana and Chamon, Luiz and Ribeiro, Alejandro},
  booktitle={Advances in Neural Information Processing Systems},
  volume={33},
  pages={1702--1712},
  year={2020}
}

@article{ruiz2021graphonsp,
  title={Graphon signal processing},
  author={Ruiz, Luana and Chamon, Luiz FO and Ribeiro, Alejandro},
  journal={IEEE Transactions on Signal Processing},
  volume={69},
  pages={4961--4976},
  year={2021},
  publisher={IEEE}
}

@article{levie2021transferability,
  title={Transferability of spectral graph convolutional neural networks},
  author={Levie, Ron and Huang, Wei and Bucci, Lorenzo and Bronstein, Michael and Kutyniok, Gitta},
  journal={Journal of Machine Learning Research},
  volume={22},
  number={272},
  pages={1--59},
  year={2021}
}

@article{maskey2023transferability,
  title={Transferability of graph neural networks: an extended graphon approach},
  author={Maskey, Sohir and Levie, Ron and Kutyniok, Gitta},
  journal={Applied and Computational Harmonic Analysis},
  volume={63},
  pages={48--83},
  year={2023},
  publisher={Elsevier}
}

@article{xu2021graphon,
  title={Learning graphon autoencoders for generative graph modeling},
  author={Xu, Hongteng and Zhao, Peilin and Huang, Junzhou and Luo, Dixin},
  journal={arXiv preprint arXiv:2105.14244},
  year={2021}
}

@inproceedings{xia2023implicit,
  title={Implicit graphon neural representation},
  author={Xia, Xinyue and Mishne, Gal and Wang, Yusu},
  booktitle={International Conference on Artificial Intelligence and Statistics},
  pages={10619--10634},
  year={2023}}

@inproceedings{azizpour2024scalable,
  title={Scalable Implicit Graphon Learning},
  author={Azizpour, Ali and Zilberstein, Nicolas and Segarra, Santiago},
  booktitle={International Conference on Artificial Intelligence and Statistics},
  pages={3952--3960},
  year={2025}
}

@article{wijesinghe2026flowette,
  title={Flowette: Flow Matching with Graphette Priors for Graph Generation},
  author={Wijesinghe, Asiri and Kandanaarachchi, Sevvandi and Steinberg, Daniel M and Ong, Cheng Soon},
  journal={arXiv preprint arXiv:2602.23566},
  year={2026}
}

@inproceedings{vignac2023digress,
  title={DiGress: Discrete Denoising diffusion for graph generation},
  author={Vignac, Cl{\'e}ment and Krawczuk, Igor and Siraudin, Antoine and Wang, Bohan and Cevher, Volkan and Frossard, Pascal},
  booktitle={International Conference on Learning Representations},
  year={2023}
}

@inproceedings{qin2025defog,
  title={DeFoG: Discrete Flow Matching for Graph Generation},
  author={Qin, Yiming and Madeira, Manuel and Thanou, Dorina and Frossard, Pascal},
  booktitle={International Conference on Machine Learning},
  pages={50269--50326},
  year={2025}
}

@inproceedings{ma2023grit,
  title={Graph inductive biases in transformers without message passing},
  author={Ma, Liheng and Lin, Chen and Lim, Derek and Romero-Soriano, Adriana and Dokania, Puneet K and Coates, Mark and Torr, Philip and Lim, Ser-Nam},
  booktitle={International Conference on Machine Learning},
  pages={23321--23337},
  year={2023}}

@article{siraudin2024cometh,
  title={Cometh: A continuous-time discrete-state graph diffusion model},
  author={Siraudin, Antoine and Malliaros, Fragkiskos and Morris, Christopher},
  journal={Transactions of Machine Learning Research},
  year={2025}
}

@inproceedings{haefeli2022discrete,
  title={Diffusion Models for Graphs Benefit From Discrete State Spaces},
  author={Haefeli, Kilian Konstantin and Martinkus, Karolis and Perraudin, Nathana{\"e}l and Wattenhofer, Roger},
  booktitle={The First Learning on Graphs Conference},
  year={2022}
}

@inproceedings{austin2021d3pm,
  title={Structured denoising diffusion models in discrete state-spaces},
  author={Austin, Jacob and Johnson, Daniel D and Ho, Jonathan and Tarlow, Daniel and Van Den Berg, Rianne},
  booktitle={Advances in neural information processing systems},
  volume={34},
  pages={17981--17993},
  year={2021}
}

@inproceedings{jo2022gdss,
  title={Score-based generative modeling of graphs via the system of stochastic differential equations},
  author={Jo, Jaehyeong and Lee, Seul and Hwang, Sung Ju},
  booktitle={International conference on machine learning},
  pages={10362--10383},
  year={2022}}

@inproceedings{niu2020edpgnn,
  title={Permutation invariant graph generation via score-based generative modeling},
  author={Niu, Chenhao and Song, Yang and Song, Jiaming and Zhao, Shengjia and Grover, Aditya and Ermon, Stefano},
  booktitle={International conference on artificial intelligence and statistics},
  pages={4474--4484},
  year={2020}}

@inproceedings{jo2024grum,
  title={Graph Generation with Diffusion Mixture},
  author={Jo, Jaehyeong and Kim, Dongki and Hwang, Sung Ju},
  booktitle={International Conference on Machine Learning},
  pages={22371--22405},
  year={2024}
}

@inproceedings{avdeyev2023dirichlet,
  title={Dirichlet diffusion score model for biological sequence generation},
  author={Avdeyev, Pavel and Shi, Chenlai and Tan, Yuhao and Dudnyk, Kseniia and Zhou, Jian},
  booktitle={International Conference on Machine Learning},
  pages={1276--1301},
  year={2023}}

@inproceedings{stark2024dirichlet,
  title={Dirichlet Flow Matching with Applications to DNA Sequence Design},
  author={Stark, Hannes and Jing, Bowen and Wang, Chenyu and Corso, Gabriele and Berger, Bonnie and Barzilay, Regina and Jaakkola, Tommi},
  booktitle={International Conference on Machine Learning},
  pages={46495--46513},
  year={2024}
}

@inproceedings{eijkelboom2024catflow,
  title={Variational flow matching for graph generation},
  author={Eijkelboom, Floor and Bartosh, Grigory and Naesseth, Christian A and Welling, Max and van de Meent, Jan-Willem},
  booktitle={Advances in Neural Information Processing Systems},
  volume={37},
  pages={11735--11764},
  year={2024}
}

@article{fishman2023constrained,
  title={Diffusion models for constrained domains},
  author={Fishman, Nic and Klarner, Leo and De Bortoli, Valentin and Mathieu, Emile and Hutchinson, Michael},
  journal={arXiv preprint arXiv:2304.05364},
  year={2023}
}

@inproceedings{debortoli2022riemannian,
  title={Riemannian score-based generative modelling},
  author={De Bortoli, Valentin and Mathieu, Emile and Hutchinson, Michael and Thornton, James and Teh, Yee Whye and Doucet, Arnaud},
  booktitle={Advances in neural information processing systems},
  volume={35},
  pages={2406--2422},
  year={2022}
}

@inproceedings{yim2023se3,
  title={SE (3) diffusion model with application to protein backbone generation},
  author={Yim, Jason and Trippe, Brian L and De Bortoli, Valentin and Mathieu, Emile and Doucet, Arnaud and Barzilay, Regina and Jaakkola, Tommi},
  booktitle={International Conference on Machine Learning},
  pages={40001--40039},
  year={2023}
}

@inproceedings{lou2024sedd,
  title={Discrete Diffusion Modeling by Estimating the Ratios of the Data Distribution},
  author={Lou, Aaron and Meng, Chenlin and Ermon, Stefano},
  booktitle={International Conference on Machine Learning},
  pages={32819--32848},
  year={2024}
}

@article{anderson1982reverse,
  title={Reverse-time diffusion equation models},
  author={Anderson, Brian},
  journal={Stochastic Processes and their Applications},
  volume={12},
  number={3},
  pages={313--326},
  year={1982},
  publisher={Elsevier}
}

@inproceedings{sohldickstein2015deep,
  title={Deep unsupervised learning using nonequilibrium thermodynamics},
  author={Sohl-Dickstein, Jascha and Weiss, Eric and Maheswaranathan, Niru and Ganguli, Surya},
  booktitle={International conference on machine learning},
  pages={2256--2265},
  year={2015}}

@inproceedings{ho2020ddpm,
  title={Denoising diffusion probabilistic models},
  author={Ho, Jonathan and Jain, Ajay and Abbeel, Pieter},
  booktitle={Advances in neural information processing systems},
  volume={33},
  pages={6840--6851},
  year={2020}
}

@inproceedings{song2019generative,
  title={Generative modeling by estimating gradients of the data distribution},
  author={Song, Yang and Ermon, Stefano},
  booktitle={Advances in neural information processing systems},
  volume={32},
  year={2019}
}

@inproceedings{song2021scoresde,
  title={Score-Based Generative Modeling through Stochastic Differential Equations},
  author={Song, Yang and Sohl-Dickstein, Jascha and Kingma, Diederik P and Kumar, Abhishek and Ermon, Stefano and Poole, Ben},
  booktitle={International Conference on Learning Representations},
  year={2021}
}

@article{hyvarinen2005score,
  title={Estimation of non-normalized statistical models by score matching.},
  author={Hyv{\"a}rinen, Aapo and Dayan, Peter},
  journal={Journal of Machine Learning Research},
  volume={6},
  number={4},
  year={2005}
}

@article{vincent2011denoising,
  title={A connection between score matching and denoising autoencoders},
  author={Vincent, Pascal},
  journal={Neural computation},
  volume={23},
  number={7},
  pages={1661--1674},
  year={2011},
  publisher={MIT Press}
}

@book{karlin1981second,
  title={A second course in stochastic processes},
  author={Karlin, Samuel and Taylor, Howard E},
  year={1981},
  publisher={Elsevier}
}

@article{wright1931evolution,
  title={Evolution in Mendelian populations},
  author={Wright, Sewall},
  journal={Genetics},
  volume={16},
  number={2},
  pages={97},
  year={1931}
}

@book{etheridge2011mathematical,
  title={Some Mathematical Models from Population Genetics: {\'E}cole D'{\'E}t{\'e} de Probabilit{\'e}s de Saint-Flour XXXIX-2009},
  author={Etheridge, Alison},
  volume={2012},
  year={2011},
  publisher={Springer Science \& Business Media}
}

@article{forman2008pearson,
  title={The Pearson diffusions: A class of statistically tractable diffusion processes},
  author={Forman, Julie Lyng and S{\o}rensen, Michael},
  journal={Scandinavian Journal of Statistics},
  volume={35},
  number={3},
  pages={438--465},
  year={2008},
  publisher={Wiley Online Library}
}

@article{demni2009large,
  title={Large deviations for statistics of the Jacobi process},
  author={Demni, Nizar and Zani, Marguerite},
  journal={Stochastic Processes and their Applications},
  volume={119},
  number={2},
  pages={518--533},
  year={2009},
  publisher={Elsevier}
}

@article{bibby2005diffusion,
  title={Diffusion-type models with given marginal distribution and autocorrelation function},
  author={Bibby, Bo Martin and Skovgaard, Ib Michael and S{\o}rensen, Michael},
  journal={Bernoulli},
  volume={11},
  number={2},
  pages={191--220},
  year={2005},
  publisher={Bernoulli Society for Mathematical Statistics and Probability}
}

@article{gourieroux2006multivariate,
  title={Multivariate Jacobi process with application to smooth transitions},
  author={Gourieroux, Christian and Jasiak, Joann},
  journal={Journal of econometrics},
  volume={131},
  number={1-2},
  pages={475--505},
  year={2006},
  publisher={Elsevier}
}

@book{szego1939orthogonal,
  title={Orthogonal polynomials},
  author={Szeg, Gabor},
  volume={23},
  year={1939},
  publisher={American Mathematical Soc.}
}

@book{bakry2014analysis,
  title={Analysis and geometry of Markov diffusion operators},
  author={Bakry, Dominique and Gentil, Ivan and Ledoux, Michel and others},
  volume={103},
  year={2014},
  publisher={Springer}
}

@inproceedings{vignac2023midi,
  title={Midi: Mixed graph and 3d denoising diffusion for molecule generation},
  author={Vignac, Clement and Osman, Nagham and Toni, Laura and Frossard, Pascal},
  booktitle={Joint European Conference on Machine Learning and Knowledge Discovery in Databases},
  pages={560--576},
  year={2023}}

@article{kloeden1992numerical,
  title={The numerical solution of stochastic differential equations},
  author={Kloeden, Peter E and Pearson, RA},
  journal={The ANZIAM Journal},
  volume={20},
  number={1},
  pages={8--12},
  year={1977},
  publisher={Cambridge University Press}
}

@inproceedings{karras2022elucidating,
  title={Elucidating the design space of diffusion-based generative models},
  author={Karras, Tero and Aittala, Miika and Aila, Timo and Laine, Samuli},
  booktitle={Advances in neural information processing systems},
  volume={35},
  pages={26565--26577},
  year={2022}
}

@article{mcdiarmid1989method,
  title={On the method of bounded differences},
  author={McDiarmid, Colin and others},
  journal={Surveys in combinatorics},
  volume={141},
  number={1},
  pages={148--188},
  year={1989},
  publisher={Norwich}
}

@article{kandanaarachchi2024graphons,
  title={Graphons of line graphs},
  author={Kandanaarachchi, Sevvandi and Ong, Cheng Soon},
  journal={arXiv preprint arXiv:2409.01656},
  year={2024}
}

@inproceedings{martinkus2022spectre,
  title={Spectre: Spectral conditioning helps to overcome the expressivity limits of one-shot graph generators},
  author={Martinkus, Karolis and Loukas, Andreas and Perraudin, Nathana{\"e}l and Wattenhofer, Roger},
  booktitle={International Conference on Machine Learning},
  pages={15159--15179},
  year={2022}}

@inproceedings{davis2024fisher,
  title={Fisher flow matching for generative modeling over discrete data},
  author={Davis, Oscar and Kessler, Samuel and Petrache, Mircea and Ceylan, {\.I}smail {\.I} and Bronstein, Michael and Bose, Avishek J},
  booktitle={Advances in Neural Information Processing Systems},
  volume={37},
  pages={139054--139084},
  year={2024}
}

@article{li2023graphpnas,
  title={GraphPNAS: Learning probabilistic graph generators for neural architecture search},
  author={Li, Muchen and Liu, Jeffrey Yunfan and Sigal, Leonid and Liao, Renjie},
  journal={Transactions on Machine Learning Research},
  year={2023}
}

@inproceedings{uslu2026graph,
  author={Uslu, Yiğit Berkay and Hadou, Samar and Rozada, Sergio and Bidokhti, Shirin Saeedi and Ribeiro, Alejandro},
  booktitle={IEEE International Conference on Acoustics, Speech and Signal Processing}, 
  title={Graph Signal Generative Diffusion Models}, 
  year={2026},
  volume={},
  number={},
  pages={626-630}}

@inproceedings{rozada2026grph,
  author={Rozada, Sergio and B, Vimal K and Cavallo, Andrea and Marques, Antonio G. and Jamali-Rad, Hadi and Isufi, Elvin},
  booktitle={IEEE International Conference on Acoustics, Speech and Signal Processing}, 
  title={Graph-Aware Diffusion for Signal Generation}, 
  year={2026},
  volume={},
  number={},
  pages={461-465}}

@inproceedings{wen2023diffstg,
  title={Diffstg: Probabilistic spatio-temporal graph forecasting with denoising diffusion models},
  author={Wen, Haomin and Lin, Youfang and Xia, Yutong and Wan, Huaiyu and Wen, Qingsong and Zimmermann, Roger and Liang, Yuxuan},
  booktitle={ACM international conference on advances in geographic information systems},
  pages={1--12},
  year={2023}
}
